\newtheorem{theorem}{Theorem}
\newtheorem{lemma}[theorem]{Lemma}
\newcommand{\ignore}[1]{}
\newcommand{\wlo}{w.\,l.\,o.\,g.\xspace}
\newcommand{\ie}{i.\,e.\xspace}
\newcommand{\eg}{e.\,g.\xspace}
\newcommand{\ONEMAX}{\textsc{OneMax}\xspace}
\newcommand{\OneMax}{\ONEMAX}
\newcommand{\onemax}{\ONEMAX}
\newcommand{\cga}{cGA\xspace}
\newcommand{\indic}[1]{\mathds{1}\{#1\}}
\DeclareMathOperator{\sgn}{sgn}
\newcommand{\oneoneea}{(1+1)~EA\xspace}
\renewcommand{\phi}{\varphi}
\tikzset{every mark/.append style={scale=0.60}}
\newcommand{\prob}[1]{\mathord{\ensuremath{\mathrm{P}}}\mathord{\left[#1\right]}}         
\newcommand{\lANT}[1][$\lambda$]{#1\nobreakdash-MMAS$_{\textrm{ib}}$\xspace}
\newcommand{\twoant}{\lANT[2]}
\newcommand{\E}[1]{\mathrm{E}\mathord{\left(#1\right)}}
\newcommand{\expect}[1]{\E{#1}}
\DeclareMathOperator{\Var}{Var}
\newcommand{\R}{\mathds{R}}
\newcommand{\N}{\mathds{N}}
\newcommand{\card}[1]{|#1|}
\newcommand{\poly}{\mathrm{poly}}
\newcommand{\pot}{\varphi}
\renewcommand{\epsilon}{\varepsilon}
\renewenvironment{proof}%
{\begin{trivlist}\item\textbf{Proof.}}%
{\hspace*{\fill}$\Box$\end{trivlist}}
\newenvironment{proofof}[1]
{\begin{trivlist}\item\textbf{Proof of #1.}}%
{\hspace*{\fill}$\Box$\end{trivlist}}
\begin{document}
\title{Update Strength in EDAs and ACO: How to Avoid Genetic~Drift}

\author{
	 Dirk Sudholt\\
	Department of Computer Science \\
	University of Sheffield \\
	Sheffield, United Kingdom \\
	\and
	 Carsten Witt\\
	DTU Compute\\
  Technical University of Denmark\\
	Kongens Lyngby, Denmark
}

\maketitle

\begin{abstract}
We provide a rigorous runtime analysis concerning the update strength, a vital parameter in probabilistic model-build\-ing GAs such as the step size~$1/K$ in the compact Genetic Algorithm (cGA) and the evaporation factor~$\rho$ in ACO. While a large update strength is desirable for exploitation, there is a general trade-off: too strong updates can lead to genetic drift and poor performance. We demonstrate this trade-off for the cGA and a simple MMAS ACO algorithm on the OneMax function. More precisely, we obtain lower bounds on the expected runtime of $\Omega(K\sqrt{n} + n \log n)$ and $\Omega(\sqrt{n}/\rho + n \log n)$, respectively, showing that the update strength should be limited to $1/K, \rho = O(1/(\sqrt{n} \log n))$.
In fact, choosing $1/K, \rho \sim 1/(\sqrt{n}\log n)$ both algorithms efficiently optimize OneMax in expected time $O(n \log n)$.
Our analyses provide new insights into the stochastic behavior of probabilistic model-building GAs and propose new guidelines for setting the update strength in global optimization.
\end{abstract}

\section{Introduction}

The term \emph{probabilistic model-building GA} describes a class of algorithms that construct a probabilistic model which is used to generate new search points. The model is adapted using information about previous search points. Both es\-ti\-ma\-tion-of-distribution algorithms (EDAs) and swarm intelligence algorithms including ant colony optimizers (ACO) and particle swarm optimizers (PSO) fall into this class.
These algorithms generally behave differently from evolutionary algorithms where a population of search points fully describes the current state of the algorithm.

EDAs like the compact Genetic Algorithm (\cga) and many ACO algorithms update their probabilistic models by sampling new solutions and then updating the model according to information about good solutions found. In this work we focus on binary search spaces and simple univariate probabilistic models, that is, for each bit there is a value $p_i$ that determines the probability of setting the $i$-th bit to~1 in a newly created solution.

The compact Genetic Algorithm was introduced by Harik, Lobo and Goldberg~\cite{HarikEtAlCGA}. 
In brief, simulates the behavior of a Genetic Algorithm with population size~$K$ in a more compact fashion.
In each iteration two solutions are generated, and if they differ in fitness, $p_i$ is updated by $\pm 1/K$ in the direction of the fitter individual. Here $1/K$ reflects the strength of the update of the probabilistic model.
Simple ACO algorithms based on the Max-Min Ant System (MMAS)~\cite{Stutzle2000}, using the iteration-best update rule, behave similarly: they generate a number $\lambda$ of solutions and reinforce the best solution amongst these by increasing values $p_i$, here called \emph{pheromones}, according to $(1-\rho) p_i + \rho$ if the best solution had bit~$i$ set to~$1$, and $(1-\rho)p_i$ otherwise.
Here the parameter $0 < \rho < 1$ is called \emph{evaporation factor}; it plays a similar role to the update strength $1/K$ for \cga.

Neumann, Sudholt, and Witt~\cite{Neumann2010a} showed that $\lambda=2$ ants suffice to optimize the function OneMax$(x) := \sum_{i=1}^n x_i$, a simple hill-climbing task, in expected time $O(n \log n)$ if the update strength is chosen small enough, $\rho \le 1/(c\sqrt{n}\log n)$ for a suitably large constant~$c > 0$. If $\rho$ is chosen unreasonably large, $\rho \ge c'/(\log n)$ for some $c'>0$, the algorithm shows a chaotic behavior and needs exponential time even on this very simple function. In a more general sense, this result suggests that for global optimization such high update strengths should be avoided for any problem, unless the problem contains many global optima.

However, these results leave open a wide gap of parameter values between $\sim 1/(\log n)$ and $\sim 1/(\sqrt{n}\log n)$, for which no results are available. This leaves open the question of which update strengths are optimal, and for which values
performance degrades. Understanding the working principles of the underlying probabilistic model remains an important open problem for both \cga and ACO algorithms. This is evident from the lack of reasonable lower bounds. To date, the best known direct
lower bound for MMAS algorithms for reasonable parameter choices is $\Omega((\log n)/\rho - \log n)$~\cite[Theorem~5]{Neumann2009}. The best known lower bound for \cga is $\Omega(K \sqrt{n})$~\cite{Droste2006a}.
There are more general bounds from black-box complexity theory \cite{DJWBlackBox,DoerrLenglerGECCO2015}, showing that the expected runtime of comparison-based algorithms such as MMAS
must be $\Omega(n)$ on \OneMax. However, these black-box bounds do not yield direct insight into the stochastic behavior of the algorithms and do not
shed light on the dependency of the algorithms' performance on the update strength.

In this paper, we study \twoant and \cga with a much more detailed analysis that provides such insights through rigorous runtime analysis. We prove lower bounds of\linebreak[4] $\Omega(K\sqrt{n} + n \log n)$ and $\Omega(1/\rho \cdot \sqrt{n} + n \log n)$. The terms $K \sqrt{n}$ and $1/\rho \cdot \sqrt{n}$ indicate that the runtime decreases when the update strength $1/K$ or $\rho$ is increased. However, the added terms $\mathrel{+}n \log n$ set a limit: there is no asymptotic decrease and hence no benefit for choosing update strengths $1/K$ or $\rho$ growing faster than $1/(\sqrt{n} \log n)$. The reason is that in this regime both algorithms suffer from genetic drift that leads to incorrect decisions being made. Correcting these incorrect decisions requires time $\Omega(n \log n)$. These lower bounds hold in expectation and with high probability; hence, they accurately reflect the algorithms' typical performance.

We further show that these bounds are tight for $1/K, \rho \le 1/(c\sqrt{n}\log n)$. In this parameter regime the impact of genetic drift is bounded and hence these parameter choices provably lead to the best asymptotic performance on OneMax for arbitrary problem sizes~$n$.

The lower bounds formally apply to OneMax, but can be regarded as general limitations for global optimization on functions with a small number of optima.
Among all functions with a unique global optimum, the function OneMax is provably the easiest function for certain evolutionary algorithms (see~\cite{Doerr2010} for a proof for the \oneoneea and~\cite{Sudholt2012c,Witt2013} for extensions to populations), and similar results were shown for the \cga on linear functions by Droste~\cite{Droste2006a}. We believe that the lower bounds give general performance limits for all functions with a unique global optimum (however, new arguments will be required to show this formally).

From a technical point of view, our work uses a novel approach: using a second-order potential function to approximate the distribution of hitting times for a random walk that underlies changes in the probabilistic model.
We are confident that this approach will find application in other stochastic processes.

Finally, by pointing out similarities between \cga and \twoant, using the same analytical framework to understand changes in the probabilistic model, we make a step towards a unified theory of probabilistic model-building GAs.


This report is structured as follows. Section~\ref{sec:preliminaries} introduces the algorithms and Section~\ref{sec:superposition} 
presents important analytical concepts. Section~\ref{sec:upper} 
proves efficient upper bounds for small update strengths, whereas Section~\ref{sec:lower} deals with the lower bounds for large update strengths. We 
finish with some conclusions.

\section{Preliminaries}
\label{sec:preliminaries}

Our presentation of \cga{} follows Droste~\cite{Droste2006a};
see also~Friedrich, K{\"o}tzing, Krejca, and Sutton~\cite{FriedrichEtAlISAAC15}.
The parameter $1/K$ is
called update strength (classically, $K$ is called population size) and the $p_{i,t}$ are called marginal probabilities.
Pseudocode of \cga is shown in Algorithm~\ref{alg:cGA}. The simple MMAS algorithm \twoant, analyzed before in \cite{Neumann2010a}\footnote{\small The \twoant
in \cite{Neumann2010a} used a randomized tie-breaking rule, which we replaced by a deterministic one here. This does not affect the stochastic behavior
on \OneMax but eases the analysis.},
is shown in Algorithm~\ref{alg:MMAS}. Note that
the two algorithms only differ in the update mechanism. In the context of ACO, $p_{i,t}$ are usually called pheromone values,
however we also refer to them as marginal probabilities to unify our approach to both algorithms.

We note that the marginal probabilities for both algorithms are restricted to the interval $[1/n,1-1/n]$.
These bounds are used such that the algorithms always show a finite expected optimization time, as otherwise certain bits can be irreversibly fixed to~0 or~1. Our results also apply to algorithms without these borders: our analysis can be easily adapted to show that when the optimum is found efficiently in the presence of borders, it is found with high probability when borders are removed, and when the algorithm is inefficient, many bits are fixed opposite to the optimum.

\begin{algorithm2e}[h]
  $t \gets 0$
  $p_{1,t} \gets p_{2,t} \gets \cdots \gets p_{n,t} \gets 1/2$
  \While{termination criterion not met}{
    \For{$i \in \{1,\dots,n\}$\label{li:x}}{
      $x_i \gets 1$ with prob.\ $p_{i,t}$, $x_i \gets 0$ with prob.\ $1 - p_{i,t}$
    }
  \For{$i \in \{1,\dots,n\}$\label{li:y}}{
    $y_i \gets 1$ with prob.\ $p_{i,t}$, $y_i \gets 0$ with prob.\ $1 - p_{i,t}$
  }
  \lIf{$f(x) < f(y)$\label{li:eval}}{swap $x$ and $y$}
  \For{$i \in \{1,\ldots,n\}$}{
    \lIf{$x_i > y_i$}{$p_{i,t+1} \gets p_{i,t} + 1/K$}
    \lIf{$x_i < y_i$}{$p_{i,t+1} \gets p_{i,t} - 1/K$}
    \lIf{$x_i = y_i$}{$p_{i,t+1} \gets p_{i,t}$}
		Restrict $p_{i,t+1}$ to be within $[1/n,1-1/n]$
  }
	$t \gets t+1$
}
\caption{Compact Genetic Algorithm (\cga)}
\label{alg:cGA}
\end{algorithm2e}

\begin{algorithm2e}[h]
  $t \gets 0$
  $p_{1,t} \gets p_{2,t} \gets \cdots \gets p_{n,t} \gets 1/2$
  \While{termination criterion not met}{
    \For{$i \in \{1,\dots,n\}$\label{li:cx}}{
      $x_i \gets 1$ with prob.\ $p_{i,t}$, $x_i \gets 0$ with prob.\ $1 - p_{i,t}$
    }
  \For{$i \in \{1,\dots,n\}$\label{li:cy}}{
    $y_i \gets 1$ with prob.\ $p_{i,t}$, $y_i \gets 0$ with prob.\ $1 - p_{i,t}$
  }
  \lIf{$f(x) < f(y)$\label{li:ceval}}{swap $x$ and $y$}
  \For{$i \in \{1,\ldots,n\}$}{
    \lIf{$x_i \ge y_i$}{$p_{i,t+1} \gets (1-\rho)p_{i,t} + \rho$}
    \lIf{$x_i < y_i$}{$p_{i,t+1} \gets (1-\rho)p_{i,t}$}
		Restrict $p_{i,t+1}$ to be within $[1/n,1-1/n]$
  }
	$t \gets t+1$
}
\caption{\twoant}
\label{alg:MMAS}
\end{algorithm2e}

There are intriguing similarities in the definition of \cga and \twoant, despite these
two algorithms coming from quite different strands from the EC community. As said, they only
differ in the update mechanism: \cga uses a symmetrical update rule with $1/K$ as the amount of change and changes
a marginal probability if and only if both offspring differ in the corresponding bit value.
\twoant will always change a marginal probability in either positive or negative direction
by a value dependent on its current state; however, the maximum absolute
change will always be at most $\rho$. We are not the first to point out these similarities (\eg,
see the survey by Hauschild and Pelikan \cite{HauschildPelikan11}, who embrace both algorithms
under the umbrella of EDAs). However, our analyses will reveal the surprising insight that
both \cga and \twoant have the same runtime behavior as well as the same
optimal parameter set on \OneMax and can be analyzed with almost the same techniques.

\section{On the Dynamics of the Probabilistic Model}
\label{sec:superposition}

We first elaborate on the stochastic processes underlying the probabilistic model in both algorithms. These insights will then be used to prove upper runtime bounds for small update strengths in Section~\ref{sec:upper} and lower runtime bounds for large update strengths in Section~\ref{sec:lower}.

We fix an arbitrary bit~$i$
and $p_{i,t}$, its marginal probability at time~$t$. Note that $p_{i, t}$ is a random variable, and so is its random change $\Delta_{t}:=p_{i, t+1}-p_{i, t}$ in one step. This change depends on whether the value of bit~$i$ matters for the
decision whether to update with respect to the first bit string~$x$ sampled in iteration~$t$ (using
$p_{\cdot,t}$ as sampling distribution) or the second one~$y$ (cf.\ also \cite{Neumann2010a}).
More precisely, we inspect $D_t:=\card{x}-\card{x_{i}}-(\card{y}-\card{y_{i}})$, which is the change
of $\OneMax$-value at bits other than~$i$.

We assume $p_{i, t}$ to be bounded away from the borders such that $\Delta_t$ is not affected by the borders. Then we get for \cga:
\begin{itemize}
\item If $\card{D_t}\ge 2$, then bit~$i$ does not affect the decision whether to update
with respect to~$x$ or~$y$. For $\Delta_t>0$ it is necessary
that bit~$i$ is sampled differently. Hence, the $p_{i,t}$\nobreakdash-value
increases and decreases by $1/K$ with
equal probability $p_{i,t}(1-p_{i,t})$; with the remaining probability 
$p_{i,t+1}=p_{i,t}$. The change in this case is defined by $\Delta_t = F_t$ where
\[
F_t := \begin{cases}
+1/K & \text{ with probability $p_{i,t}(1-p_{i,t})$},\\
-1/K & \text{ with probability $p_{i,t}(1-p_{i,t})$},\\
0 & \text{ with the remaining probability}.
\end{cases}
\]
We call a step where $\card{D_t}\ge 2$ a \emph{random-walk step (rw-step)} since the process in such a step is a fair random walk (with self-loops) as ${\expect{\Delta_t \mid p_{i, t}} = \expect{F_t \mid p_{i, t}} = 0}$.

If $D_t = 1$ then $\card{x_{t+1}} \ge \card{y_{t+1}}$ such that $x_{t+1}$ and $y_{t+1}$ are never swapped in line 8 of
\cga. Hence, the same argumentation as in the previous case applies and the process
performs an rw-step as well.
\item
If $D_t = -1$ then $x_{t+1}$ and $y_{t+1}$ are swapped unless bit~$i$ is sampled to~$1$ in
$x_{t+1}$ and to~$0$ in~$y_{t+1}$. Hence, both events of sampling bit~$i$ differently
increase the $p_{i,t}$-value. We have $\Delta_t=1/K$ with probability $2p_{i,t}(1-p_{i,t})$
and $\Delta_t=0$ otherwise.

If $D_t=0$ then as in the case $D_t=-1$ both events of sampling bit~$i$ differently increase
the $p_{i,t}$-value. Hence, we again have $\Delta_t=1/K$ with probability $2p_{i,t}(1-p_{i,t})$
and $\Delta_t=0$ otherwise. Let $B_t$ be a random variable such that
\[
B_t := \begin{cases}
+1/K & \text{ with probability $2p_{i,t}(1-p_{i,t})$},\\
0 & \text{ with the remaining probability}.
\end{cases}
\]
Hence, in the cases $D_t=-1$ and $D_t=0$ we get $\Delta_t=B_t$. We call such
a step a \emph{biased step (b-step)} since $\expect{\Delta_t \mid p_{i, t}} = \expect{B_t \mid p_{i, t}} = 2p_{i,t}(1-p_{i,t})/K >0$ here.
\end{itemize}

Whether a step is an rw-step or b-step for bit~$i$ depends only on circumstances
being external to the bit (and independent of it). Let $R_t$ be the event
that $D_t=1$ or $\card{D_t\ge 2}$. We get the equality
\begin{equation}
\label{eq:superposition}
\Delta_t = F_t \cdot \prob{R_t} + B_t \cdot (1-\prob{R_t}),
\end{equation}
which we denote as \emph{superposition.} Informally, the change of $p_{i,t}$-value
is a superposition of a fair (unbiased) random walk and biased steps.
The fair random walk reflects the \emph{genetic drift} underlying the process, i.\,e.\ the variance in the process may lead the algorithm to move in a random direction. In contrast, the biased steps reflect steps where the algorithm \emph{learns} about which bit value leads to a better fitness at the considered bit position. We remark that the superposition of two different behaviors as formulated here is related to the approach taken in
\cite{ChenEtAlCEC2009}, where an EDA called UMDA was decomposed into a derandomized, deterministic EDA and
a stochastic component modeling genetic drift.

For \twoant, structurally this kind of superposition holds as well, however, the underlying
random variables look somewhat different. We have:

\begin{itemize}
\item If $\card{D_t}\ge 2$ or $D_t=1$, then the considered bit does not affect the choice whether to update
with respect to~$x$ or~$y$.
 Hence, the marginal probability of the
considered bit
increases with probability~$p_{i,t}$ and decreases with probability~$1-p_{i,t}$.

We get $\Delta_t=p_{i,t+1}-p_{i,t}=F_t$ in this case, where
$F_t$ is a random variable
such that
\[
F_t := \begin{cases}
\rho\cdot (1-p_{i,t}) & \text{ with probability $p_{i,t}$,}\\
-\rho\cdot p_{i,t} & \text{ with probability $1-p_{i,t}$.}
\end{cases}
\]
 We call such a step an rw-step in analogy to \cga as here $\expect{\Delta_t \mid p_{i, t}} = \expect{F_t \mid p_{i, t}}=0$.

\item
If $D_t=0$ or $D_t=-1$ then the marginal probability  can only decrease if both offspring sample a~$0$ at bit~$i$;
otherwise it will increase. The difference $\Delta_t$ is a random variable
\[
B_t := \begin{cases}
\rho\cdot(1-p_{i,t}) & \text{with probability $1-(1-p_{i,t})^2$,}\\
-\rho\cdot p_{i,t} & \text{with probability $(1-p_{i,t})^2$.}
\end{cases}
\]
The step is called a biased step (b-step) as $\expect{\Delta_t \mid p_{i, t}} = \expect{B_t \mid p_{i, t}} = \rho p_{i, t}(1-p_{i, t})>0$.
\end{itemize}
Altogether, the superposition for \twoant is also given by \eqref{eq:superposition}, with the
modified meaning of $B_t$ and $F_t$.

The strength of the update plays a key role here: if the update is too strong, large steps are made during updates, and genetic drift through rw-steps may overwhelm the probabilistic model, leading to ``wrong'' decisions being made in individual bits. On the other hand, small updates imply that rw-steps have a bounded impact, and the algorithm receives more time to learn optimal bit values in b-steps.
We will formalize these insights in the following sections en route to proving rigorous upper and lower runtime bounds. Informally,
one main challenge is to understand the stochastic process induced by the mixture of b- and rw-steps.

\section{Small Update Strengths are Efficient}
\label{sec:upper}

We first show that small update strengths are efficient. This has been shown for \twoant in~\cite{Neumann2010a}.
\begin{theorem}[\cite{Neumann2010a}]
\label{the:upper-bound-twoant}
If $\rho \le  1/(cn^{1/2}\log n))$ for a sufficiently large constant~$c > 0$ and $\rho \ge 1/\poly(n)$
then \twoant optimizes \ONEMAX in expected time~$O(\sqrt{n}/\rho)$.

For $\rho = 1/(cn^{1/2}\log n)$ the runtime bound is~$O(n \log n)$.
\end{theorem}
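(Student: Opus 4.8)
The plan is to track, for the all-ones optimum of \ONEMAX, the potential $\Phi_t := \sum_{i=1}^n (1 - p_{i,t})$ measuring the total deficiency of the marginal probabilities, and to show it reaches its minimum value $\Theta(1)$ within the claimed time. Once $\Phi_t = O(1)$ with all $p_{i,t} \ge 1/2$, one has $\prod_i p_{i,t} \ge e^{-2\Phi_t} = \Omega(1)$, so each of the two samples equals the optimum with constant probability and the optimum is found within $O(1)$ further iterations. The main work is therefore to bound the expected time for $\Phi_t$ to drop from its initial value $n/2$ (all $p_{i,0}=1/2$) to $O(1)$. Here I would use the superposition \eqref{eq:superposition}: since rw-steps are unbiased, the per-step drift of $p_{i,t}$ equals $\rho\, p_{i,t}(1-p_{i,t}) \cdot \prob{\text{b-step for bit } i}$, so the whole difficulty is to control the b-step probability and to rule out genetic drift dragging any bit toward the lower border.

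First I would prove a lower bound on the b-step probability. A b-step for bit~$i$ occurs iff $D_t = \sum_{j \ne i}(x_j - y_j) \in \{0,-1\}$. Since $x$ and $y$ are sampled i.i.d., $D_t$ is a \emph{symmetric} sum of independent bounded integer variables with variance $V_t := 2\sum_{j\ne i} p_{j,t}(1-p_{j,t})$. An anticoncentration (local-limit-type) estimate then yields $\prob{D_t \in \{0,-1\}} = \Omega(1/\sqrt{V_t+1}) = \Omega(1/\sqrt{S_t})$, where $S_t := \sum_j p_{j,t}(1-p_{j,t}) \le n/4$. Establishing this spreading bound \emph{uniformly} over all configurations of the $p_{j,t}$ is the step I expect to be the main obstacle, since it must hold whether the $p_{j,t}$ sit near $1/2$ (large variance, point probabilities $\Theta(1/\sqrt n)$) or cluster near a border (small variance, point probabilities $\Theta(1)$).

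Plugging this into the superposition gives $\E{-(\Phi_{t+1}-\Phi_t) \mid \filt} = \Omega\bigl(\frac{\rho}{\sqrt{S_t}} \sum_i p_{i,t}(1-p_{i,t})\bigr) = \Omega(\rho \sqrt{S_t})$. As long as every bit stays above a fixed constant, $p_{i,t}(1-p_{i,t}) = \Theta(1-p_{i,t})$, hence $S_t = \Theta(\Phi_t)$ and the drift is $\Omega(\rho\sqrt{\Phi_t})$. The variable drift theorem applied with the increasing drift $h(\phi) = \Omega(\rho\sqrt\phi)$ then bounds the expected time to reach $\Phi_t = O(1)$ by $O\bigl(\frac{1}{\rho}+\int_1^{n/2} \frac{d\phi}{\rho\sqrt\phi}\bigr) = O(\sqrt n/\rho)$, matching the claim.

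It remains to justify the assumption that no bit is pulled down to a small value during the horizon $T = O(\sqrt n/\rho)$; this is exactly where the restriction $\rho \le 1/(c n^{1/2}\log n)$ enters. The cumulative effect of the rw-steps on a fixed bit is a martingale with increments bounded by $\rho$ in absolute value, so by a maximal/Azuma inequality the probability that these unbiased steps ever displace $p_{i,t}$ by a constant within $T$ steps is at most $2\exp(-\Omega(1/(T\rho^2)))$; since $T\rho^2 = O(\rho\sqrt n) = O(1/\log n)$ for the stated range of~$\rho$, this is $n^{-\Omega(1)}$, and a union bound over the $n$ bits shows that \whp\ no bit leaves the good range (in particular none reaches the lower border) before $\Phi_t = O(1)$. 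Conditioning on this event makes the drift statement rigorous, and a standard restart argument (using $\rho \ge 1/\poly(n)$ to bound the runtime polynomially on the negligible failure event) converts the high-probability bound into the expected-time bound $O(\sqrt n/\rho)$. Finally, substituting $\rho = 1/(cn^{1/2}\log n)$ into $O(\sqrt n/\rho)$ immediately yields the $O(n\log n)$ bound of the second claim.
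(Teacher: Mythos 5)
Your proposal is correct and follows essentially the same blueprint as the paper, which cites this theorem from Neumann et al.\ and proves the analogous Theorem~\ref{the:cga-upper} for the \cga by exactly this route: a potential $\sum_i(1-p_{i,t})$ whose drift $\Omega(\rho\sqrt{\Phi_t})$ comes from an anticoncentration lower bound of order $(\sum_{j\neq i}p_{j,t}(1-p_{j,t}))^{-1/2}$ on the b\nobreakdash-step probability, the variable drift theorem yielding $O(\sqrt{n}/\rho)$, a concentration-plus-union-bound argument showing that for $\rho\le 1/(c\sqrt{n}\log n)$ no marginal probability falls below a constant during the phase, and a restart argument. The two ingredients you defer --- the uniform $\Omega(1/\sqrt{S_t})$ bound on $\prob{D_t\in\{0,-1\}}$ and the polynomial bound on the recovery time after a failure --- are precisely what Lemmas~1 and~2 of the cited work supply, so the plan is sound as sketched.
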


Here we exploit the similarities between both algorithms to prove an analogous result for \cga.
\begin{theorem}
\label{the:cga-upper}
The expected optimization time of \cga on \OneMax with $K\ge c\sqrt{n}\log n$ for a sufficiently
large $c>0$ and $K = \poly(n)$ is $O(\sqrt{n}K)$. This is $O(n\log n)$ for $K = c\sqrt{n}\log n$.
\end{theorem}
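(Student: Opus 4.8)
The plan is to mirror the analysis underlying Theorem~\ref{the:upper-bound-twoant} for \twoant, exploiting that the superposition \eqref{eq:superposition} has the same structure for both algorithms. Under the correspondence $\rho \leftrightarrow \Theta(1/K)$ the \cga b- and rw-steps match those of \twoant up to constant factors: a b-step has drift $\Theta(p_{i,t}(1-p_{i,t})/K)$, an rw-step has mean $0$ and per-step variance $\Theta(p_{i,t}(1-p_{i,t})/K^2)$, and -- crucially -- the probability $\prob{D_t\in\{0,-1\}}$ that a step is a b-step is \emph{identical} for both algorithms, as it depends only on the common sampling distribution of $x$ and $y$. This equivalence up to constants is exactly why the target $O(\sqrt n K)$ is the analogue of the \twoant bound $O(\sqrt n/\rho)$. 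Optimization amounts to driving all marginals up to the upper border $1-1/n$: once all $p_{i,t}\ge 1/2$ and the total deficiency $S_t:=\sum_{i=1}^n(1-p_{i,t})$ is $O(1)$, one has $\prod_i p_{i,t}\ge e^{-O(1)}=\Omega(1)$, so the optimum is sampled within $O(1)$ further iterations.

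For the progress argument I would track $S_t$, which starts at $n/2$ and must be reduced to $O(1)$, together with the sampling variance $V_t:=\sum_i p_{i,t}(1-p_{i,t})$. The first ingredient is an anti-concentration (local limit) estimate: since $D_t$ is a difference of two i.i.d.\ samples it is symmetric with mean $0$ and variance $\Theta(V_t)$, so $\prob{D_t\in\{0,-1\}}=\Theta(1/\sqrt{\max\{1,V_t\}})$, i.e.\ b-steps become more frequent precisely as the variance shrinks. As rw-steps contribute $0$ to $\expect{\Delta_{i,t}\mid\filt}$ and each b-step contributes $2p_{i,t}(1-p_{i,t})/K$, summing \eqref{eq:superposition} over all bits gives, as long as all marginals are at least $1/2$ (so that $V_t=\Theta(S_t)$),
\[
\expect{S_t-S_{t+1}\mid\filt}=\Theta\!\left(\tfrac{1}{\sqrt{V_t}}\right)\sum_{i=1}^n\frac{2p_{i,t}(1-p_{i,t})}{K}=\Theta\!\left(\frac{V_t}{K\sqrt{V_t}}\right)=\Theta\!\left(\frac{\sqrt{S_t}}{K}\right).
\]
A variable-drift argument then bounds the time to bring $S_t$ down to $O(1)$ by $\Theta(K)\int_1^{n/2}s^{-1/2}\,ds=\Theta(K\sqrt n)$, matching the claim.

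The crux -- and the only place where the hypothesis $K\ge c\sqrt n\log n$ enters -- is to justify the standing assumption that all marginals remain at least $1/2$, i.e.\ that genetic drift never carries a bit down into the regime where this estimate fails (in the worst case to the lower border $1/n$). For a fixed bit the downward displacement from rw-steps is a martingale of per-step variance $O(1/K^2)$, so over the whole run of $T=O(K\sqrt n)$ iterations its total variance is $O(\sqrt n/K)=O(1/(c\log n))$, while the superimposed b-steps only push upward. A maximal inequality then bounds the probability that the bit ever drops below, say, $1/4$ by $\exp(-\Omega(c\log n))=n^{-\Omega(c)}$, and a union bound over the $n$ bits shows that for $c$ large enough \whp no marginal leaves $[1/4,1]$ before $S_t$ reaches $O(1)$; this is precisely the threshold below which the extra $n\log n$ term of the lower bound would kick in. I expect this genetic-drift bound, together with avoiding circularity, to be the main obstacle: the drift statement for $S_t$ presupposes the good event, whereas the good event is only controlled over a window of length $O(K\sqrt n)$. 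The clean way around this is to run the variable-drift argument up to the stopping time $\min\{t:S_t\le O(1)\}\wedge T$ and show that success occurs before the process leaves $[1/4,1]^n$ \whp. Finally, to pass from this high-probability statement to the claimed bound on the \emph{expected} runtime I would use that the borders guarantee a finite expected time from every state and that the failure probability $n^{-\Omega(c)}$, made small by choosing $c$ large, dominates the (polynomial in $n$ and $K=\poly(n)$) worst-case continuation via a standard restart argument over the time-homogeneous chain.
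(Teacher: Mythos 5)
Your proposal is correct and follows essentially the same route as the paper's proof: the same superposition-based per-bit drift with the $\Omega(1/\sqrt{V_t})$ lower bound on the b-step probability (the paper imports the constant $\tfrac{1}{11}$ from \cite{Neumann2010a}, exploiting the common construction procedure), the same deficiency potential with drift $\Omega(\sqrt{S_t}/K)$ analyzed by variable drift to get $O(K\sqrt n)$, the same use of $K\ge c\sqrt n\log n$ to keep all marginals bounded away from $0$ \whp, a constant success probability once the potential is $O(1)$, and a restart argument for the expectation. The only places where the paper is more careful than your sketch are the bits parked at the upper border $1-1/n$, whose truncated updates contribute an extra $+1/K$ term to the potential drift (absorbed into the negative term) and slightly break your pure-martingale decomposition for the genetic-drift bound --- the paper instead uses a drift-based version of your maximal-inequality step over a window of length $n^\gamma\ge Kn^3$ rather than $O(K\sqrt n)$ --- and the explicit $O(n^{3/2}K\log n)$ recovery bound after a failure; both are routine.
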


The analysis follows the approach for \twoant in~\cite{Neumann2010a}, adapted to
the different update rule, and using modern tools like \emph{variable drift analysis}~\cite{Johannsen2010}.
The main idea is that marginal probabilities are likely to increase from their initial values of~$1/2$. If the update strength is chosen small enough, the effect of genetic drift (as present in rw-steps) is bounded such that with high probability all bits never reach marginal probabilities below $1/3$. Under this condition, we show that the marginal probabilities have a tendency (stochastic drift) to move to their upper borders, such that then the optimum is found with good probability.

The following lemma uses considerations and notation from Section~\ref{sec:superposition} to establish a \emph{stochastic drift}, i.\,e.\ a positive trend towards optimal bit values, for \cga.
We use the same notation as in Section~\ref{sec:superposition}.
\begin{lemma}
\label{lem:drift-for-bit-in-cga}
If $1/n + 1/K \le p_{i, t} \le 1 - 1/n - 1/K$ then
\[
\expect{\Delta_t \mid p_{i,t}} \ge \frac{2}{11} \frac{p_{i,t}(1-p_{i,t})}{K} \left(\sum_{j\neq i} p_{j,t}(1-p_{j,t})\right)^{-1/2}.
\]
\end{lemma}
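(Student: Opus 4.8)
The plan is to feed the superposition identity~\eqref{eq:superposition} into an anti-concentration estimate for the ``external'' difference $D_t$. First I would use the hypothesis $1/n + 1/K \le p_{i,t} \le 1 - 1/n - 1/K$ to rule out border truncation of either a $+1/K$ or a $-1/K$ update at bit~$i$, so that the conditional expectations derived in Section~\ref{sec:superposition} hold verbatim, namely $\expect{F_t \mid p_{i,t}} = 0$ and $\expect{B_t \mid p_{i,t}} = 2p_{i,t}(1-p_{i,t})/K$. Since the event $R_t$ is a function of the bits other than~$i$ and hence independent of the sampling of bit~$i$, taking conditional expectations in~\eqref{eq:superposition} gives
\[
\expect{\Delta_t \mid p_{i,t}} = \prob{R_t}\cdot 0 + (1-\prob{R_t})\cdot\frac{2p_{i,t}(1-p_{i,t})}{K} = \prob{D_t \in \{0,-1\}}\cdot\frac{2p_{i,t}(1-p_{i,t})}{K}.
\]
This reduces the lemma to the anti-concentration claim $\prob{D_t \in \{0,-1\}}\ge \frac{1}{11}\bigl(\sum_{j\neq i}p_{j,t}(1-p_{j,t})\bigr)^{-1/2}$, and I would simply discard the event $\{D_t=-1\}$ and prove the stronger $\prob{D_t=0}\ge \frac{1}{11}\sigma^{-1}$, where $\sigma^2:=\sum_{j\neq i}p_{j,t}(1-p_{j,t})$.

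For the anti-concentration step, write $D_t=X-Y$ with $X:=\sum_{j\neq i}x_j$ and $Y:=\sum_{j\neq i}y_j$ independent and identically distributed, so that $\Var(X)=\sigma^2$. Because $X$ and $Y$ are i.i.d., the point mass at~$0$ is a collision probability, $\prob{D_t=0}=\prob{X=Y}=\sum_{k}\prob{X=k}^2$. I would lower-bound this $\ell_2$-mass by sandwiching it between a Chebyshev window and a Cauchy--Schwarz estimate: setting $S:=\{k\in\Z : \card{k-\E{X}}\le 2\sigma\}$, Cauchy--Schwarz gives $\prob{X\in S}^2 \le \card{S}\sum_{k\in S}\prob{X=k}^2 \le \card{S}\,\prob{D_t=0}$, while Chebyshev's inequality yields $\prob{X\in S}\ge 3/4$ and the window contains at most $\card{S}\le 4\sigma+1$ integers. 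Combining these gives
\[
\prob{D_t=0} \ge \frac{\prob{X\in S}^2}{\card{S}} \ge \frac{(3/4)^2}{4\sigma+1} = \frac{9}{16(4\sigma+1)}.
\]

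To finish, I would verify $\frac{9}{16(4\sigma+1)}\ge \frac{1}{11\sigma}$, which is equivalent to $99\sigma \ge 64\sigma+16$, i.e.\ $\sigma \ge 16/35 \approx 0.457$; since the borders force $p_{j,t}(1-p_{j,t})\ge \tfrac1n(1-\tfrac1n)$ for every $j$, we have $\sigma^2 \ge (n-1)\tfrac1n(1-\tfrac1n)=((n-1)/n)^2$, hence $\sigma \ge (n-1)/n \ge 1/2 > 16/35$ for $n\ge 2$. Multiplying $\prob{D_t=0}\ge \frac{1}{11\sigma}$ by $2p_{i,t}(1-p_{i,t})/K$ then yields the claimed bound with constant $\frac{2}{11}$. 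I expect the anti-concentration inequality to be the main obstacle: it is a local-central-limit-type lower bound on a single point probability, of order $1/\sigma$ with an explicit constant, that must hold uniformly across the whole admissible range $\sigma=\Theta(1)$ up to $\sigma=\Theta(\sqrt n)$. The collision reformulation together with the Cauchy--Schwarz/Chebyshev sandwich is precisely what keeps the argument elementary and sidesteps asymptotic LCLT machinery; the only delicate points are the integer count $\card{S}\le 4\sigma+1$ and the small-$\sigma$ endpoint, both of which are handled by the restriction $p_{j,t}\in[1/n,1-1/n]$.
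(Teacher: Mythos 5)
Your proposal is correct, and its skeleton coincides exactly with the paper's: use the range assumption on $p_{i,t}$ to exclude border truncation, take expectations in the superposition identity~\eqref{eq:superposition} with $\expect{F_t \mid p_{i,t}}=0$ and $\expect{B_t \mid p_{i,t}}=2p_{i,t}(1-p_{i,t})/K$, discard the event $\{D_t=-1\}$, and reduce everything to the anti-concentration bound $\prob{D_t=0}\ge \frac{1}{11}\sigma^{-1}$ with $\sigma^2=\sum_{j\neq i}p_{j,t}(1-p_{j,t})$. Where you genuinely diverge is in how that last inequality is established: the paper simply cites it from the proof of Lemma~1 in~\cite{Neumann2010a} (exploiting that \cga and \twoant share the construction procedure), whereas you prove it from scratch via the collision identity $\prob{X=Y}=\sum_k\prob{X=k}^2$ for the i.i.d.\ sums $X,Y$, sandwiched between Chebyshev ($\prob{X\in S}\ge 3/4$ on a window of width $4\sigma$) and Cauchy--Schwarz ($\prob{X\in S}^2\le\card{S}\prob{D_t=0}$). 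Your arithmetic checks out: $\frac{9}{16(4\sigma+1)}\ge\frac{1}{11\sigma}$ reduces to $\sigma\ge 16/35$, and the borders $p_{j,t}\in[1/n,1-1/n]$ give $\sigma\ge (n-1)/n\ge 1/2$. Your route buys a self-contained, fully elementary argument with explicit constants that avoids any appeal to local-limit-type machinery or to the external reference; the paper's route buys brevity and keeps the constant $1/11$ consistent with the prior analysis of \twoant that it reuses elsewhere. Both yield the same statement with the same constant $2/11$.
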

\begin{proof}
The assumptions on $p_{i, t}$ assure that $p_{i, t+1}$ is not affected by the borders $1/n$ and ${1-1/n}$.
Then the expected change is given by the expectation of the superposition~\eqref{eq:superposition}:
\[
\expect{\Delta_t \mid p_{i,t}} = \expect{F_t \mid p_{i,t}} \cdot \prob{R_t} + \expect{B_t \mid p_{i,t}} \cdot (1-\prob{R_t}).
\]
From Section~\ref{sec:superposition} we know $\expect{F_t \mid p_{i,t}} = 0$ and
$\expect{B_t \mid p_{i,t}} = 2p_{i,t}(1-p_{i,t})/K$. Further,
\[
1-\prob{R_t} \ge \prob{D_t = 0} \ge \frac{1}{11} \left(\sum_{j\neq i} p_{j,t}(1-p_{j,t})\right)^{-1/2},
\]
where the last inequality was shown in~\cite[proof of Lemma~1]{Neumann2010a}. Here we exploit that \cga and \twoant use the same construction procedure. Together this proves the claim.
\end{proof}
Note that the term $\left(\sum_{j\neq i} p_{j,t}(1-p_{j,t})\right)^{1/2}$ reflects the standard deviation of the sampling distribution on all bits $j \neq i$.

Lemma~\ref{lem:drift-for-bit-in-cga} indicates that the drift increases with the update strength~$1/K$.
However, a too large value for $1/K$ also increases genetic drift.
The following lemma shows that, if $1/K$ is not too large, this positive drift implies that the marginal probabilities will generally move to higher values and are unlikely to decrease by a large distance.
\begin{lemma}
\label{lem:drift-theorem-for-probabilities}
Let $0 < \alpha < \beta < 1$ be two constants.
For each constant $\gamma > 0$ there exists a constant $c_\gamma > 0$ (possibly depending on $\alpha, \beta$, and $\gamma$) such that for a specific bit the following holds.
If the bit has marginal probability at least~$\beta$ and
$K \ge c_\gamma \sqrt{n} \log n$ then
the probability that during the following
$n^\gamma$~steps the marginal probability decreases below~$\alpha$ is at most $n^{-\gamma}$.
\end{lemma}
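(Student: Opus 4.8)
The plan is to read off a positive (upward) drift on the ``danger zone'' $[\alpha,\beta]$ from Lemma~\ref{lem:drift-for-bit-in-cga} and then run a negative-drift (exponential-potential) argument showing that a process with such a drift and bounded step size is exponentially unlikely to cross the zone downwards. First I would fix a specific bit and note that, for $n$ large enough, $1/n+1/K\le\alpha$ and $\beta\le 1-1/n-1/K$, so that on all of $[\alpha,\beta]$ the marginal probability is bounded away from the borders and Lemma~\ref{lem:drift-for-bit-in-cga} applies. Using concavity of $p\mapsto p(1-p)$ I would lower-bound $p_{i,t}(1-p_{i,t})\ge\min\{\alpha(1-\alpha),\beta(1-\beta)\}=:c_1>0$ on the zone, and use $\sum_{j\neq i}p_{j,t}(1-p_{j,t})\le n/4$ to get $(\sum_{j\neq i}p_{j,t}(1-p_{j,t}))^{-1/2}\ge 2/\sqrt n$. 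Together these turn Lemma~\ref{lem:drift-for-bit-in-cga} into the clean bound $\expect{\Delta_t\mid p_{i,t}}\ge\varepsilon:=\tfrac{4c_1}{11}\cdot\tfrac{1}{K\sqrt n}$ on $[\alpha,\beta]$, while from Section~\ref{sec:superposition} every step satisfies $|\Delta_t|\le 1/K$.

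Next I would rescale to remove $K$ from the step size, putting $Y_t:=Kp_{i,t}$, so that $|Y_{t+1}-Y_t|\le 1$ and $\expect{Y_{t+1}-Y_t\mid Y_t}\ge\varepsilon_Y:=K\varepsilon=\tfrac{4c_1}{11\sqrt n}$ on $(K\alpha,K\beta)$. The point to keep in mind is that $\varepsilon_Y=\Theta(1/\sqrt n)$ is independent of $K$, whereas the zone has width $\ell:=K(\beta-\alpha)=\Omega(\sqrt n\log n)$ since $K\ge c_\gamma\sqrt n\log n$. On this rescaled walk I would set up the exponential potential $g(y):=e^{-\lambda(y-K\alpha)}$ with $\lambda:=\varepsilon_Y/2=\Theta(1/\sqrt n)$. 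Using $e^{x}\le 1+x+x^2$ for $|x|\le 1$, together with $|Y_{t+1}-Y_t|\le 1$ and the drift bound, a one-line estimate gives $\expect{e^{-\lambda(Y_{t+1}-Y_t)}\mid Y_t}\le 1-\lambda\varepsilon_Y+\lambda^2\le 1$ for $Y_t\in(K\alpha,K\beta)$, so $g(Y_t)$ is a supermartingale as long as the process stays inside the zone.

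From here I would argue excursion by excursion. Consider one descent attempt: the process started when it enters the open interval $(K\alpha,K\beta)$ from above (hence at a value $\ge K\beta-1$) and stopped at the first exit time $\tau$ from that interval. Optional stopping on the supermartingale gives $\expect{g(Y_\tau)}\le g(K\beta-1)=e^{-\lambda(\ell-1)}$, and since $g(Y_\tau)\ge 1$ whenever $Y_\tau\le K\alpha$, the probability that this single attempt reaches $K\alpha$ is at most $e^{-\lambda(\ell-1)}$. Here $\lambda(\ell-1)=\Omega(c_\gamma\log n)$, so a single attempt fails with probability at most $n^{-\Omega(c_\gamma)}$. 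As there are at most $n^\gamma$ attempts within $n^\gamma$ steps, a union bound yields $\prob{\exists t\le n^\gamma: p_{i,t}<\alpha}\le n^\gamma\cdot n^{-\Omega(c_\gamma)}$, and choosing the constant $c_\gamma$ large enough (depending on $\alpha,\beta,\gamma$) makes the right-hand side at most $n^{-\gamma}$.

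The crux, and the place I expect to spend the most care, is the mismatch of scales. The per-step drift is only $\Theta(1/(K\sqrt n))$ while each step has size $1/K$, so the natural potential rate comes out as $\lambda=\Theta(1/\sqrt n)$, \emph{independent} of $K$; a naive appeal to a negative-drift theorem that assumes constant drift would therefore fail, which is why I would verify the supermartingale property by hand with the explicit second-order estimate above. The argument closes only because the zone width $\ell=K(\beta-\alpha)$ grows linearly in $K$: the hypothesis $K\ge c_\gamma\sqrt n\log n$ is precisely what makes the product $\lambda\ell=\Theta((\beta-\alpha)c_\gamma\log n)$ large enough that $e^{-\lambda\ell}$ dominates both the $n^\gamma$ union bound and the target $n^{-\gamma}$. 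Everything else---the concavity bound, the rescaling, and the optional-stopping step---is routine once this balance is set up correctly.
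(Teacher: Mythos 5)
Your proposal is correct and takes essentially the same route as the paper, which simply defers to the negative-drift argument of Lemma~3 in~\cite{Neumann2010a} driven by the drift bound of Lemma~\ref{lem:drift-for-bit-in-cga}; your explicit exponential supermartingale with rate $\lambda=\Theta(1/\sqrt{n})$ is precisely the mechanism inside that cited negative-drift theorem, and you correctly identify that the argument closes because $\lambda\cdot K(\beta-\alpha)=\Omega(c_\gamma\log n)$. The only point to tidy is the optional-stopping step: since $g$ is bounded on the excursion interval, apply the supermartingale inequality at $\tau\wedge T$ and let $T\to\infty$ (or just stop at $\tau\wedge n^\gamma$), which is routine.
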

\begin{proof}
The proof is essentially the same as the proof of Lemma~3 in~\cite{Neumann2010a}, using $1/K$ instead of $\rho$ and drift bounds from Lemma~\ref{lem:drift-for-bit-in-cga}.
\end{proof}

With these lemmas, we now prove the main statement of this section.

\begin{proofof}{Theorem~\ref{the:cga-upper}}
We assume in the following that $1/K$ is a multiple of $1/2-1/n$, implying that marginal probabilities are restricted to $\{1/n, 1/n + 1/K, \dots, 1/2, \dots, 1-1/n-1/K, 1-1/n\}$.

Following~\cite[Theorem~3]{Neumann2010a} we show that, starting with a setting where all probabilities are at least~$1/2$ simultaneously,
with probability~$\Omega(1)$ after $O(\sqrt{n}K)$ iterations either
the global optimum has been found or at least one probability has dropped below~$1/3$.
In the first case we speak of a success and in the latter case  of a failure.
The expected time until either a success or a failure
happens is then $O(\sqrt{n}K)$.

Now choose a constant~$\gamma > 0$ such that $n^\gamma \ge K n^3$.
According to Lemma~\ref{lem:drift-theorem-for-probabilities} applied with $\alpha := 1/3$ and $\beta := 1/2$, the probability of a failure
in $n^{\gamma}$ iterations is at most $n^{-\gamma}$, provided the constant~$c$ in the condition $K \ge c\sqrt{n}\log n$ is large enough.
In case of a failure we wait until the probabilities
simultaneously reach values at least~$1/2$ again and then we repeat the arguments from the preceding paragraph.
It is easy to show (cf.\ Lemma~2 in~\cite{Neumann2010a}) that the expected time for one probability to reach the upper border is always bounded by $O(n^{3/2}K)$, regardless of the initial probabilities. By standard arguments on independent phases, the expected time until \emph{all} probabilities have reached their upper border at least once is $O(n^{3/2}K \log n)$.
Once a bit reaches the upper border, we apply Lemma~\ref{lem:drift-theorem-for-probabilities} again with $\alpha := 1/2$ and $\beta := 2/3$ to show that the probability of a marginal probability decreasing below $1/2$ in time $n^{\gamma}$ is at most $n^{-\gamma}$ (again, for large enough~$c$). The probability that there is a bit for which this happens is at most $n^{-\gamma + 1}$ by the union bound. If this does not happen, all bits attain value at least $1/2$ simultaneously, and we apply our above arguments again.

As the probability of a failure is at most $n^{-\gamma+1}$, the expected number of restarts is $O(n^{-\gamma+1})$
and considering the expected time until all bits recover to values at least $1/2$ only leads to
an additional term of $n^{-\gamma+1} \cdot O((n^{3/2} \log n)K) \le  o(1)$ (as $n^{-\gamma} \le n^{-3}/K$) in the expectation.

We only need to show that after $O(\sqrt{n}K)$ iterations without failure
the probability of having found the global optimum is~$\Omega(1)$.
To this end, we consider a simple potential function that takes into account marginal probabilities for all bits.
An important property of the potential is that once the potential has decreased to some constant value, the probability of generating the global optimum is constant.

Let $p_1, \dots, p_n$ be the current marginal probabilities and $q_i := 1-1/n-p_i$ for all~$i$.
Define the potential function $\pot := \sum_{i=1}^n q_i$, which measures the distance to an ideal setting where all probabilities attain their maximum~$1-1/n$.
Let $q_i'$ be the $q_i$-value in the next iteration and $p_i' = 1-q_i'$.
We estimate the expectation of~$\pot' := \sum_{i=1}^n q_i'$ and distinguish between two cases.
If $p_i \le 1-1/n-1/K$,
by Lemma~\ref{lem:drift-for-bit-in-cga}
\begin{align*}
\E{q_i' \mid q_i} \;&\le\; q_i - \frac{p_i(1-p_i)}{K} \cdot \frac{2}{11} \cdot \left(\sum_{j \neq i} p_j(1-p_j)\right)^{-1/2}.
\end{align*}
We bound $p_i(1-p_i)$ from below using $p_{i} \ge 1/3$ and $1-p_i \le 1-1/n-p_i = q_i$ and the sum from above using
\[
\sum_{j \neq i} p_j(1-p_j) \le \sum_{j=1}^n (1-p_j) = \sum_{j=1}^n (q_j+1/n) = 1 + \pot.
\]
Then
\begin{align*}
\E{q_i' \mid q_i} &\le\; q_i - \frac{q_i}{K} \cdot \frac{2}{33} \cdot \left(\frac{1}{1 + \pot}\right)^{1/2}\\
&\le\; q_i \left(1 - \frac{2}{33K} \cdot \frac{1}{1 + \pot^{1/2}}\right).
\end{align*}

If $p_i > 1-1/n-1/K$, then $p_i = 1-1/n$ (as $1/K$ is a multiple of $1/2-1/n$) and $p_i$ can only decrease. A decrease by~$1/K$
happens with probability~$1/n$, thus
\[
\E{q_i' \mid q_i} \;\le\; q_i + \frac{1}{nK}.
\]
To ease the notation we assume \wlo\ that the bits are numbered according to decreasing probabilities, \ie, increasing $q$-values. Let $m \in \N_0$ be the largest index such that
$p_{m} = 1-1/n$. It follows
\[
\sum_{i=1}^{m} \E{q_i' \mid q_i} \;\le\; \sum_{i=1}^m q_i + \frac{m}{nK}
\le \sum_{i=1}^m q_i + \frac{1}{K}.
\]
Putting everything together and using $\sum_{i=1}^m q_i = \frac{m}{n} \le 1$,
\begin{align*}
\E{\pot' \mid \pot}
\;&=\; \sum_{i=1}^m \E{q_i' \mid q_i} + \sum_{i=m+1}^n \E{q_i' \mid q_i}\\
\;&\le\; \sum_{i=1}^m q_i + \frac{1}{K} + \sum_{i=m+1}^n q_i \left(1 - \frac{2}{33K} \cdot \frac{1}{1 + \pot^{1/2}}\right)\\
\;&\le\; 1 + \frac{1}{K} + \left(\pot - 1\right)\left(1 - \frac{2}{33K} \cdot \frac{1}{1 + \pot^{1/2}}\right)\\
\;&\le\; \pot \left(1 - \frac{2}{33K} \cdot \frac{1}{1 + \pot^{1/2}}\right) + \frac{3}{K}
\end{align*}
where in the last line we used $\frac{2}{33K} \cdot \frac{1}{1 + \pot^{1/2}} \le \frac{2}{33K} \le 2/K$.
For $\pot \ge 10000$ this can further be bounded using $1 + \pot^{1/2} \le \pot^{1/2}/100 + \pot^{1/2} = 101/100 \cdot \pot^{1/2}$,
\begin{align*}
\E{\pot' \mid \pot}
\le \pot - \pot^{1/2} \cdot \frac{101}{100} \cdot \frac{2}{33K} + \frac{3}{K}
\le \pot - \pot^{1/2} \cdot \frac{101}{3300K}
\end{align*}
where in the last step we used $\pot^{1/2} \cdot \frac{101}{100} \cdot \frac{1}{33K} \ge \frac{101}{33K} \ge \frac{3}{K}$, \ie, half of the negative term subsumes the $\mathrel{+}3/K$ term.

Now a straightforward generalization of variable drift theorem (given by Theorem~\ref{drift:johannsen-general} in the appendix),
applied with a drift function of $h(\pot) := \pot^{1/2} \cdot \frac{101}{3300K}$, states that the expected time for $\pot$ to decrease from any initial value $\pot \le n$ to a value $\pot \le 10000$ is at most
\begin{align*}
& \frac{10000}{h(10000)} + \int_{10000}^{n} \frac{1}{h(\pot)} \;\mathrm{d}\pot\\
=\;& O(K) + O(K) \cdot \int_{10000}^{n} \pot^{-1/2} \;\mathrm{d}\pot
 = O(\sqrt{n}K).
\end{align*}
Consider an iteration where~$\pot \le 10000$.
The probability of creating ones on all bits simultaneously, given that all marginal probabilities
are at least~$1/3$,
is minimal in the extreme setting where a maximal number of bits has marginal probabilities
at~$1/3$ and all other bits, except at most one, have marginal probabilities at
their upper border. Then the probability of creating the optimum in one step is
at least
$
\left(1-\frac{1}{n}\right)^{n-1} \cdot 3^{-\lceil \pot \cdot 3/2 \rceil}
= \Omega(1).
$
Hence a successful phase finds the optimum with probability $\Omega(1)$.
\end{proofof}

\section{Large Update Strengths Lead to Genetic Drift}
\label{sec:lower}

The bound $O(\sqrt{n}K)$ from  Theorem~\ref{the:cga-upper} shows that
larger update strengths (\ie, smaller $K$) result in smaller bounds on the runtime.
However, the theorem requires that $K\ge c\sqrt{n}\log n$ so that the best possible
choice results in $O(n\log n)$ runtime. An obvious question to ask is whether
this is only a weakness of the analysis or whether there is an intrinsic limit that
prevents smaller choices of $K$ from being efficient.

In this section, we will show that smaller choices of $K$ (\ie, larger update strengths) cannot give runtimes
of lower orders than $n\log n$. In a nutshell, even though larger update strengths support faster exploitation
of correct decisions at single bits
by quickly reinforcing promising bit values
they also increase the risk of genetic drift reinforcing incorrectly made decisions at single bits too quickly.
Then it typically happens that several marginal probabilities reach their lower border~$1/n$, from which it (due to
 so-called
coupon collector effects) takes
$\Omega(n\log n)$ steps to ``unlearn'' the wrong settings. The very same effect happens with \twoant if
its update strength $\rho$ is chosen too large.

We now state the lower bounds we obtain for the two algorithms, see Theorems~\ref{theo:lower-cga} and~\ref{theo:lower-mmas} below.
Note that the statements
are identical
if we identify the update strength $1/K$ of~\cga with the update strength $\rho$ of~\twoant. Also the proofs
of these two theorems
will largely follow the same steps. Therefore, we describe the proof approach in detail with respect to~\cga
in Section~\ref{sec:proof-lower-cga}. In Section~\ref{sec:proof-lower-mmas}, we describe the few places
where slightly different arguments are needed to obtain the result for \twoant.

\begin{theorem}
\label{theo:lower-cga}
The optimization time of \cga with $K \le \poly(n)$ is $\Omega(\sqrt{n}K + n \log n)$ with probability $1-\poly(n) \cdot 2^{-\Omega(\min\{K, n^{1/2-o(1)}\})}$ and in expectation.
\end{theorem}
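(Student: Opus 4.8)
The plan is to establish the two additive terms separately, since $\Omega(\sqrt{n}K + n\log n) = \Omega(\max\{\sqrt{n}K,\, n\log n\})$; I will show that both the $\Omega(\sqrt{n}K)$ bound and the $\Omega(n\log n)$ bound hold with the claimed probability, after which the expectation statement follows because a runtime lower bound $T$ holding with probability $1-o(1)$ implies expected runtime at least $(1-o(1))T$. Throughout I work with the single-bit decomposition from Section~\ref{sec:superposition}: each $p_{i,t}$ evolves as the superposition~\eqref{eq:superposition} of a fair random walk ($F_t$, genetic drift) and a weak positive bias ($B_t$, learning), where the bias is active only in b-steps, whose probability $1-\prob{R_t}=\Theta(\sigma^{-1})$ is governed by the global sampling standard deviation $\sigma := (\sum_{j\neq i}p_{j,t}(1-p_{j,t}))^{1/2}$.

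For the $\Omega(\sqrt{n}K)$ term I would track the distance potential $\psi_t:=\sum_{i=1}^n (1-p_{i,t})$, which starts at $n/2$ and must reach $O(\log n)$ before the all-ones string is sampled with non-negligible probability. Summing the single-bit drift from Lemma~\ref{lem:drift-for-bit-in-cga} over all bits, using the matching anti-concentration upper bound $1-\prob{R_t}=O(\sigma^{-1})$ and $\sum_i p_i(1-p_i)=\sigma^2\le n/4$, shows that the expected one-step decrease of $\psi_t$ is $O(\sigma/K)=O(\sqrt{n}/K)$; that is, the algorithm makes only $O(\sqrt{n}/K)$ expected progress per step. Since one-step changes are bounded by $H(x,y)/K$ (Hamming distance) and concentrate, an additive-drift lower bound then yields that reducing $\psi_t$ from $n/2$ to $\sqrt{n}$ takes $\Omega(\sqrt{n}K)$ steps \whp. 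To rule out that the optimum is sampled ``by luck'' while $\psi_t$ is still large, note that as long as $\psi_t\ge\sqrt{n}$ a single sample equals the optimum with probability $\prod_i p_i\le e^{-\psi_t}\le e^{-\sqrt{n}}$, so a union bound over the $\poly(n)$ steps of this phase fails only with probability $\poly(n)\cdot 2^{-\Omega(\sqrt{n})}$; this is the origin of the $n^{1/2-o(1)}$ in the failure bound.

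The $\Omega(n\log n)$ term is the genuinely new part and the main obstacle. The idea is that the fair random-walk component drives many bits to the \emph{lower} border $1/n$ (wrong decisions) before the weak positive bias can correct them, and un-learning these bits is a slow coupon-collector process. The crux is a sharp single-bit first-passage estimate: starting from $p_{i,0}=1/2$, the probability that $p_{i,t}$ ever reaches $1/n$ is $e^{-\Theta(K/\sqrt{n})}$. This is where the second-order potential announced in the introduction is needed — because the process is a near-martingale (b-steps contribute only a $\Theta(1/\sqrt{n})$-fraction of steps, each with bias $O(1/K)$, against random-walk steps of size $1/K$), a first-order drift argument is far too lossy, and one must control the \emph{variance} accumulated over the $\Theta(K^2)$ diffusive time scale to estimate the hitting probability. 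For $K=O(\sqrt{n}\log n)$ this probability is at least $n^{-1+\Omega(1)}$, so in expectation $n^{\Omega(1)}$ bits reach the lower border, and near-independence of distinct bits gives Chernoff concentration, contributing the $2^{-\Omega(\min\{K,\,n^{1/2-o(1)}\})}$ failure term.

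Finally I would convert the stuck bits into a time bound. A bit sitting at $p_i=1/n$ increases only when it is sampled as a one in $x$ or $y$, which happens with probability $\Theta(1/n)$ per step; hence each such bit needs $\Omega(n)$ steps merely to leave the border, and while more than a $\Theta(\log n)$ threshold of the $n^{\Omega(1)}$ affected bits remain at very low probability, the optimum is sampled only with probability $\poly(n)^{-1}$ per step. A coupon-collector argument over the affected bits — waiting for the last of $n^{\Omega(1)}$ bits, each escaping at rate $\Theta(1/n)$ — then forces $\Omega(n\log n)$ steps before they can simultaneously be high enough to make the optimum likely, which combined with the first phase gives $\Omega(\sqrt{n}K + n\log n)$ with the stated probability. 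I expect the single-bit hitting-probability estimate via the second-order potential to be the hardest step, with the secondary difficulty being the bookkeeping that keeps the optimum unsampled throughout the coupon-collector phase, i.e.\ keeping the count of low bits above the $\Theta(\log n)$ threshold needed to make the per-step sampling probability $\poly(n)$-small.
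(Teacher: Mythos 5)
Your plan for the $\Omega(n\log n)$ term coincides with the paper's: a variance-controlled (second-order potential / CLT) first-passage estimate showing that each bit reaches the lower border within $\Theta(K\min\{K,\sqrt n\})$ steps with probability $n^{-o(1)}$ in the regime $K=o(\sqrt n\log n)$, followed by Chernoff concentration over bits and a coupon-collector argument (Lemma~\ref{lem:after-hitting-lower-border}). Your heuristic $e^{-\Theta(K/\sqrt n)}$ for the hitting probability is exactly what Lemma~\ref{lem:distribution-cga-self-loop} makes rigorous, and you correctly identify it as the hard step. Where you genuinely diverge is the $\Omega(\sqrt nK)$ term: you propose an additive-drift lower bound on $\psi_t=\sum_i(1-p_{i,t})$, whereas the paper instead shows (Lemmas~\ref{lem:displacement-rw} and~\ref{lem:prob-first-nhalf}) that a linear number of bits keep their marginal probabilities inside $[1/6,5/6]$ for $T=\kappa K\min\{K,\sqrt n\}$ steps with probability $1-\poly(n)2^{-\Omega(K)}$, so the optimum is sampled with probability at most $(5/6)^{\gamma n}$ per step. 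Be aware that this ``many bits stay centered'' lemma is not something your alternative route lets you skip: it is precisely what guarantees that the b-step probability stays $O(1/\sqrt n)$ (via Lemmas~\ref{lem:Poisson-mode} and~\ref{lem:prob-of-Bernoulli-step}) and that each bit sees at most $K/6$ b-steps, which are the preconditions of the first-passage lemma you rely on for the $n\log n$ part. You would end up proving it anyway, at which point the drift argument on $\psi_t$ is redundant.

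Two concrete gaps. First, your drift argument as stated delivers the $\Omega(\sqrt nK)$ bound only in expectation or with constant probability (via Markov), not with the failure probability $\poly(n)\cdot 2^{-\Omega(\min\{K,n^{1/2-o(1)}\})}$ claimed in the theorem: the one-step change of $\psi_t$ is bounded only by $n/K$ in the worst case, so Azuma over $\kappa K\sqrt n$ steps gives a deviation bound of merely $e^{-\Omega(K/\sqrt n)}$, which is $O(1)$ for $K\le\sqrt n$; you would need a predictable-quadratic-variation (Freedman-type) inequality together with control of the cross-bit covariances of the updates, which is exactly the kind of bookkeeping the paper's route avoids. Second, your Chernoff step over the bits reaching the lower border needs a justification of independence: the first-passage events of distinct bits are coupled through the shared swap decision and the shared b-step schedule. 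The paper handles this by applying Lemma~\ref{lem:prob-first-nhalf} only to the first $n/2$ bits and the first-passage lemma to the last $n/2$ bits, and by formulating Lemma~\ref{lem:distribution-cga-self-loop} so that it holds for an arbitrary (adversarial) interleaving of rw- and b-steps subject only to a cap on the number of b-steps; without some such device, ``near-independence'' is an assertion rather than an argument. Finally, your parenthetical that $K=O(\sqrt n\log n)$ gives hitting probability $n^{-1+\Omega(1)}$ is too strong as written (for $K=c\sqrt n\log n$ with large $c$ it is only $n^{-\Theta(c)}$); what you need, and what holds, is $n^{-o(1)}$ in the complementary regime $K=o(\sqrt n\log n)$ where the $n\log n$ term actually dominates.
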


\begin{theorem}
\label{theo:lower-mmas}
The optimization time of \twoant with $1/\rho \le \poly(n)$ is $\Omega(\sqrt{n}/\rho + n \log n)$ with probability $1-\poly(n) \cdot 2^{-\Omega(\min\{1/\rho, n^{1/2-o(1)}\})}$ and in expectation.
\end{theorem}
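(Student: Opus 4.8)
The plan is to follow the proof of Theorem~\ref{theo:lower-cga} for \cga almost verbatim, exploiting that both algorithms share the superposition structure~\eqref{eq:superposition} and differ only in the concrete shape of $F_t$ and $B_t$. I would first split the claim into its two summands, using $\sqrt n/\rho + n\log n = \Theta(\max\{\sqrt n/\rho,\, n\log n\})$ together with the threshold $\rho \sim 1/(\sqrt n\log n)$ that separates the two regimes: for $\rho \le c/(\sqrt n\log n)$ one has $\sqrt n/\rho \ge n\log n/c$, so it suffices to prove $\Omega(\sqrt n/\rho)$ there, whereas for larger~$\rho$ genetic drift must be invoked to obtain the dominating term~$\Omega(n\log n)$. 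Both sub-statements are shown \whp, and the expectation bound then follows by the usual argument that a $\poly(n)$-small failure probability cannot contribute more than a lower-order term.

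For the term $\Omega(\sqrt n/\rho)$ I would track the potential $\pot := \sum_{i=1}^n (1-p_{i,t})$, i.\,e.\ the distance of the marginals to the all-ones configuration. By the superposition and $\expect{F_t\mid p_{i,t}}=0$, only b-steps contribute to the drift, so $\expect{\pot_{t+1}-\pot_t \mid \filt} = -\sum_i \prob{D_{t,i}\in\{-1,0\}}\cdot\rho\, p_{i,t}(1-p_{i,t})$, where $D_{t,i}$ denotes the quantity $D_t$ for bit~$i$. The decisive point is that as long as a constant fraction of bits lies in the middle range, so that the sampling variance $V_t := \sum_{j} p_{j,t}(1-p_{j,t}) = \Theta(n)$, the matching \emph{upper} anti-concentration bound $\prob{D_{t,i}\in\{-1,0\}} = O(V_t^{-1/2}) = O(n^{-1/2})$ holds, the counterpart to the lower bound $\prob{D_t=0}\ge\tfrac{1}{11}V_t^{-1/2}$ used in Lemma~\ref{lem:drift-for-bit-in-cga}. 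Hence $|\expect{\pot_{t+1}-\pot_t\mid\filt}| = O(V_t^{-1/2})\cdot\rho V_t = O(\rho\sqrt n)$. Since $\pot$ must decrease by $\Theta(n)$ before the optimum can be sampled with non-negligible probability (as $\prob{\text{sample }1^n}\le e^{-\pot}$), an additive drift lower bound, combined with a variance/martingale concentration for the bounded-variance increments of~$\pot$, shows $\pot=\Theta(n)$ throughout the first $\Omega(\sqrt n/\rho)$ steps \whp.

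For the term $\Omega(n\log n)$, relevant when $\rho$ is above the threshold, I would use the genetic-drift mechanism. While $V_t=\Theta(n)$ the per-step bias on a single bit is only $\Theta(\rho/\sqrt n)$, whereas the fair rw-walk has one-step standard deviation $\Theta(\rho)$; a diffusion/gambler's-ruin comparison then shows that each bit independently reaches the lower border $1/n$ with probability $\exp(-\Theta(1/(\rho\sqrt n)))$ before the weak bias can rescue it. For $\rho \ge c/(\sqrt n\log n)$ this is $n^{-1+\Omega(1)}$, so a Chernoff-type concentration yields $m = n^{\Omega(1)}$ bits clamped at $1/n$ \whp. A coupon-collector lower bound then finishes the argument: such a bit can only leave the border in a step where it is sampled as~$1$, which has probability $1-(1-1/n)^2 = O(1/n)$, so its first-exit time stochastically dominates a $\mathrm{Geom}(O(1/n))$ variable, and the maximum over the $m$ bits is $\Omega(n\log m)=\Omega(n\log n)$; since the optimum requires all these bits to be sampled as~$1$ simultaneously, it cannot appear earlier.

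The MMAS-specific work, and the main obstacle, lies in the genetic-drift part. Here the rw-step variable $F_t\in\{\rho(1-p_{i,t}),\,-\rho\, p_{i,t}\}$ is a zero-mean but \emph{asymmetric} and \emph{state-dependent} walk with one-step variance $\rho^2 p_{i,t}(1-p_{i,t})$, unlike the symmetric $\pm 1/K$ walk of \cga, and its steps shrink towards the borders. Obtaining the sharp hitting estimate $\exp(-\Theta(1/(\rho\sqrt n)))$ together with its concentration therefore requires re-deriving the \emph{second-order potential function} for this walk rather than reusing the \cga one; verifying that this potential is an (approximate) martingale with the correct second-order behaviour, and bounding the error caused by the varying step sizes near the borders, is the delicate step. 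In contrast, the b-step drift $\expect{B_t\mid p_{i,t}}=\rho\, p_{i,t}(1-p_{i,t})$ and the anti-concentration bound on $\prob{D_{t,i}\in\{-1,0\}}$ coincide with the \cga quantities under the identification $\rho \leftrightarrow 1/K$, so once the second-order potential is in place the remaining drift estimates, the coupon-collector bound, and the passage from the high-probability statements to the expectation bound transfer directly, yielding the failure probability $\poly(n)\cdot 2^{-\Omega(\min\{1/\rho,\,n^{1/2-o(1)}\})}$.
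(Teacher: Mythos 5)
Your overall architecture matches the paper's: split into the $\Omega(\sqrt n/\rho)$ and $\Omega(n\log n)$ regimes, keep $\Theta(n)$ bits centered so that the b\nobreakdash-step probability is $O(1/\sqrt n)$ (the paper's Lemmas~\ref{lem:Poisson-mode}--\ref{lem:prob-first-nhalf}), let genetic drift push some of the remaining bits to the lower border, and finish with a coupon-collector bound (Lemma~\ref{lem:after-hitting-lower-border}). You also correctly locate the crux: the hitting-time distribution of the asymmetric, state-dependent rw-walk of \twoant, which the paper handles via the second-order potential $g(x)=\int_x^{1/2}(\rho\sqrt z)^{-1}\,\mathrm{d}z$ and the Lyapunov CLT in Lemma~\ref{lem:distribution-mmas-self-loop}. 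But this crux is exactly where your proposal stops being a proof: you state that the potential must be ``re-derived'' and its second-moment behaviour ``verified,'' yet you do not carry out that derivation, and it is the main mathematical content of the theorem --- everything else transfers from the \cga case nearly verbatim, as the paper's own (very short) proof of Theorem~\ref{theo:lower-mmas} makes explicit. Announcing the delicate step is not the same as doing it.

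Beyond that, several concrete points would need repair. First, your gambler's-ruin heuristic gives a hitting probability $\exp(-\Theta(1/(\rho\sqrt n)))$; at $\rho=c/(\sqrt n\log n)$ this is $n^{-C/c}$ for an unspecified constant $C$, so your conclusion $n^{-1+\Omega(1)}$ holds only if the threshold constant $c$ is tuned against $C$ --- fixable via your regime split, but it must be said, and it yields only $n^{\Omega(1)}$ border bits, hence a weaker tail than the $2^{-\Omega(n^{1/2-o(1)})}$ claimed; the paper instead exploits that in the relevant regime $1/\alpha=o(\log n)$, obtaining hitting probability $n^{-o(1)}$ and thus $n^{1-o(1)}$ border bits. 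Second, the bits do not walk independently (whether a step is a b-step for bit~$i$ depends on the other bits); the paper circumvents this by proving Lemma~\ref{lem:distribution-mmas-self-loop} for an adversarial placement of a bounded number of b-steps and by first conditioning on the walk not exceeding $5/6$, with a positive-correlation argument. Third, the potential-function analysis breaks down for $p_t<\rho$, so a separate argument (Lemma~\ref{lem:mmas-less-than-rho}, the ``landslide'' analysis) is needed to carry the marginal from $\rho$ down to $1/n$; your proposal does not address this region at all. Finally, your drift argument on $\pot=\sum_i(1-p_{i,t})$ for the $\Omega(\sqrt n/\rho)$ term is a genuinely different and heavier route than the paper's, which simply observes that while $\gamma n$ bits have marginals at most $5/6$ the optimum is sampled with probability at most $(5/6)^{\gamma n}$ per step; note that a naive Azuma bound on $\pot$ with per-step increment $n\rho$ over $\Theta(\sqrt n/\rho)$ steps gives only a constant-probability guarantee when $\rho=\Theta(1/\sqrt n)$, so the concentration you invoke there is itself nontrivial.
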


\subsection{Proof of Lower Bound for \cga}
\label{sec:proof-lower-cga}

We first describe at an intuitive level why large update strengths in \cga can be risky.
In the upper bound from Theorem~\ref{the:cga-upper}, we have shown that for sufficiently small update strengths, the positive stochastic drift by b-steps is strong enough such that even in the presence of rw-steps \emph{all} bits never reach marginal probabilities below $1/3$, with high probability. Then no ``incorrect'' decision is made.


To prove Theorem~\ref{theo:lower-cga}, we show that with larger update strengths than $1/(\sqrt{n}\log n)$ the effect of rw-steps is strong enough such that with high probability \emph{some} bits will make an incorrect decision and reach the lower borders of marginal probabilities.
We consider the hitting time for a marginal probability to reach the lower border $1/n$ and analyze the distribution of this hitting time more closely.

To illustrate this setting, fix one bit and imagine that all steps were rw-steps (we will explain later how to handle b-steps), and that all rw-steps change the current value of the bit's marginal probability (\ie, there are no self-loops).
Then the process would be a fair
random walk on $\{0,1/K,2/K,\dots,(K-1)/K,1\}$,  started at $1/2$.
This fair random walk is well understood and it is well known that the hitting time
is not sharply concentrated around the expectation. More precisely,
there is still a polynomially in $K$ small probability of hitting a border within at most $O(K^2/\!\log K)$ steps
and also of needing at least $\Omega(K^2\log K)$ steps. The underlying idea
is that the Central
Limit Theorem (CLT) approximates the progress within a given number of steps.

The real process is more complicated because of self-loops. Recall from the definition of $F_t$ that the process only changes its current state by $\pm 1/K$
with probability $2p_{i, t}(1-p_{i, t})$, hence with probability
$1-2p_{i,t}(1-p_{i, t})$ a self-loop occurs on this bit.
The closer the process is to one of its borders $\{1/n,1-1/n\}$, the  larger the self-loop probability becomes and the more the random walk slows down. Hence the actual process is clearly
slower in reaching a border since every looping step is just wasted. 
One might conjecture that
the self-loops will asymptotically increase the expected hitting time. But interestingly, as we will show, the expected hitting time in the presence of self-loops is still of order $\Theta(K^2)$.
Also the CLT (in a generalized form) is still applicable despite the self-loops, leading to a similar distribution as above.


The distribution of the hitting time of the random walk with self-loops will be
analyzed in Lemma~\ref{lem:distribution-cga-self-loop} below. In order to deal with self-loops, in its proof,
 we use a potential function mapping the actual process to a process on a scaled
state space
with nearly
position-independent variance.
 Unlike the typical applications of potential functions in drift analysis, the purpose of the
potential function is
not to establish a position-independent first-moment stochastic drift  but a (nearly)  position-independent
variance, \ie, the potential function is designed to analyze a second moment. This argument
seems to be new in the theory of drift analysis and may be of independent interest.
The lemma also takes
into account the b-steps in between rw-steps and shows how the rw-steps can still overwhelm the accumulated
effect of b-steps if the latter are not too frequent.
\begin{lemma}
\label{lem:distribution-cga-self-loop}
Consider a bit of \cga on \OneMax and let $p_t$ be its marginal probability at time~$t$. Let
$t_1, t_2, \dots$ be the times where \cga performs an rw-step
(before hitting one of the borders $1/n$ or $1-1/n$) and let $\Delta_i:=p_{t_i+1}-p_{t_i}$.
For $s\in \R$,
let $T_s$ be the smallest $t$ such that
$\sgn(s)\left(\sum_{i=0}^{t} \Delta_{i}\right) \ge \card{s}$ holds or
a border has been reached.

Choosing  $0<\alpha<1$, where $1/\alpha=o(K)$, and
$-1<s<0$ constant, and assuming that at most
 $\card{s}K/4$ of the steps until
time $t_{\alpha (sK)^2}$ are b-steps, we have
\begin{align*}
& \prob{T_s\le \alpha (sK)^2 \text{ or $p_t$ exceeds $5/6$ before~$T_s$}} \\
& \qquad\qquad\qquad \ge (1/2-o(1)) \cdot
 \Bigl(\frac{1}{13\sqrt{1/(\card{s} \alpha)}}-\frac{1}{(13\sqrt{1 /(\card{s}\alpha)})^{3}}\Bigr)\frac{1}{\sqrt{2\pi}}e^{-\frac{169}{2\card{s}\alpha}}.
\end{align*}

Moreover, for any $\alpha>0$ and $s\in\R$,
\[\prob{T_s\ge \alpha (sK)^2 \text{ or a border is reached until time $\alpha (sK)^2$}}
\ge 1-e^{-1/(4\alpha)}.\]
\end{lemma}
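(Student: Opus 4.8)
The plan is to isolate the \emph{random-walk component} of the marginal probability and to read off both assertions from it. Let $S_m:=\sum_{i=1}^{m}\Delta_i$ be the martingale formed by the increments collected at the rw-step times $t_1,t_2,\dots$; here $\E{\Delta_i\mid p_{t_i}}=\E{F_{t_i}\mid p_{t_i}}=0$, the increments are bounded, $\card{\Delta_i}\le 1/K$, and $\E{\Delta_i^2\mid p_{t_i}}=2p_{t_i}(1-p_{t_i})/K^2$. Since $T_s$ is defined solely through $\sum_i\Delta_i$, it is a hitting time of this martingale, whereas the event ``$p_t$ exceeds $5/6$'' additionally sees the nonnegative contributions of the b-steps. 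First I would use the hypothesis that at most $\card{s}K/4$ of the steps up to time $t_{\alpha(sK)^2}$ are b-steps to bound the total upward push from b-steps by $\card{s}/4$; this guarantees that, until either $T_s$ occurs or the trajectory leaves through $5/6$, the process stays inside a \emph{clean} range on which the increments are the undistorted $\pm 1/K$ and $p(1-p)$ is bounded away from~$0$, so that the martingale description of $S_m$ is valid throughout.

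For the first (CLT-based) assertion, the central difficulty is that the per-step variance $2p_{t_i}(1-p_{t_i})/K^2$ depends on the position because of the self-loops, which prevents a direct application of a classical CLT. I would resolve this through the \emph{second-moment potential} announced in the preamble: passing to the variance-stabilizing coordinate $g(p):=\arcsin(2p-1)$, whose derivative $g'(p)=1/\sqrt{p(1-p)}$ is chosen precisely so that the rw-increments of $g(p_{t_i})$ have nearly position-independent variance on the clean range. This lets me bound the accumulated variance of $S_M$ with $M:=\alpha(sK)^2$ within constant factors and invoke a Berry--Esseen/Lindeberg-type CLT for bounded martingale increments (the Lyapunov ratio is $o(1)$ since $\card{\Delta_i}\le 1/K$), so that $S_M$ is approximately Gaussian. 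Inserting a lower bound on the accumulated variance over the clean range bounds the number of standard deviations from~$0$ to the level~$s$ by $x:=13/\sqrt{\card{s}\alpha}$, and the standard lower tail estimate $\prob{Z\ge x}\ge\bigl(\tfrac1x-\tfrac1{x^3}\bigr)\tfrac{1}{\sqrt{2\pi}}e^{-x^2/2}$ for a standard normal~$Z$ reproduces exactly the displayed right-hand side, with the leading factor $1/2-o(1)$ left as slack to absorb the CLT error, the conservative variance bound, and the $\card{s}/4$ shift from the b-steps. Finally I close the argument with a set inclusion: if $S_M\le s$, then, moving in steps of size $1/K$ from $0$, the walk must have crossed the level~$s$ at some $m\le M$ unless it had already been absorbed at the upper border, \ie $p_t$ had exceeded $5/6$; hence $\{S_M\le s\}$ is contained in the target event and the CLT lower bound carries over.

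The ``Moreover'' assertion follows from the same martingale by a concentration bound in the opposite direction and holds for arbitrary $\alpha>0$ and $s\in\R$. Since $\card{\Delta_i}\le 1/K$, the maximal version of the Azuma--Hoeffding inequality bounds the probability that $\sgn(s)S_m$ reaches $\card{s}$ at some $m\le M$ with $M=\alpha(sK)^2$, before any border is hit, by $e^{-\card{s}^2K^2/(2M)}=e^{-1/(2\alpha)}\le e^{-1/(4\alpha)}$. As reaching a border only helps the stated event, taking complements yields the claimed bound $1-e^{-1/(4\alpha)}$.

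The step I expect to be the main obstacle is making the CLT for the self-loop-modulated, position-dependent martingale rigorous with an \emph{explicit} error term, because the conclusion is an order-$1$ tail probability rather than a vanishing one: I must verify that the arcsine change of coordinates confines the increment variances to a sufficiently narrow band over the entire clean range, control the second-order (concavity) drift that $g$ introduces, and show that the Berry--Esseen error together with the b-step shift costs at most the factor $1/2-o(1)$ instead of overwhelming the Gaussian tail.
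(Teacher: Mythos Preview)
Your strategy matches the paper's: a variance-stabilising potential (the paper uses the discrete $g(i)-g(i+1)=\sqrt{2K/(i+1)}$ on the scaled space $\{1,\dots,K-1\}$, which near either border agrees with your arcsine up to an affine rescaling), a Lyapunov-type CLT on the transformed rw-increments, the standard normal tail bound $(1/x-1/x^3)e^{-x^2/2}/\sqrt{2\pi}$, and the maximal Hoeffding/Azuma inequality for the ``Moreover'' part.

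Two points need repair. First, your claim that the b-step hypothesis confines the process to a ``clean range'' with $p(1-p)$ bounded away from~$0$ is false: the rw-steps alone can carry $p_t$ all the way to the lower border, and this is precisely the regime the lemma targets. The potential must therefore stabilise the variance over the \emph{full} range down to $p\approx 1/K$ (equivalently $X\ge 1$ on the scaled space); the arcsine indeed does this, and the Lyapunov ratio comes out as $O(\sqrt{K/M})=o(1)$ under $1/\alpha=o(K)$, but you have to verify it there rather than on a fictitious bounded-away-from-the-border range. Second, the factor $1/2$ is not generic slack. The CLT controls $\sum_j\Psi_j$ with $\Psi_j=g(p_{t_j+1})-g(p_{t_j})$, but to conclude $\sum_j\Delta_j\le s$ you need $g(p_{t_M})-g(p_0)$, which differs from $\sum_j\Psi_j$ by the accumulated b-step effect \emph{on~$g$}. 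A single b-step at small~$p$ can change~$g$ by order~$\sqrt{K}$, so your deterministic bound of $\card{s}/4$ on the b-step effect in $p$-coordinates does not control the effect in $g$-coordinates. The paper instead bounds the \emph{expected} $g$-change per b-step by a constant (a b-step increases $X$ only with probability $O(X/K)$, which cancels the $\sqrt{K/X}$ stretch of~$g$), sums over at most $\card{s}K/4$ b-steps to get expectation $O(\card{s}K)$, and then applies Markov's inequality --- that is where the $1/2$ comes from.
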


Informally, the  lemma means that every deviation of the
hitting time $T_s$ by a constant factor from
its expected value (which turns out as $\Theta(s^2K^2)$)
still has constant probability, and even deviations by logarithmic factors
have a polynomially small probability. We will mostly apply the lemma
for $\alpha<1$, especially $\alpha \approx 1/\!\log n$, to show that there
are marginal probabilities that quickly approach the lower border; in fact,
this effect implies the
 $\log n$ term in the optimal update strength.
Note that the second statement
of the lemma also holds  for $\alpha\ge 1$; however, in this realm
also Markov's inequality works.
 Then, by
the inequality $e^{-x}\le 1-x/2$ for $x\le 1$, we get $\prob{T_s\ge \alpha s^2K^2}\ge 1/(4\alpha)$, which
means that Markov's inequality for deviations above the expected value is asymptotically tight in this case.

To illustrate the main idea for the proof of Lemma~\ref{lem:distribution-cga-self-loop},
we ignore b-steps for a while and
note that we are confronted with a fair random walk. However,
the random walk is not longer homogeneous with respect to place as the self-loops slow the process down
in the vicinity of a border. The random variables describing
the change of position from time~$t$ to time~$t+1$ (formally, $\Delta_t:=p_{t+1}-p_{t}$)
that are not identically distributed, other than in the classical fair random walk.
In fact, the variance of $\Delta_t$ becomes
smaller the closer $p_t$ is to one of the borders.

In more detail, the potential function used in  Lemma~\ref{lem:distribution-cga-self-loop}
essentially uses the self-loop probabilities to construct extra distances to bridge. For instance,
states with low self-loop probability (\eg, $1/2$), will have a potential
that is only by $\Theta(1)$ larger or smaller than the potential of its neighbors. On the other hand,
states with a large self-loop probability, say $1/K$, will have a potential that can differ
by as much as $2\sqrt{K}$ from the potential of its neighbors. Interestingly, this
choice leads to variances of the one-step changes that are basically the same on the
whole state space (very roughly, this is true since the squared change $(2\sqrt{K})^2=\Theta(K)$ is
observed with probability $\Theta(1/K)$).  However, using the potential
for this trick
 is at the expense of changing the support of the
underlying random variables, which then will depend on the state. Nevertheless, as the support is not changed
too much, the Central Limit Theorem (CLT) still applies and we can approximate the progress made
within $T$ steps by a normally distributed random variable. This approximation is
made precise in the following lemma. See, Eq.~(27.16) in \cite{Billingsley1995}.

\begin{lemma}[Weak CLT with Lyapunov condition]
\label{lem:weak-clt}
Let $X_1,\dots,X_n$ be  a sequence of independent random variables, each with finite expected value $\mu_i$ and variance $\sigma_i^2$. Define
\[
s_n^2 := \sum_{i=1}^n \sigma_i^2
\text{\quad and\quad}
    C_n := \frac{1}{s_n^2}\sum_{i=1}^{n} (X_i-\mu_i).
		\]
If there exists  $\delta>0$ such that
\[
\lim_{n\to \infty} \frac{1}{s_n^{2+\delta}}\sum_{i=1}^{n} \expect{\lvert X_i - \mu_i\rvert^{2+\delta}} = 0
\]
(assuming all the moments of order $2+\delta$ to be defined),
then
$C_n$ converges in distribution to a standard normally distributed random variable,
more precisely for all $x\in \R$
\[
\lim_{n\to \infty} \prob{ C_n \le x} = \Phi(x),\]
 or, equivalently,
\[
\left\lvert\prob{ C_n \le x} - \Phi(x)\right\rvert=o(1),
\]
where $\Phi(x)$ is the cumulative distribution function of the standard normal distribution.
\end{lemma}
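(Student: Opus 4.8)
The statement is the classical Lyapunov central limit theorem, and my plan is to reduce it to the Lindeberg--Feller theorem, which I would then either cite or prove directly by characteristic functions. Before starting I would correct an evident slip in the normalization: to obtain a nondegenerate standard-normal limit one must divide by the standard deviation, i.\,e.\ $C_n := s_n^{-1}\sum_{i=1}^n (X_i-\mu_i)$, for which $\E{C_n}=0$ and $\Var(C_n)=1$; with the literal factor $s_n^{-2}$ the sum would collapse to the degenerate value~$0$. Throughout I would treat the normalized summands $Y_{n,i}:=(X_i-\mu_i)/s_n$ as a triangular array, since $s_n$ depends on~$n$.

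The first step is to show that the Lyapunov condition implies the Lindeberg condition, namely that for every fixed $\epsilon>0$ one has $s_n^{-2}\sum_{i=1}^n \E{(X_i-\mu_i)^2 \indic{\card{X_i-\mu_i}>\epsilon s_n}}\to 0$. This is immediate from a single pointwise estimate: on the event $\{\card{X_i-\mu_i}>\epsilon s_n\}$ we have $(X_i-\mu_i)^2\le \card{X_i-\mu_i}^{2+\delta}(\epsilon s_n)^{-\delta}$, so the whole sum is at most $\epsilon^{-\delta}s_n^{-(2+\delta)}\sum_i \E{\card{X_i-\mu_i}^{2+\delta}}$, which tends to~$0$ by hypothesis. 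Given the Lindeberg condition, the assertion $\card{\prob{C_n\le x}-\Phi(x)}=o(1)$ is exactly the Lindeberg--Feller theorem, which I would invoke (Eq.~(27.16) in \cite{Billingsley1995}); pointwise convergence to the continuous limit $\Phi$ upgrades to uniform convergence in~$x$ by P\'olya's theorem.

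If a self-contained argument is preferred, I would prove the Lindeberg--Feller step directly. Let $\psi_{n,i}(t):=\E{e^{\mathrm{i}tY_{n,i}}}$; by independence the characteristic function of $C_n$ is $\prod_{i=1}^n \psi_{n,i}(t)$. A second-order Taylor expansion of $e^{\mathrm{i}tY_{n,i}}$, using $\E{Y_{n,i}}=0$ and $\E{Y_{n,i}^2}=\sigma_i^2/s_n^2$, gives $\psi_{n,i}(t)=1-\tfrac{t^2\sigma_i^2}{2s_n^2}+r_{n,i}(t)$ with $\card{r_{n,i}(t)}\le \E{\min\{(tY_{n,i})^2,\card{tY_{n,i}}^3\}}\le \card{t}^{2+\delta}\E{\card{Y_{n,i}}^{2+\delta}}$, so that $\sum_i \card{r_{n,i}(t)}\to 0$ by the Lyapunov bound. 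Since $\sum_i \sigma_i^2/s_n^2=1$, the product then converges to $e^{-t^2/2}$, and L\'evy's continuity theorem delivers the standard-normal limit.

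The step I expect to be the genuine obstacle is converting $\sum_i(\psi_{n,i}(t)-1)\to -t^2/2$ into $\prod_i \psi_{n,i}(t)\to e^{-t^2/2}$: this requires the summands to be uniformly asymptotically negligible, i.\,e.\ $\max_i \sigma_i^2/s_n^2\to 0$. This negligibility is precisely what the Lyapunov condition buys (via a power-mean inequality, $\max_i \sigma_i^2/s_n^2\le (s_n^{-(2+\delta)}\sum_i\E{\card{X_i-\mu_i}^{2+\delta}})^{2/(2+\delta)}\to 0$), and it lets one apply the elementary product bound $\card{\prod z_i-\prod w_i}\le \sum_i \card{z_i-w_i}$ for complex numbers of modulus at most~$1$, together with $\card{e^{z}-1-z}=O(\card{z}^2)$, to make the per-term errors telescope away. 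Carrying out this negligibility-and-remainder estimate uniformly over the triangular array is the only delicate part; the moment inequality of the first step and the cited continuity theorem are routine.
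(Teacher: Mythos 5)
Your proposal is correct and takes essentially the same route as the paper, which offers no proof of this lemma at all but simply cites Eq.~(27.16) of \cite{Billingsley1995} -- exactly the reference you identify as your primary plan; your reduction of the Lyapunov condition to the Lindeberg condition and the backup characteristic-function argument are the standard, correct way to make that citation self-contained. Your observation that the normalization should read $C_n := s_n^{-1}\sum_{i=1}^n(X_i-\mu_i)$ rather than $s_n^{-2}\sum_{i=1}^n(X_i-\mu_i)$ is also right: the lemma statement contains a typo, as confirmed by the paper's own applications (in the proofs of Lemmas~\ref{lem:distribution-cga-self-loop} and~\ref{lem:distribution-mmas-self-loop}), which normalize by the standard deviation $s_i$, not by $s_i^2$.
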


We now turn to the formal proof.

\begin{proofof}{Lemma~\ref{lem:distribution-cga-self-loop}}
Throughout this proof, to ease notation we consider the scaled process on the state space $S:=\{0,1,\dots,K\}$
obtained by multiplying all marginal probabilities by~$K$; the random variables
$X_t=K p_{t}$ will live on this scaled space. Note that we also remove the borders ($K/n$ and $K-K/n$),
which is possible as all considerations are stopped when such a border is reached. For the same
reason, we only consider current states from $\{1,\dots,K-1\}$ in the remainder of this proof.

The first hitting time $T_s$
becomes only stochastically larger if we ignore all self-loops. Formally, recalling the trivial scaling
of the state space, we
consider the  fair random walk where $\prob{X_{t_i+1}=j-1}=\prob{X_{t_i+1}=j+1}=1/2$
if $X_{t_i}=j\in\{1,\dots,K-1\}$.
We write $Y_t=\sum_{i=0}^{t-1} \Delta_{t_i}$.
Clearly, $\Delta_i$ is uniform on $\{-1,1\}$, $\expect{\Delta_i\mid 0<X_{t_i}<K}=0$, $\Var(\Delta_i\mid 0<X_{t_i}<K)=1$
 and $Y_t$ is a sum of independent, identically distributed random variables. It is well
known that $(Y_t-\expect{Y_t})/\sqrt{\Var(Y_t)}$ converges in distribution to a standard normally
distributed random variable. However, we do not use this fact directly here.
Instead, to bound the deviation from the expectation, we use a classical Hoeffding bound. We assume
$s\ge 0$ now and will see that the case $s<0$ can be handled symmetrically.

Theorem~1.11
in~\cite{DoerrTools} yields, with $c_i=2$
as the size of the support of $\Delta_i$, that
\[
\prob{Y_{\alpha s^2K^2}\ge sK} \le e^{-(sK)^2 / (4\alpha s^2K^2)} = e^{-1/(4\alpha)}. \]
Moreover, according to Theorem~1.13 in \cite{DoerrTools}, the bound also holds
for all $k\le \alpha s^2 K^2$ together, more precisely,
\[
\prob{\exists k\le \alpha s^2 K^2\colon Y_k \ge sK} \le e^{-1/(4\alpha)}.
\]
Symmetrically, we obtain
\[
\prob{\exists k\le \alpha s^2 K^2\colon Y_k \le -sK} \le e^{-1/(4\alpha)}.
\]
Hence, distance that is
strictly smaller than $sK$ is bridged through $\alpha (sK)^2$ rw-steps
(or the process reaches a border before)
with probability at least~$1-e^{-1/(4\alpha)}$.

We are left with the first statement, where the stronger condition
$-1<s<0$ and $\card{s}=\Omega(1)$ is made. Here we will essentially use an approximation
of the accumulated state within $\alpha s^2K^2$ steps
by the normal distribution, but have to be careful to take into account
steps describing self-loops.
To analyze the hitting time $T_s$ for the $X_{t_i}$-process, we now define a potential
function $g\colon S\to \R$.
Unlike the typical applications of potential functions, the purpose of~$g$ is
not to establish a position-independent first-moment drift (in fact, there is no drift
within $S$ since the original process is a martingale)  but a (nearly)  position-independent
variance, \ie, the potential function is designed to analyze a second moment.


We proceed with the formal definition of the potential function, the analysis of its expected
first-moment change and the corresponding variance, and a proof that the Lyapunov condition
holds for the accumulated change within $\alpha s^2K^2$ steps.
 The potential function~$g$ is monotonically decreasing on $\{1,\dots,K/2\}$ and centrally symmetric around $K/2$.
We define it as follows:
let $g(K/2)=0$ and for $1\le i\le K/2-1$, let $g(i)-g(i+1) = \sqrt{2K/(i+1)}$; finally, let
$g(K-i)=-g(i)$.
Inductively, we have
\[
g(i) = -g(K-i) =
\sum_{j=i}^{K/2-1} \sqrt{2K/(j+1)}
\]
for $1\le i\le K/2$.
We note that $g(0)=O(K)$, more precisely it holds
\[g(0) = \sqrt{2K}(\sum_{j=1}^{K/2-1} \sqrt{1/(j+1)}) \le \sqrt{2K}(2\sqrt{K/2})=2K.\]
 More generally,
for $i<j\le K/2$, we get by the monotonicity of~$g$ that
\begin{equation}
g(i)-g(j) \le g(0)-g(j-i) = \sqrt{2K}\sum_{k=1}^{j-i}
\sqrt{1/k} \le 2\sqrt{2K}(\sqrt{j-i})
\label{eq:pot-stretch}
\end{equation}
 Informally,
the potential function stretches the whole state space by a factor of at most~$2$ but adjacent
states in the vicinity of borders can be by $2\sqrt{K}$ apart in potential.

Let $Y_t:=g(X_t)$. We consider the one-step differences
$\Psi_i:=Y_{t_i+1}-Y_{t_i}$  at the times
$i$ where rw-steps occur,   and we will show via the representation $Y_{t_i}:=\sum_{j=0}^{i-1} \Psi_j$ that $Y_{t_i}$ approaches
a normally distributed variable. Note that $Y_{t_i}$ is not necessarily
the same as $g(X_{t_i})-g(X_{t_0})$ since only the effect of rw-steps is covered by
$Y_{t_i}$.

In the following, we assume $1\le X_{t_i}\le K/2$ and note that the case $X_{t_i}>K/2$ can be handled
symmetrically with respect to $-\Psi_i$.
We claim that for all $i\ge 0$
\begin{align}
0 &\le \expect{\Psi_i\mid X_{t_i}} \le \sqrt{2/(X_{t_i}K)} \le o(1) \label{eq:expectpsiti},\\
1/4 & \le \Var(\Psi_i \mid X_{t_i}),
\end{align}
where all $O$-notation is with respect to~$K$.

The lower bound $\expect{\Psi_i\mid X_{t_i}}\ge 0$ is easy to see
since $X_{t_i}$ is a fair random walk and $g(j-1)-g(j) \ge
g(j)-g(j+1)$ holds for all $j\le K/2$. To prove the upper bound,
we note that $X_{t_i+1}\in\{X_{t_i}-1,X_{t_i},X_{t_i}+1\}$
so that
\[\expect{\Psi_i\mid X_{t_i}} =
\prob{X_{t_i+1}< X_{t_i}} (g(X_{t_{i}}-1)-g(X_{t_i}))
+ \prob{X_{t_i+1}> X_{t_i}} (g(X_{t_{i}}+1)-g(X_{t_i}) )
\]
Using the properties of rw-steps, we have that
$\prob{Y_{t_i+1}\neq Y_{t_i}} = 2\frac{(K-X_{t_i}) X_{t_i}}{K^2}$. Moreover,
on $Y_{t_i+1}\neq Y_{t_i}$, $Y_{t_i+1}$ takes each of the two values $g(X_{t_i}-1)$ and $g(X_{t_i}+1)$
with the same probability.
Hence
\begin{align*}
\expect{\Psi_i\mid X_{t_i}} =\;&
\frac{(K-X_{t_i}) X_{t_i}}{K^2} \left((g(X_{t_{i}}-1)-g(X_{t_i}))
+ (g(X_{t_{i}}+1)-g(X_{t_i}) )\right)\\
=\;&
\frac{(K-X_{t_i}) X_{t_i}}{K^2} \left((g(X_{t_{i}}-1)-g(X_{t_i}))
- (g(X_{t_i})- g(X_{t_{i}}+1))\right)\\
=\;&
\frac{(K-X_{t_i}) X_{t_i}}{K^2} \cdot \sqrt{2K}\left(\frac{1}{\sqrt{X_{t_i}}}-\frac{1}{\sqrt{X_{t_i}+1}}\right)\\
\le\;&
\frac{X_{t_i}}{K} \cdot \sqrt{2K}\left(\frac{1}{\sqrt{X_{t_i}}}-\frac{1}{\sqrt{X_{t_i}+1}}\right).
\end{align*}
We estimate the bracketed terms using
\begin{align*}
& \frac{1}{\sqrt{X_{t_i}}}-\frac{1}{\sqrt{X_{t_i}+1}}
 = \frac{\sqrt{X_{t_i}+1}-\sqrt{X_{t_i}}}{\sqrt{X_{t_i}}\sqrt{X_{t_i}+1}}
 \le \frac{1/(2\sqrt{X_{t_i}})}{X_{t_i}} \le \frac{1}{\left(X_{t_i}\right)^{3/2}},
\end{align*}
where the last inequality exploited that $f(x+h)-f(x)\le h f'(x)$ for any concave, differentiable
function $f$ and $h\ge 0$; here using $f(x)=\sqrt{x}$ and $h=1$.
Altogether, \[
\expect{\Psi_i\mid X_{t_i}} \le \frac{X_{t_i}}{K} \cdot \frac{\sqrt{2K}}{\left(X_{t_i}\right)^{3/2}} =
\frac{\sqrt{2} X_{t_i}}{\sqrt{K}\left(X_{t_i}\right)^{3/2}} \le \sqrt{\frac{2}{X_{t_i}K}},
\]
which proves \eqref{eq:expectpsiti} since $X_{t_i}\ge 1$ and $K=\omega(1)$.

To verify the bound on the variance, note that
\begin{align*} \Var(\Psi_{i} \mid X_{t_i}) &
\ge \expect{(\Psi_i - \expect{\Psi_{i}\mid X_{t_i}})^2 \cdot
\indic{\Psi_{i}\le 0}\mid X_{t_i} }\\
& \ge \expect{(\Psi_i )^2 \cdot
\indic{\Psi_{i}\le 0}\mid X_{t_i} }
\end{align*}
since $\expect{\Psi_i\mid X_{t_i}} \ge 0$. Now, as $0<X_{t_i}\le K/2$, we have
$\prob{Y_{t_i+1}<Y_{t_i}} = \frac{(K-X_{t_i}) X_{t_i}}{K^2} \ge \frac{X_{t_i}}{2K}$.
Moreover,
$Y_{t_i+1}<Y_{t_i}$ implies that $X_{t_i+1}=X_{t_i}+1$ since $g$ is monotone decreasing
on $\{1,\dots,K/2\}$
and the $X_{t_i}$-value can change by either $-1$, $0$, or $1$. Hence,
if $Y_{t_i+1}<Y_{t_i}$ then $Y_{t_i+1}-Y_{t_i} = g(X_{t_i}+1) - g(X_{t_i}) = - \sqrt{2K/(X_{t_i}+1)}$.
Altogether,
\[
\Var(\Psi_i \mid X_{t_i}) \ge \frac{X_{t_i}}{2K} \cdot \left(- \sqrt{2K/(X_{t_i}+1)}\right)^2 \ge 1/4,
\]
where we used $X_{t_i}/(X_{t_i}+1)\ge 1/2$.
This proves the lower bound on the variance.

We are almost ready to prove that $Y_{t_i}:=\sum_{j=0}^{i-1} \Psi_j$ can be approximated by a normally distributed
random variable for sufficiently large~$t$. We denote by $s_i^2 := \sum_{j=0}^{i-1} \Var(\Psi_j\mid X_{t_j})$ and note
that $s_i^2 \ge i/4$ by our analysis of variance  from above. The so-called Lyapunov condition, which is
sufficient for convergence
to the normal distribution (see Lemma~\ref{lem:weak-clt}),
requires the existence of some $\delta>0$ such that
\[
\lim_{i\to \infty} \frac{1}{s_i^{2+\delta}}\sum_{j=0}^{i-1} \expect{\lvert\Psi_j - \expect{\Psi_j\mid X_{t_j}}\rvert^{2+\delta}\mid X_{t_j}} = 0.
\]
We will show that the condition is satisfied for $\delta=1$ (smaller values could be used but
do not give any benefit) and $i=\omega(K)$ (which, as $i=\alpha s^2K^2$, holds due to our
assumptions $1/\alpha=o(K)$ and $\card{s}=\Omega(1)$).
We argue that
\[\card{\Psi_i-\expect{\Psi_i\mid X_{t_i}}} \le \card{\Psi_i}+\card{\expect{\Psi_i\mid X_{t_i}}}
 \le \card{\max\{k\mid \prob{\card{\Psi_i}\ge k\mid X_{t_i}}>0\}} + o(1),\] where
we have used the bound on $\card{\expect{\Psi_i\mid X_{t_i}}}$ from \eqref{eq:expectpsiti}. As the $X_{t_i}$-value can only
change by $\{-1,0,1\}$, we get, by summing up all possible changes of the $g$-value, that
\begin{align*}
\card{\Psi_i-\expect{\Psi_i\mid X_{t_i}}} \le\;& (g(X_{t_i}-1)-g(X_{t_i})) + (g(X_{t_i})-g(X_{t_i}+1)) + o(1)\\
\le\;&
g(X_{t_i}-1) - g(X_{t_i}+1) + o(1)\\
\le\;&
\left(2\cdot \sqrt{2K/(X_{t_i}-1)}\right) + o(1)
\end{align*}
for
$K$ large enough.

Hence, plugging this in the Lyapunov condition,
\[
\expect{\lvert\Psi_j - \expect{\Psi_j\mid X_{t_j}}\rvert^{3}\mid X_{t_j}} \le
 \frac{2X_{t_j}}{K}  \left(2\cdot \sqrt{2K/(X_{t_j}-1)}\right)^3 (1 + o(1))+o(1)= O(\sqrt{K}),
\]
implying that
\[
\frac{1}{s_i^{3}}\sum_{j=0}^{i-1} \expect{\lvert\Psi_j - \expect{\Psi_j}\rvert^{3}\mid X_{t_j}}
\le \frac{1}{(i/4)^{1.5}} O(i\sqrt{K}) = O(\sqrt{K/i}),
\]
which goes to $0$ as $i=\omega(\sqrt{K})$. Hence, for the value $i:=\alpha s^2 K^2$ considered in the lemma
we obtain that $\frac{Y_{t_i}-\expect{Y_{t_i}\mid X_0}}{s_i}$ converges in distribution
to  $N(0,1)$. Note that $s_i^2 \ge \alpha s^2 K^2/4$ by our analysis of variance and therefore
$s_i\ge \sqrt{\alpha}\card{s}K/2$. We have to be careful when computing $\expect{Y_{t_i}}$ since
$\expect{\Psi_i\mid X_{t_i}}$ is negative for $X_{t_i}>K/2$. Note, however, that
considerations are stopped when the marginal probability exceeds~$5/6$, \ie, when
$X_{t_i}>5K/6$. Using \eqref{eq:expectpsiti}, we hence have that $\expect{\Psi_i\mid X_{t_i}}\ge -\sqrt{2/(5K^2/6)}
\ge -1.55/K$. Therefore, $\expect{Y_{t_i}} \ge i\cdot (-1.55/K) = -1.55\alpha s^2 K$ and  $\expect{Y_{t_i}/s_i} \ge
-3.1\card{s}\sqrt{\alpha}$.

Hence, using the approximation by the normal distribution
and taking into account the scaling and shifting, we have that
\begin{align}
& \prob{Y_{t_i}\ge  rK} \notag\\
& \; \ge (1- o(1))(1-\Phi(rK/s_i-\expect{Y_{t_i}/s_i})) = (1- o(1))(1-\Phi(r/(\card{s}\sqrt{\alpha/4})+3.1\card{s}\sqrt{\alpha}))
\label{eq:prob-normalized-cga}
\end{align}
for any $r$ leading to a positive argument of~$\Phi$,
where $\Phi$ denotes the cumulative distribution function of the standard
normal distribution.

Recall that our aim is to bound $\sum_{j=0}^{i-1} \Delta_j = X_{t_i}-X_0$.  To this end,
we look into the event that $Y_{t_i}\ge 3\sqrt{\card{s}}K$ (noting that $s<0$) and
study $Y_{t_i}-g(X_{t_i})<0$, which reflects the accumulated effect of b-steps
on the potential function until time~$t_i$ (recall that a b-step increases the $X_t$-value and
decreases the $g(X_t)$-value. Given $X_t=x$ and assuming a b-step at time~$t$,
we have $X_{t+1}>x$ with probability at most~$x/K$. Hence, $g(X_{t+1})-g(x)
\ge -\frac{x}{K} \frac{2\sqrt{K}}{\sqrt{x}} \ge -2$ since $x\le K/2$. Let
$B:=\{j\le t_i\mid j\notin\{t_1,\dots,t_i\}\}$ be the indices of the b-steps until time~$t_i$.
Since by assumption only $\card{s}K/4$ b-steps occur until time~$t_i$, we get
$\sum_{j\in B}\expect{ g(X_{j+1})-g(X_j)\mid X_j } \ge -\card{s}K/2$, and,
with probability at least~$1/2$, $\sum_{j\in B} g(X_{j+1})-g(X_j)  \ge -\card{s}K$
using Markov's inequality (noting that all terms are non-positive, so Markov's inequality can be applied
to the negative of the random variable). We assume this to happen, which accounts for the  factor
$1/2$ in the statement of the lemma. Thus, by combining the effect of the
rw-steps with the b-steps
we obtain $g(X_{t_i})-g(X_0)\ge 3\sqrt{\card{s}}K-\card{s}K \ge 3\sqrt{\card{s}}K-\sqrt{\card{s}}K \ge 2\sqrt{\card{s}}K$.

Finally, from \eqref{eq:pot-stretch}, we know
that $g(X_{t_i})-g(X_0)\ge 2\sqrt{\card{s}}K $ implies that $X_{t_i}-X_0\le sK<0$, hence
clearly $\sum_{j=0}^i (X_{t_j+1}-X_{t_j})\le sK$. By \eqref{eq:prob-normalized-cga} and Lemma~\ref{lem:bound-cdf-normal} (in the appendix),
to bound $\prob{Y_{t_i}\ge 3\sqrt{\card{s}}K}$ from below, we compute
\[
\frac{3\sqrt{\card{s}}}{\card{s}\sqrt{\alpha/4}}+3.1\card{s}\sqrt{\alpha}
\le \frac{13}{\sqrt{\card{s}\alpha}}
\]
using $\card{s}\le 1$ and $\alpha\le 1$,
and get
\[
\left(\frac{1}{13\sqrt{1/(\card{s}\alpha)}}-\frac{1}{(13\sqrt{1/(\card{s}\alpha)})^{3}}\right)\frac{1}{\sqrt{2\pi}}e^{-169/(2\card{s}\alpha)}
=: p(\alpha,s).
\]
This means
 that distance $sK$ (in negative direction) is bridged by the rw-steps before or at time $t_i$, where $i=\alpha s^2K^2$,
 with probability at least~$(1/2-o(1))p(\alpha,s)$, where the factor~$1/2$ comes from the application
of Markov's inequality. Undoing the scaling of the state space,
this corresponds to an accumulated change of the actual state
of \cga in rw-steps by $s$; more formally, $\left(\sum_{i=0}^{t} \Delta_{i}\right) \le s$
in terms of the original state space.
This establishes also the first statement of the lemma and completes the proof.
\end{proofof}

Lemma~\ref{lem:distribution-cga-self-loop} requires a bounded number of b-steps. To establish this, we first show that, during the early stages of a run, the probability of a b-step is only $O(1/\sqrt{n})$. Intuitively, during early stages of the run many bits will have marginal probabilities in the interval $[1/6, 5/6]$. Then the standard sampling deviation of the \onemax-value is of order $\Theta(\sqrt{n})$, and the probability of a b-step is $1-\prob{R_t} = O(1/\sqrt{n})$.
The link between $1-\prob{R_t}$ and the standard deviation already appeared
in Lemma~\ref{lem:drift-for-bit-in-cga} above; roughly, it says that every step is a b-step for bit~$i$ with probability at least
$(\sum_{j \neq i} p_j(1-p_j))^{-1/2}$, which is the reciprocal of the standard deviation in terms
of the other bits.

The following lemmas, most notably Lemma~\ref{lem:Poisson-mode} and Lemma~\ref{lem:prob-of-Bernoulli-step},
represent a kind of counterpart of Lemma~\ref{lem:drift-for-bit-in-cga}, but here we seek an \emph{upper} bound on $1-\prob{R_t}$.
The analysis in Lemma~\ref{lem:Poisson-mode} is non-trivial and uses advanced lemmas on properties of
the binomial distribution, including Schur-convexity. Lemma~\ref{lem:prob-of-Bernoulli-step} then applies the general
 Lemma~\ref{lem:Poisson-mode} to bound the probability of a b-step.

\begin{lemma}
\label{lem:Poisson-mode}
Let $S$ be the sum of $m$ independent Poisson trials with probabilities $p_1, \dots, p_m$ such that $1/6 \le p_i \le 5/6$ for all~$1 \le i \le m$. Then we have that for all $0 \le s \le m$,
\[
\Pr(S = s) = O(1/\sqrt{m}).
\]
\end{lemma}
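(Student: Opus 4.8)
The goal is a uniform bound on the point probabilities (the mode) of the Poisson-binomial law of $S$. The plan is to first record that the variance is of the right order and then reduce the general case to a clean extremal configuration. Since $1/6\le p_i\le 5/6$ gives $p_i(1-p_i)\ge 5/36$, we have $\Var(S)=\sum_{i=1}^m p_i(1-p_i)\ge 5m/36=\Theta(m)$, so the target $O(1/\sqrt m)$ is exactly $O(1/\sqrt{\Var(S)})$, the order one expects for the mode of a distribution with standard deviation $\Theta(\sqrt m)$.

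To reduce to an extremal configuration, fix a coordinate $i$ and write $S=B_i+T_i$, where $B_i$ is the $i$-th Bernoulli variable and $T_i=\sum_{j\ne i}B_j$ is independent of it. Then for every fixed $s$, $\Pr(S=s)=(1-p_i)\Pr(T_i=s)+p_i\Pr(T_i=s-1)$ is \emph{affine} in $p_i$. Hence $p_i\mapsto \max_s \Pr(S=s)$, being a pointwise maximum of affine functions, is \emph{convex} in $p_i$, and therefore attains its maximum over $[1/6,5/6]$ at one of the two endpoints. Pushing each coordinate to an endpoint in turn (which never decreases the quantity and leaves convexity in the remaining coordinates intact), it suffices to bound $\max_s\Pr(S=s)$ when every $p_i\in\{1/6,5/6\}$.

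In such a corner configuration, say with $k$ probabilities equal to $5/6$ and $m-k$ equal to $1/6$, we write $S=B+B'$ with $B\sim\mathrm{Bin}(k,5/6)$ and $B'\sim\mathrm{Bin}(m-k,1/6)$ independent. The convolution bound $\Pr(S=s)=\sum_j\Pr(B=j)\Pr(B'=s-j)\le \max_j\Pr(B=j)$, and symmetrically $\le\max_j\Pr(B'=j)$, gives $\max_s\Pr(S=s)\le\min\{\max_j\Pr(B=j),\,\max_j\Pr(B'=j)\}$. Since $\max(k,m-k)\ge m/2$, it remains to invoke the standard mode estimate $\max_j\Pr(\mathrm{Bin}(N,p)=j)=O(1/\sqrt{Np(1-p)})$ (via Stirling's formula) for the binomial having at least $m/2$ trials and constant $p(1-p)=5/36$; this yields $O(1/\sqrt m)$ and, because the $\min$ automatically discards the degenerate factor, covers the cases $k\in\{0,m\}$ as well.

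The main obstacle is getting the reduction step right: one must ensure the relevant object is the \emph{maximum} over $s$, not a single fixed $\Pr(S=s)$, so that the convexity-of-a-max argument applies uniformly; the convolution inequality then conveniently removes any need to actually locate the mode. An alternative route that avoids the reduction entirely is a characteristic-function estimate: writing $\Pr(S=s)=\frac{1}{2\pi}\int_{-\pi}^{\pi}\prod_{j=1}^m\bigl(1+p_j(e^{it}-1)\bigr)e^{-ist}\,dt$ and bounding $|1+p_j(e^{it}-1)|\le e^{-p_j(1-p_j)(1-\cos t)}\le e^{-\frac{5}{36}(1-\cos t)}$ reduces the claim to $\int_{-\pi}^{\pi}e^{-\frac{5}{36}m(1-\cos t)}\,dt=O(1/\sqrt m)$, using $1-\cos t=\Theta(t^2)$ near $0$. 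Either way the Schur-convexity/extremal-binomial viewpoint is the conceptual heart: among Poisson-binomial laws subject to the box constraint, the point masses are largest at the binomial-type corners, where the classical $\Theta(1/\sqrt m)$ mode bound applies.
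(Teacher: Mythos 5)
Your proof is correct, and it reaches the same extremal-binomial endpoint as the paper by a genuinely different and arguably more elementary route. The paper first invokes Samuels' unimodality result to locate the mode near $\E{S}$, widens to the interval probability $\Pr(\lfloor \E{S}\rfloor - 2 \le S \le \lceil \E{S}\rceil + 1)$, and then applies Gleser's Schur-convexity of such interval probabilities to push the $p_i$ to the box boundary while preserving the mean (leaving one coordinate possibly interior); it then conditions on the $5/6$-bits via deferred decisions to reduce to the mode of a single $\mathrm{Bin}(k,1/6)$. You instead observe that $\Pr(S=s)$ is affine in each $p_i$, so $\max_s \Pr(S=s)$ is convex coordinate-wise and is maximized at a corner of the box --- a reduction that needs neither Samuels nor Gleser and does not need to preserve the mean, since the claim is uniform in $s$. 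Your convolution inequality $\max_s\Pr(B+B'=s)\le\min\{\max_j\Pr(B=j),\max_j\Pr(B'=j)\}$ then replaces the paper's deferred-decisions step and its factor-of-$5$ union bound, and correctly handles the degenerate cases $k\in\{0,m\}$. Both arguments terminate in the same standard binomial mode estimate (which the paper proves separately as Lemma~\ref{lem:mode-of-Binomial-dist}). Your characteristic-function alternative, bounding $\lvert 1+p_j(e^{it}-1)\rvert\le e^{-p_j(1-p_j)(1-\cos t)}$ and integrating, is also correct and fully self-contained; it is the cleanest of the three routes, at the cost of being less in the combinatorial spirit of the surrounding analysis. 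The one point worth making explicit in a final write-up is the justification of the iterated corner-pushing: after fixing $p_1$ at an endpoint, the quantity remains a pointwise maximum of functions affine in $p_2$, so convexity in the remaining coordinates is indeed preserved, exactly as you assert.
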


\begin{proof}
Samuels~\cite{Samuels1965} showed that $\Pr(S = s)$ is nondecreasing in $s \le \E{S}$ and nonincreasing in $s \ge \E{S}$, hence it is maximal for $s \in \{\lfloor \E{S}\rfloor, \lceil \E{S}\rceil\}$.
Hence for every $0 \le s \le m$,
\begin{align*}
\Pr(S = s) \le\;& \max\{\Pr(S = \lfloor \E{S}\rfloor), \Pr(S = \lceil \E{S}\rceil)\}\\
\le\;& \Pr(\lfloor \E{S} \rfloor - 2 \le S \le \lceil \E{S} \rceil + 1).
\end{align*}
As remarked in~\cite[page~496]{MarshallInequalities}, the above is Schur-convex in $p_1, \dots, p_m$; this statement goes back to Gleser~\cite{Gleser1975}.
Hence the above probability is maximized if the vector of probabilities $(p_1,\dots,p_m)$ is a maximal element w.\,r.\,t.\ the preorder of majorization, i.\,e.\ for a fixed sum of $\E{S} = \sum_{i=1}^m p_i$, all probabilities $p_i$ are at their respective borders: $p_i \in \{1/6, 5/6\}$, except for potentially one probability.

Let $p_1', \dots, p_m'$ denote such a best case distribution, i.\,e. for $S' := \sum_{i=1}^m p_i'$ we have $\E{S'} = \E{S}$ and $p_1' = \dots = p_k' = 1/6$, $p_{k+1}' = \dots = p_{m-1}' = 5/6$ and $p_m' \in [1/6, 5/6]$.
Then
\begin{align*}
\Pr(S = s) \le\;& \Pr(\lfloor \E{S} \rfloor - 2 \le S \le \lceil \E{S} \rceil + 1)\\
\le\;& \Pr(\lfloor \E{S'} \rfloor - 2 \le S' \le \lceil \E{S'} \rceil + 1)
\end{align*}
Now assume that $k \ge (m-1)/2$. We apply the principle of deferred decisions and assume that the values of bits $k+1, \dots, m$ are known. Let $Y_k := \sum_{i=1}^k y_i$ be the number of ones on the $k$ bits having a marginal probability of $1/6$. Note that there is at most one value of $Y_k$ that leads to a particular value of $S'$. Bounding all such probabilities by the probability of the mode, $\max_y \Pr(Y_k = y)$, we get
\[
\Pr(\lfloor \E{S'} \rfloor - 2 \le S' \le \lceil \E{S'} \rceil + 1)
\le 5 \max_{y} \Pr(Y_k = y).
\]
Since $Y_k$ follows a Binomial distribution with parameters $k$ and $1/6$, its mode is either $\lfloor k/6 \rfloor$ or $\lceil k/6 \rceil$.
Using bounds on binomial coefficients (Corollary~2.3 in \cite{Stanica2001} for $m=1$), it is easy to show (see Lemma~\ref{lem:mode-of-Binomial-dist} below) that $\Pr(Y_k = \lfloor k/6 \rfloor)$ and $\Pr(Y_k = \lceil k/6 \rceil)$ are both bounded by $O(1/\sqrt{k})$, hence
$\max_y \Pr(Y_k = y) = O(1/\sqrt{k}) = O(1/\sqrt{n})$, and
\[
\Pr(S = s) \le 5 \max_{y} \Pr(Y_k = y) = O(1/\sqrt{n}).
\]
The case $k < (m-1)/2$ is symmetric; we then consider the Binomial distribution over bits $k+1, \dots, m-1$, of which there are at least $m-1-k \ge (m-1)/2$ many.
\end{proof}

The following lemma is used to bound the mode of the binomial distribution in the proof of Lemma~\ref{lem:Poisson-mode} above.
\begin{lemma}
\label{lem:mode-of-Binomial-dist}
Let $X\sim \mathrm{Bin}(n,p)$ for some $0<p<1$. If $np$ is an integer then
\[
\prob{X=np} \le \frac{1}{\sqrt{2\pi np(1-p)} }.
\]
Otherwise,
\[
\prob{X=\lceil np\rceil} \le \frac{e}{\sqrt{2\pi np(1+a)(1-p(1+a))} }
\]
for $a = \frac{\lceil np\rceil}{np}-1 \le 1/(np)$, and
\[
\prob{X=\lfloor np\rfloor} \le \frac{e}{\sqrt{2\pi np(1-a)(1-p(1-a))} }
\]
for $a = 1-\frac{\lfloor np\rfloor}{np} \le 1/(np)$.
\end{lemma}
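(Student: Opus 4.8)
The plan is to evaluate $\prob{X=k}=\binom{n}{k}p^{k}(1-p)^{n-k}$ for the three candidate modes and to control $\binom{n}{k}$ through a sharp Stirling estimate. I would invoke the upper bound of Corollary~2.3 in~\cite{Stanica2001} specialized to $m=1$, namely
\[
\binom{n}{k}\le\frac{1}{\sqrt{2\pi}}\sqrt{\frac{n}{k(n-k)}}\cdot\frac{n^{n}}{k^{k}(n-k)^{n-k}},
\]
where the Stirling remainder is absorbed into the constant because $\lambda_{n}\le\lambda_{k}+\lambda_{n-k}$ for the usual remainders $\lambda_{m}\in(\tfrac{1}{12m+1},\tfrac{1}{12m})$; indeed $\lambda_{k}>\lambda_{n}$ already for $k\le n/2$, and the expression is symmetric in $k\leftrightarrow n-k$. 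Multiplying by $p^{k}(1-p)^{n-k}$ and collecting powers yields the master inequality
\[
\prob{X=k}\le\frac{1}{\sqrt{2\pi}}\sqrt{\frac{n}{k(n-k)}}\left(\frac{np}{k}\right)^{k}\left(\frac{n(1-p)}{n-k}\right)^{n-k},
\]
whose last two factors equal $e^{-nD(k/n\,\|\,p)}$ with $D\ge0$ the Kullback--Leibler divergence.

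For the integer mode $k=np$ the last two factors are exactly $1$ and $\sqrt{n/(k(n-k))}=1/\sqrt{np(1-p)}$, which gives the first claim with no spurious constant. For the non-integer cases I would substitute $k=\lceil np\rceil=np(1+a)$, respectively $k=\lfloor np\rfloor=np(1-a)$, with $a\le 1/(np)$. The denominator then rewrites \emph{exactly}, since $1-p(1\pm a)=(n-k)/n$, as $k(n-k)/n=np(1\pm a)(1-p(1\pm a))$, which is precisely the form appearing in the statement. It remains to bound the product of the two KL-type factors by $e$: in each case the factor whose base lies below~$1$ is at most~$1$, while the other has base above~$1$ and is bounded by $e$ via $(1+x/N)^{N}\le e^{x}$. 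For instance, in the ceiling case $n-k=n(1-p)-np\,a$, so $\bigl(n(1-p)/(n-k)\bigr)^{n-k}=\bigl(1+np\,a/(n-k)\bigr)^{n-k}\le e^{np\,a}\le e$ because $np\,a=\lceil np\rceil-np<1$; the floor case is symmetric, there bounding $\bigl(np/k\bigr)^{k}\le e$.

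The main obstacle is the careful bookkeeping of the two correction terms. First one must verify $\lambda_{n}\le\lambda_{k}+\lambda_{n-k}$ so that the constant-free binomial bound holds; this is exactly what lets the integer case avoid the factor~$e$. Second, in the non-integer cases one must split the product $(np/k)^{k}(n(1-p)/(n-k))^{n-k}$ into a factor $\le1$ and a factor $\le e$, rather than using the cruder $e^{-nD}\le1$, since only the former preserves the exact denominator $np(1\pm a)(1-p(1\pm a))$ demanded by the statement. The remaining work---expanding the powers and applying the identity $1-p(1\pm a)=(n-k)/n$---is elementary.
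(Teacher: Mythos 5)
Your proposal is correct and follows essentially the same route as the paper: both apply the binomial-coefficient bound of Corollary~2.3 in \cite{Stanica2001} (for $m=1$), obtain the exact prefactor $1/\sqrt{2\pi np(1\pm a)(1-p(1\pm a))}$ from the substitution $k=np(1\pm a)$, and in the non-integer cases split the remaining product into one factor bounded by $1$ and one bounded by $e$ via $1+x\le e^x$ together with $npa<1$. The only cosmetic differences are your KL-divergence phrasing of the master inequality and the aside on Stirling remainders, which the paper handles simply by citing the corollary.
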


\begin{proof}
We start with the integral case.
By definition,
\[
\prob{X=k} = \binom{n}{k} p^{k} (1-p)^{n-k}
\]
We use the following bound on the binomial coefficient (Corollary~2.3 in \cite{Stanica2001} for $m=1$):
\[
\binom{n}{\alpha n} < \frac{1}{\sqrt{2\pi n \alpha(1-\alpha)} \alpha^{\alpha n} (1-\alpha)^{n-\alpha n}}.
\]
Plugging this in the formula for $\prob{X=k}$ with $k=np$ and $\alpha=p$, we get the desired result if $np$ is integer.

If $np$ is not an integer, then we write $\lceil np\rceil = np(1+a)$ for some $a\le 1/(np)$. Following the
same approach with $k=np(1+a)$ and $\alpha = p(1+a)$, we get
\begin{align*}
& \prob{X=\lceil np\rceil}\\
& \le \frac{p^{np(1+a)} (1-p)^{n-np(1+a)}}{\sqrt{2\pi n p(1+a)(1-p(1+a))} ((1+a)p)^{np(1+a)} (1-p(1+a))^{n-np(1+a)}} \\
& \le
 \frac{1}{\sqrt{2\pi n p(1+a)(1-p(1+a))}} \left(1+\frac{pa}{1-p(1+a)}\right)^{n-np(1+a)} \\
& \le \frac{1}{\sqrt{2\pi n p(1+a)(1-p(1+a))}} e^{\frac{pa}{1-p(1+a)} (n-np(1+a))} \le
\frac{e}{\sqrt{2\pi n p(1+a)(1-p(1+a))}},
\end{align*}
where the second inequality bounded $(p/(p(1+a)))^{np(1+a)}\le 1$, the third used $1+x\le e^x$ and the fourth
$pan\le 1$. The bound for $\prob{X=\lfloor np\rfloor}$ is proved analogously, with $1-a$ taking the role of $1+a$ and
the roles of $p$ and $1-p$ swapped.
\end{proof}

We remark that if $p=1/2$ and $np$ integer, we recover from the previous lemma the following well-known bound on the central binomial coefficient: $\binom{n}{n/2} \le  2^{n+1/2}/(\sqrt{\pi n})$.

\begin{lemma}
\label{lem:prob-of-Bernoulli-step}
Assume that at time~$t$ there are $\gamma n$ bits for some constant~$\gamma > 0$ bits whose marginal probabilities are within $[1/6, 5/6]$. Then the probability of having a b-step on any fixed bit position is
\[
1 - \prob{R_t} = O(1/\sqrt{n}),
\]
regardless of the decisions made in this step on all other $n-\gamma n-1$ bits.
\end{lemma}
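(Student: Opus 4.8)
The plan is to reduce the b-step probability to a single point probability of a sum of independent Bernoulli trials, which is then controlled by Lemma~\ref{lem:Poisson-mode}. Recall from Section~\ref{sec:superposition} that for \cga a b-step occurs exactly when $D_t \in \{-1,0\}$, so that $1 - \prob{R_t} = \prob{D_t = -1} + \prob{D_t = 0}$; moreover $D_t = \card{x} - x_i - (\card{y} - y_i) = \sum_{j \neq i}(x_j - y_j)$ depends only on the bits $j \neq i$. I would let $G$ be the set of those bits among $\{1,\dots,n\} \setminus \{i\}$ whose marginal probability lies in $[1/6, 5/6]$; by assumption $\card{G} \ge \gamma n - 1 = \Omega(n)$, regardless of whether bit~$i$ itself is a good bit. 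Write $B := \{1,\dots,n\} \setminus (\{i\} \cup G)$ for the remaining (``other'') bits.

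First I would split $D_t = \sum_{j \in G}(x_j - y_j) + \sum_{j \in B}(x_j - y_j)$ and argue conditionally on the outcomes of all bits in $B$, which fixes the second sum to a constant $d_B$; this conditioning is exactly the ``regardless of the decisions on the other bits'' part of the claim. It then remains to bound $\prob{\sum_{j \in G}(x_j - y_j) \in \{-1 - d_B, -d_B\}}$ uniformly in $d_B$.

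Next I would apply the principle of deferred decisions to linearize the difference. Since $x$ and $y$ are sampled independently, revealing the entire offspring $y$ (together with the already-fixed bits of $x$ in $B$) leaves only $X_G := \sum_{j \in G} x_j$ random, and $D_t$ becomes $X_G$ shifted by a determined constant $c$. Hence the b-step probability is at most $\prob{X_G \in \{c-1, c\}} \le 2 \max_s \prob{X_G = s}$. As $X_G$ is a sum of $\card{G} = \Omega(n)$ independent Bernoulli trials with success probabilities in $[1/6, 5/6]$, Lemma~\ref{lem:Poisson-mode} yields $\max_s \prob{X_G = s} = O(1/\sqrt{\card{G}}) = O(1/\sqrt{n})$. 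Averaging this uniform conditional bound over $y$ and over the outcomes on $B$ then completes the proof.

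The real work is done by Lemma~\ref{lem:Poisson-mode}; the only genuinely delicate point is recognizing that $D_t$ is a \emph{difference} of two Bernoulli sums rather than a Bernoulli sum itself, so the anti-concentration lemma cannot be applied to $D_t$ directly. The deferred-decisions step that reveals $y$ first is precisely what converts the two-sided difference into a single shifted Bernoulli sum to which the lemma applies; everything else (the $\Omega(n)$ count of good bits, the at-most-two relevant values of the shifted sum) is routine.
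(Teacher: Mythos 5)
Your proposal is correct and follows essentially the same route as the paper's proof: both reduce the b-step event to $D_t \in \{-1,0\}$, use deferred decisions to freeze one offspring and all non-good bits so that only a sum of $\Omega(n)$ independent Bernoulli trials with success probabilities in $[1/6,5/6]$ remains random, and then invoke Lemma~\ref{lem:Poisson-mode} together with a union bound over the at most two relevant values. The only (immaterial) difference is that you reveal $y$ and keep the good bits of $x$ random, whereas the paper does the reverse; your explicit handling of whether bit~$i$ itself is among the $\gamma n$ good bits is slightly more careful than the paper's.
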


\begin{proof}
We know from our earlier discussion that a b-step at bit~$i$ requires $D_t \in \{-1, 0\}$ where $D_t := |x| - |x_{i}| - (|y| - |y_{i}|)$ is the change of the \onemax-value at bits other than~$i$ in the two solutions~$x$ and~$y$ sampled at time~$t$.

We apply the principle of deferred decisions and fix all decisions for creating $x$ as well as decisions for $y$ on all but the $m := \gamma n$ selected bits with marginal probabilities in $[1/6, 5/6]$. Let $p_1, p_2, \dots, p_{m}$ denote the corresponding marginal probabilities after renumbering these bits, and let $S$ denote the random number of these bits set to~1. Note that there are at most $2$ values for $S$ which lead to the algorithm making a b-step.

Since $S$ is determined by a Poisson trial with success probabilities $p_1, \dots, p_{m}$, Lemma~\ref{lem:Poisson-mode} implies that the probability of $S$ attaining any particular value is $O(1/\sqrt{m}) = O(1/\sqrt{n})$. Taking the union bound over 2 values proves the claim.
\end{proof}


Even though one main aim is to show that rw-steps make certain marginal probabilities reach their lower border, we will also
ensure that with high probability, $\Omega(n)$ marginal probabilities do not move by too much, resulting in a large sampling variance and a small
probability of b-steps. The following lemma serves this purpose. Its proof is a straightforward application of Hoeffding's
inequality since it is pessimistic here to ignore the self-loops.
\begin{lemma}
\label{lem:displacement-rw}
For any bit, with probability $\Omega(1)$ for any $t \le \kappa K^2$, $\kappa > 0$ a small enough constant, the first $t$ rw-steps lead to a total change of the bit's marginal probability within $[-1/6, 1/6]$. This fact holds independently of all other bits.

The probability that the above holds for less than $\gamma n$ bits amongst the first $n/2$ bits is $2^{-\Omega(n)}$, regardless of the decisions made on the last $n/2$ bits.
\end{lemma}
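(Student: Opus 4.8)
The plan is to show that the contribution of rw-steps to a single bit's marginal probability is, after a pessimistic simplification, a fair random walk with step size~$1/K$, to control it by a maximal Hoeffding bound, and then to argue that these walks may be taken mutually independent across bits.

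First I would justify ignoring self-loops. By Section~\ref{sec:superposition}, in an rw-step the change~$F_t$ is~$0$ unless the two offspring differ in bit~$i$, in which case it is~$\pm 1/K$; moreover, conditioned on a differing sample the sign is $+$ or $-$ with probability~$1/2$ each, since the offspring differ symmetrically in bit~$i$ and, in an rw-step, the swap in line~8 is independent of bit~$i$. Hence the accumulated rw-change is a \emph{lazy} symmetric walk on multiples of~$1/K$. Coupling it to the fair walk $\tilde W_k:=\sum_{j=1}^k \zeta_j/K$, where $\zeta_j$ is the (fair) direction of the lazy walk's $j$-th actual move (padding with fresh independent fair signs if fewer than~$k$ moves occur), the lazy position after~$m$ rw-steps equals $\tilde W_{c(m)}$ for the move count $c(m)\le m$, so $\max_{m\le M}\lvert\text{lazy}_m\rvert\le\max_{k\le M}\lvert\tilde W_k\rvert$. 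This makes the self-loop-free walk~$\tilde W$ a valid pessimistic upper bound, as announced.

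For the first statement it then suffices to control~$\tilde W$. Its increments lie in $\{-1/K,+1/K\}$ with mean~$0$, so over $M:=\kappa K^2$ steps the Hoeffding parameter is $\sum_{k\le M}(2/K)^2=4\kappa$, and the maximal form of Hoeffding's inequality (Theorem~1.13 in~\cite{DoerrTools}, already invoked for Lemma~\ref{lem:distribution-cga-self-loop}) gives $\prob{\exists k\le M\colon \tilde W_k\ge 1/6}\le e^{-\Omega(1/\kappa)}$, and symmetrically for the lower tail. For a small enough constant~$\kappa$ the two tails together have probability bounded by a constant strictly below~$1$, whence $\max_{m\le\kappa K^2}\lvert\text{lazy}_m\rvert\le 1/6$ with probability~$\Omega(1)$. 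As the signs of~$\tilde W$ are determined by bit~$i$'s own samples, this bound is unaffected by the other bits.

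The second statement requires the walks~$\tilde W^{(i)}$ to be independent across bits, and this is the one genuinely delicate point: the swap decision and the rw/b-classification are shared by all bits, so a priori the walks are correlated. I would resolve this by conditioning on the external information~$\mathcal{E}$ consisting of all swap decisions and all rw-step classifications. Given~$\mathcal{E}$, the times of each bit's rw-steps are fixed, and at such a time the move direction of bit~$i$ depends on bit~$i$ only through whether $(x_i,y_i)=(1,0)$ or $(0,1)$; conditioning on~$\mathcal{E}$ does not bias these symmetric alternatives, because at an rw-step for bit~$i$ neither the swap nor the classification depends on bit~$i$, and bit~$i$'s samples are independent across steps. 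Thus, given~$\mathcal{E}$, all direction signs (across bits and rw-steps) are independent fair coins, a conditional law not depending on~$\mathcal{E}$; hence unconditionally the~$\tilde W^{(i)}$ are independent fair random walks, each independent of the last~$n/2$ bits. Writing $\tilde G_i:=\{\max_{m\le\kappa K^2}\lvert\tilde W^{(i)}_m\rvert\le 1/6\}$, the first statement gives $\prob{\tilde G_i}\ge c_0$ for a constant $c_0>0$; choosing the constant~$\gamma$ below~$c_0/2$ and applying a Chernoff bound to the independent indicators $\indic{\tilde G_i}$, $1\le i\le n/2$, shows that fewer than~$\gamma n$ of them hold with probability only~$2^{-\Omega(n)}$. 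Since each~$\tilde G_i$ implies the stated displacement bound for bit~$i$, the claim follows.
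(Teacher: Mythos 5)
Your proposal is correct and follows essentially the same route as the paper's proof: pessimistically discard self-loops to reduce to a fair $\pm 1/K$ walk, control its maximum over $\kappa K^2$ steps with the maximal Hoeffding bound (Theorem~1.13 in~\cite{DoerrTools}), and then use independence of the bits' walk directions to apply a Chernoff bound for the second statement. You merely spell out in more detail two points the paper treats tersely, namely the coupling that justifies ignoring self-loops and the conditioning argument showing the direction signs are independent fair coins across bits.
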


\begin{proof}
Note that the probability of exceeding $[-1/6, 1/6]$ increases with the number of rw-steps that do increase or decrease the marginal probability (as opposed to self-loops). We call these steps \emph{relevant} and pessimistically assume that all $t$ steps are relevant steps.

Now defining $X_j := \sum_{i=1}^{j} X_i$ as the total progress in the first $j$ relevant steps, we have $\E{X_j} = 0$, for all $j \le t$, and the total change in these $j$ steps exceeds $1/6$ only if $X_j \ge K/6$. Applying a Hoeffding bound, Theorem~1.13 in~\cite{DoerrTools},
the maximum total progress is bounded as follows:
\[
\Pr\left(\max_{j \le t} X_j \le K/6\right) \le \exp\left(\frac{-2(K/6)^2}{4t}\right) \le \exp\left(-\frac{1}{12\kappa}\right).
\]
By symmetry, the same holds for the total change reaching values less or equal to~$-1/6$. By the union bound, the probability that the total change
always remains within the interval $[-1/6, 1/6]$ is thus at least
\[
1 - 2\exp\left(-\frac{1}{12\kappa}\right).
\]
Assuming $\kappa < 1/(12 \ln 2)$ gives a lower bound of~$\Omega(1)$.

Note that due to our pessimistic assumption of all steps being relevant, all bits are treated independently. Hence we may apply standard Chernoff bounds to derive the second claim.
\end{proof}

%

The following lemma shows that whenever a small number of bits has reached the lower border for marginal probabilities, the remaining optimization time is $\Omega(n \log n)$ with high probability. The proof is similar to the well known coupon collector's theorem~\cite{Motwani1995}.

\begin{lemma}
\label{lem:after-hitting-lower-border}
Assume \cga reaches a situation where at least $\Omega(n^\epsilon)$ marginal probabilities attain the lower border~$1/n$. Then with probability $1 - e^{-\Omega(n^{\epsilon/2})}$, and in expectation, the remaining optimization time is $\Omega(n \log n)$.
\end{lemma}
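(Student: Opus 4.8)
The plan is to reduce the statement to a coupon-collector--type argument applied to the $m = \Omega(n^\epsilon)$ bits sitting at the lower border. The guiding observation is that, since the unique optimum of \OneMax is the all-ones string, the optimum can only be sampled at a step where every one of these border bits is set to~$1$. Hence, writing $\tau_i$ for the first time (counted from the situation described in the lemma) that border bit~$i$ is sampled as~$1$ in at least one of the two constructed strings~$x,y$, a necessary condition for finding the optimum within~$T$ steps is that all $\tau_i \le T$. Consequently the remaining optimization time is at least $\max_{i} \tau_i$, and it suffices to show $\max_i \tau_i = \Omega(n\log n)$ with the claimed probability.

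First I would analyze a single border bit. By the update rule of~\cga, the marginal probability of bit~$i$ changes only when $x_i \neq y_i$, so as long as bit~$i$ has never been sampled as~$1$ it remains frozen at~$1/n$. Therefore, while bit~$i$ is still ``uncollected'', it is sampled as~$0$ in both strings with probability $(1-1/n)^2$ in every step, giving $\prob{\tau_i > t} = (1-1/n)^{2t}$. The crucial---and, I expect, most delicate---step is to argue that the events $\{\tau_i > t\}$ for the different border bits are mutually independent, even though the fitness-based swap in line~8 couples all bits. The key is that conditioning on a set~$J$ of border bits having survived so far pins each of their marginal probabilities at~$1/n$, and, given the marginal probabilities at the start of an iteration, all $2n$ bit samples are drawn independently; hence at each step the bits in~$J$ are jointly sampled as~$0$ with probability exactly $((1-1/n)^2)^{\card{J}}$, independently of everything else. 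Multiplying over steps yields $\prob{\bigcap_{i\in J}\{\tau_i>t\}} = \prod_{i\in J}\prob{\tau_i>t}$ for every~$J$, which establishes full mutual independence.

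With independence in hand I would choose $T := \tfrac{\epsilon}{4}\, n\ln n = \Omega(n\log n)$ and compute
\[
\prob{\max_i \tau_i \le T} = \prod_{i}\bigl(1-(1-1/n)^{2T}\bigr) = \bigl(1-(1-1/n)^{2T}\bigr)^{m}.
\]
Since $(1-1/n)^{2T} \ge e^{-2T/(n-1)} \ge \tfrac12 n^{-\epsilon/2}$ for large~$n$, and $m = \Omega(n^\epsilon)$, this is at most $\exp\bigl(-\tfrac12 m\, n^{-\epsilon/2}\bigr) = e^{-\Omega(n^{\epsilon/2})}$. Hence with probability $1-e^{-\Omega(n^{\epsilon/2})}$ some border bit is never sampled as~$1$ during the first~$T$ steps, so the optimum is not found and the remaining time exceeds $T = \Omega(n\log n)$. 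The bound in expectation then follows immediately from $\E{\text{remaining time}} \ge T\cdot\prob{\text{optimum not found within } T} = \Omega(n\log n)$. The only genuine obstacle is the independence argument of the second paragraph; the remaining estimates are routine.
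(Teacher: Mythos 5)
Your proof is correct and takes essentially the same route as the paper: a coupon-collector argument based on the necessary condition that each of the $m=\Omega(n^\epsilon)$ border bits must be sampled as~$1$ at least once before the optimum can be found, followed by the same tail computation over a phase of length $\Theta(n\log n)$. The only difference is that you spell out the mutual independence of the survival events (using that a border bit's marginal probability stays frozen at $1/n$ until it is first sampled as~$1$), a point the paper's proof uses implicitly.
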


\begin{proof}
Let $m$ be the number of bits that have reached the lower border~$1/n$. A necessary condition for reaching the optimum within $t := (n/2-1)\cdot (\epsilon/2) \ln n$ iterations is that during this time each of these $m$ bits is sampled at value~1 in at least one of the two search points constructed. The probability that one bit never samples a~1 in $t$ iterations is at least $(1 - 2/n)^t$. The probability that all $m$ bits sample a 1 during $t$ steps is at most, using $(1-2/n)^{n/2-1} \ge 1/e$ and $1+x \le e^x$ for $x \in \R$,
\[
\left(1 - \left(1 - \frac{2}{n}\right)^t\right)^m
\le \left(1 - n^{-\epsilon/2}\right)^m
\le \exp(-\Omega(n^{\epsilon/2})).
\]
Hence with probability $1 - \exp(-\Omega(n^{\epsilon/2}))$ the remaining optimization time is at least $t = \Omega(n \log n)$. As $1 - \exp(-\Omega(n^{\epsilon/2})) = \Omega(1)$, the expected remaining optimization time is of the same order.
\end{proof}

We have collected most of the machinery to prove Theorem~\ref{theo:lower-cga}. The following lemma identifies a set of bits
that stay centered in a phase of $\Theta(K\min\{K,\sqrt{n}\})$ steps, resulting in a low probability of b-steps. Basically, the
idea is to bound the accumulated effect of b-steps in the phase using Chernoff bounds: given $K/6$ b-steps, a marginal
probability cannot change by more than $1/6$. Note that this applies to many, but not all bits. Later, we
will see that within the phase, some of the remaining bits will reach their lower border with not too low probability.

\begin{lemma}
\label{lem:prob-first-nhalf}
Let $\kappa > 0$ be a small constant. There exists a constant $\gamma$, depending on~$\kappa$, and a selection $S$ of $\gamma n$ bits among the first $n/2$ bits such that the following properties hold regardless of the last $n/2$ bits throughout the first $T := \kappa K \cdot \min\{K, \sqrt{n}\}$ steps of \cga with $K \le \poly(n)$, with probability $\poly(n) \cdot 2^{-\Omega(\min\{K, n\})}$:
\begin{enumerate}
\item the marginal probabilities of all bits in $S$ is always within $[1/6, 5/6]$ during the first $T$ steps,
\item the probability of a b-step at any bit is always $O(1/\sqrt{n})$ during the first $T$ steps, and
\item the total number of b-steps for each bit is bounded by $K/6$, leading to a displacement of at most~$1/6$.
\end{enumerate}
\end{lemma}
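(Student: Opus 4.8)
The plan is to first pin down the set $S$ using only the random-walk dynamics, and then to control the biased steps through a stopping-time argument that breaks the apparent circular dependence among the three properties. First I would invoke Lemma~\ref{lem:displacement-rw}. Since $T = \kappa K \min\{K,\sqrt n\} \le \kappa K^2$, the cumulative effect of the first $t$ rw-steps of any fixed bit stays within $[-1/6,1/6]$ for all $t \le T$ with probability $\Omega(1)$, independently across bits, so with probability $1 - 2^{-\Omega(n)}$ at least $\gamma n$ of the first $n/2$ bits enjoy this property simultaneously. I fix such a set $S$ of size $\gamma n$. The essential point is that this selection depends only on the rw-randomness and is carried out independently of the b-steps.

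Next I would record how the three properties interlock. A bit of $S$ can leave $[1/6,5/6]$ only by accumulating too many b-steps: its cumulative rw-displacement never exceeds $1/6$ in absolute value, and every b-step increases the marginal probability by at most $1/K$ and never decreases it, so after at most $K/6$ b-steps the value remains in $[1/2-1/6,\,1/2+1/6+1/6] = [1/3,5/6] \subseteq [1/6,5/6]$. Hence, as long as each bit of $S$ has at most $K/6$ b-steps, Property~1 holds, and then Lemma~\ref{lem:prob-of-Bernoulli-step}, applied to the $\gamma n$ centered bits of $S$, yields Property~2: the probability of a b-step at every bit is $O(1/\sqrt n)$.

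To convert this conditional statement into an unconditional bound, I would introduce the stopping time $\tau$, the first step at which some bit of $S$ has accumulated more than $K/6$ b-steps, and analyze the process stopped at $\tau$. Before $\tau$ all bits of $S$ are centered, so the conditional probability of a b-step at any fixed bit, given the history, is at most $c/\sqrt n$ for a constant $c$; consequently the number of b-steps incurred by a fixed bit up to $\min\{T,\tau\}$ is stochastically dominated by $\mathrm{Bin}(T, c/\sqrt n)$, whose mean is $O(T/\sqrt n) = O(\kappa K)$ because $T/\sqrt n \le \kappa K$ in both regimes $K \le \sqrt n$ and $K > \sqrt n$. Choosing $\kappa$ small enough that this mean is at most $K/12$, a Chernoff bound for sums with bounded conditional success probabilities~\cite{DoerrTools} shows that this count exceeds $K/6$ with probability $2^{-\Omega(K)}$, and a union bound over all $n$ bits gives that, with probability $1 - \poly(n)\cdot 2^{-\Omega(K)}$, no bit exceeds $K/6$ b-steps in the stopped process.

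Finally I would close the loop: if no bit exceeds $K/6$ b-steps in the stopped process, then in particular no bit of $S$ ever triggers the stopping condition, so $\tau > T$ and the stopped process coincides with the real one on $[0,T]$. Therefore, in the actual run, every bit has at most $K/6$ b-steps (Property~3, displacement at most $1/6$), every bit of $S$ stays in $[1/6,5/6]$ throughout (Property~1), and hence the b-step probability is $O(1/\sqrt n)$ at all times (Property~2). Combining the $2^{-\Omega(n)}$ failure probability for selecting $S$ with the $\poly(n)\cdot 2^{-\Omega(K)}$ failure probability for the b-step bound yields the claimed $\poly(n)\cdot 2^{-\Omega(\min\{K,n\})}$. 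I expect the main obstacle to be precisely this circular dependence — few b-steps are needed to keep bits centered, yet keeping bits centered is exactly what guarantees few b-steps — which the stopping-time (first-violation) argument resolves by bootstrapping on the stopped process, while the independence of the rw-displacement bound from the b-steps is what makes the initial selection of $S$ legitimate.
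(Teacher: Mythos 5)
Your proposal is correct and takes essentially the same route as the paper's proof: select $S$ via Lemma~\ref{lem:displacement-rw} using only the rw-randomness, then bound the number of b-steps per bit by $K/6$ via Lemma~\ref{lem:prob-of-Bernoulli-step} together with a Chernoff bound after choosing $\kappa$ small enough, and union-bound over bits and time. Your explicit stopping-time (first-violation) device is simply a cleaner formalization of the bootstrap the paper states more informally (``in order for the first property to be violated, an event of probability $e^{-\Omega(K)}$ is necessary''), and all quantitative steps match.
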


\begin{proof}
The first property is trivially true at initialization, and we show that an event of exponentially small probability needs to occur in order to violate the property. Taking a union bound over all $T$ steps ensures that the property holds throughout the whole phase of $T$ steps with the claimed probability.

By Lemma~\ref{lem:displacement-rw}, with probability $1-2^{-\Omega(n)}$, for at least $\gamma n$ of these bits the total effect of all rw-steps is always within $[-1/6, +1/6]$ during the first $T \le \kappa K^2$ steps. We assume in the following that this happens and take $S$ as a set containing exactly $\gamma n$ of these bits.

It remains to show that for all bits in~$S$ the total effect of b-steps is bounded by $1/6$ with high probability.
Note that, while this is the case, according to Lemma~\ref{lem:prob-of-Bernoulli-step}, the probability of a b-step at every bit in~$S$ is at most $c_2/\sqrt{n}$ for a positive constant~$c_2$. This corresponds to the second property, and so long as this holds, the expected number of b-steps in $T \le \kappa K^2$ steps is at most $\kappa \cdot c_2 K$. Each b-step changes the marginal probability of the bit by $1/K$. A necessary condition for increasing the marginal probability by a total of at least $1/6$ is that we have at least $K/6$ b-steps amongst the first $T$ steps. Choosing $\kappa$ small enough to make $\kappa \cdot c_2 K \le 1/2 \cdot K/6$, by Chernoff bounds the probability to get at least $K/6$ b-steps in $T$ steps is $e^{-\Omega(K)}$.
In order for the first property to be violated, an event of probability $e^{-\Omega(K)}$ is necessary for any bit in~$S$ and any length of time $t \le T$; otherwise all properties hold true.

Taking the union bound over all $T \le \kappa K^2$ steps and all $\gamma n$ bits gives a probability bound of $\kappa K^2 \cdot \gamma n \cdot e^{-\Omega(K)} \le \poly(n) \cdot 2^{-\Omega(K)}$ for a property being violated. This proves the claim.
%
\end{proof}

Finally, we put everything together to prove our lower bound for \cga.

\begin{proofof}{Theorem~\ref{theo:lower-cga}}
If $K = O(1)$ then it is easy to show, similarly to Lemma~\ref{lem:mmas-less-than-rho}, that each bit independently hits the lower border with probability $\Omega(1)$ by sampling only zeros. Then the result follows easily from Chernoff bounds and Lemma~\ref{lem:after-hitting-lower-border}. Hence we assume in the following $K = \omega(1)$.

For $K \ge \sqrt{n}$, Lemma~\ref{lem:prob-first-nhalf} implies a lower bound of $\Omega(K \sqrt{n})$ as then the probability of sampling the optimum in any of the first $T := \kappa K \cdot \min\{K, \sqrt{n}\}$
steps is at most $(5/6)^{\gamma n} = 2^{-\Omega(n)}$. Taking a union bound over the first $T$ steps and adding the error probability from Lemma~\ref{lem:prob-first-nhalf} proves the claim for a lower bound of $\Omega(K \sqrt{n})$ with the claimed probability.
This proves the theorem for $K = \Omega(\sqrt{n}\log n)$ as then the $\Omega(\sqrt{n}K)$ term dominates the runtime. Hence we may assume $K = o(\sqrt{n}\log n)$ in the following and note that in this realm proving a lower bound of $\Omega(n \log n)$ is sufficient as here this term dominates the runtime.

We still assume that the events from Lemma~\ref{lem:prob-first-nhalf} apply to the first $n/2$ bits.
We now use
Lemma~\ref{lem:distribution-cga-self-loop} to show that some marginal probabilities amongst the last $n/2$ bits  are likely to walk down to the lower border.
Note that Lemma~\ref{lem:distribution-cga-self-loop} applies for an arbitrary (even adversarial) mixture of
rw-steps and b-steps over time, so long as the overall number of b-steps is bounded. This allows us to regard the progress in
rw-steps as independent between bits.

In more detail, we will apply both statements of Lemma~\ref{lem:distribution-cga-self-loop}
to a fresh marginal probability from the last $n/2$ bits, to prove that
it walks to its lower border with a not too small probability.
First we apply the second statement
of the lemma for a positive displacement of $s:=1/6$ within  $T$ steps, using $\alpha := T/((sK)^2)$. The random variable $T_s$ describes
the first point of time where the  marginal probability reaches a value of at least $1/2+1/6+s=5/6$ through a mixture of
b- and rw-steps. This holds since we
work under the assumption that
the b-steps only account for a total displacement of at most~$1/6$ during the phase. Lemma~\ref{lem:distribution-cga-self-loop}
now gives us a probability of at least
$1-e^{-1/(4\alpha)} = \Omega(1)$ (using $\alpha=O( 1)$) for the event that the marginal probability does not exceed $5/6$. In the following,
we condition on this event.

We then revisit the same stochastic process and apply Lemma~\ref{lem:distribution-cga-self-loop} again to show that, under this condition, the random walk achieves a negative displacement. Note that the event of not exceeding a certain positive displacement is positively correlated with the event of reaching a given negative displacement (formally,
the state of the conditioned stochastic process is always stochastically smaller than of the unconditioned process), allowing us to apply Lemma~\ref{lem:distribution-cga-self-loop} again despite dependencies between the two applications.

We can therefore apply the first statement of Lemma~\ref{lem:distribution-cga-self-loop} for a negative
displacement of $s := -5/6$ within $T$ steps, still using $\alpha := T/((sK)^2)$.
Note that by Lemma~\ref{lem:prob-first-nhalf} at most $K/6 \le |s|K/4$ steps are b-steps.
The conditions on $\alpha$ hold as $0 < \alpha < 1$ choosing $\kappa$ small enough, and $1/\alpha = O(K/\min\{\sqrt{n}, K\}) = o(K)$ for $K = \omega(1)$. Also note that $1/\alpha = O(K/\min\{\sqrt{n}, K\}) = o(\log n)$ since $K = o(n \log n)$.
Now Lemma~\ref{lem:distribution-cga-self-loop}
states that the probability of the random walk reaching a total displacement of $-5/6$ (or hitting the lower
border before) is at least
\begin{align*}
& \Bigl(\frac{1}{2}-o(1)\Bigr)
\Bigl(\frac{1}{13\sqrt{1/(\card{s} \alpha)}}-\frac{1}{(13\sqrt{1 /(\card{s}\alpha)})^{3}}\Bigr)\frac{1}{\sqrt{2\pi}}e^{-\frac{169}{2\card{s}\alpha}}\\
%
& \;=\; \Omega\!\left(\frac{1}{o(\sqrt{\log n})} \cdot e^{-o(\ln n)}\right) \;\ge\; n^{-\beta}
\end{align*}
for some $\beta = o(1)$. Combining with the probability of not exceeding $5/6$, the probability of
the bit's marginal probability  hitting the lower border within $T$ steps is $\Omega(n^{-\beta})$.
Hence by Chernoff bounds, with probability $1-2^{-\Omega(n^{1-\beta})}$,
the final number of bits hitting the lower border within $T$ steps is $\Omega(n^{1-\beta}) = \Omega(n^{1-o(1)})$.

Once a bit has reached the lower border, while the probability of a b-step is $O(1/\sqrt{n})$, the probability of leaving the bound again is $O(n^{-3/2})$ as it is necessary that either the bit is sampled as~1 at one of the offspring and a b-step happens, or in both offspring the bit is sampled at~1. So the probability that this does not happen until the $T = O(n \log n)$ steps are completed is $(1-O(n^{-3/2}))^{T} \le e^{-O(\log(n)/\sqrt{n})} = o(1)$. Again applying Chernoff bounds leaves $\Omega(n^{1-o(1)})$ bits at the lower border at time~$T$ with probability $1-2^{-\Omega(n^{1-o(1)})}$.

Then Lemma~\ref{lem:after-hitting-lower-border} implies a lower bound of $\Omega(n \log n)$ that holds with probability $1-2^{-\Omega(n^{1/2-o(1)})}$.
\end{proofof}

\subsection{Proof of Lower Bound for \twoant}
\label{sec:proof-lower-mmas}
We will use, to a vast extent, the same approach as in Section~\ref{sec:proof-lower-cga} to prove Theorem~\ref{theo:lower-mmas}.
 Most of the lemmas can be applied directly or with very minor changes. In particular,
Lemma~\ref{lem:displacement-rw}, Lemma~\ref{lem:after-hitting-lower-border} and Lemma~\ref{lem:prob-first-nhalf}
also apply to \twoant by identifying $1/K$ with~$\rho$. Intuitively, this holds since the analyses
of b-steps always pessimistically bound the  absolute change of a marginal probability
by the update strength ($1/K$ for \cga). This also holds with respect to the update strength
$\rho$ for \twoant.

To prove lower bounds on the time to hit a border through rw-steps,
the next lemma is used. It is very similar to Lemma~\ref{lem:distribution-cga-self-loop},
except for two minor differences: first, also the
accumulated effect of b-steps is included in the quantity
$p_t-p_0$ analyzed in the lemma. Second, considerations
are stopped when the marginal probability becomes less than~$\rho$ or more than $1-\rho$.
This has technical reasons but is not a crucial restriction.
We supply an additional lemma,  Lemma~\ref{lem:mmas-less-than-rho} below, that applies
when the marginal probability is less than~$\rho$. The latter lemma uses known analyses similar to so-called landslide sequences
defined in~\cite[Section~4]{Neumann2010a}.

\begin{lemma}
\label{lem:distribution-mmas-self-loop}
Consider a bit of \twoant on \OneMax and let $p_t$ be its marginal probability at time~$t$. We say
that the process breaks a border at time~$t$
if $\min\{p_t,1-p_t\}\le \max\{1/n,\rho\}$.
Given $s\in\R$  and arbitrary
starting state~$p_0$,
let $T_s$ be the smallest $t$ such that $\sgn(s)(p_t-p_0) \ge \card{s}$ holds or
a border is broken.

Choosing $0<\alpha<1$, where $1/\alpha=o(\rho^{-1})$, and
${-1 < s< 0}$ constant, and
assuming that
every step is a b-step with probability at most $\rho/(4\alpha)$,
we have
\begin{align*}
& \prob{T_s\le \alpha (s/\rho)^2    \text{ or $p_t$ exceeds $5/6$ before~$T_s$}}\\
& \qquad \ge (1-o(1))\cdot \Bigl(\frac{1}{\sqrt{(24/(\card{s}\alpha)}}-\frac{1}{(24/(\card{s}\alpha))^3}\Bigr)\frac{1}{\sqrt{2\pi}}e^{-288/(\card{s}\alpha)}.
\end{align*}

Moreover, for any $\alpha>0$ and constant $0<s<1$,
if there are at most $s/(2\alpha\rho)$ b-steps until time $\alpha (s/\rho)^2$,
then
\[\prob{T_s\ge \alpha (s/\rho)^2 \text{ or a border is broken until time $\alpha (s/\rho)^2$}}
\ge 1-e^{-1/(16\alpha)}.\]
\end{lemma}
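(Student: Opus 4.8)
The plan is to reproduce the proof of Lemma~\ref{lem:distribution-cga-self-loop} step by step, substituting the \twoant update rule for that of \cga and keeping the two statements separate. The one genuinely new structural feature is that an rw-step of \twoant never performs a self-loop but is \emph{asymmetric}: it raises $p_t$ by $\rho(1-p_t)$ with probability $p_t$ and lowers it by $\rho p_t$ with probability $1-p_t$, so that $\expect{F_t\mid p_t}=0$ while $\Var(F_t\mid p_t)=\rho^2 p_t(1-p_t)$. This position-dependent variance plays exactly the role that the position-dependent self-loop probability played for \cga: it vanishes towards the borders and slows the walk down there. All considerations stay in the region $\max\{1/n,\rho\}\le p_t\le 1-\max\{1/n,\rho\}$ on which the lemma stops, so the regime $p_t<\rho$ is deferred to the separate Lemma~\ref{lem:mmas-less-than-rho}.

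For the second statement I would argue as for \cga, but now tracking the full displacement $p_t-p_0$ rather than only the rw-contribution. Decompose each one-step change into its conditional mean plus a martingale difference. Every step, rw or b, changes $p_t$ by at most $\rho$ in absolute value, and only b-steps carry a positive conditional mean, of size at most $\rho p_t(1-p_t)\le\rho/4$. Under the hypothesis that at most $s/(2\alpha\rho)$ of the steps until time $\alpha(s/\rho)^2$ are b-steps, the accumulated drift is only a bounded fraction of $s$, so reaching displacement $s$ without first breaking a border forces the martingale part to bridge a constant fraction of $s$. The maximal Hoeffding bound (Theorem~1.13 in~\cite{DoerrTools}) with increment range $\rho$ over at most $\alpha(s/\rho)^2$ steps then caps this by $e^{-\Theta(1/\alpha)}$; tracking the constants yields the claimed $1-e^{-1/(16\alpha)}$.

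The first statement is the main obstacle and is where the \cga machinery must be genuinely redone for the asymmetric step. I would introduce a continuous variance-stabilizing potential $g$, the analog of the discrete $g$ used for \cga, characterized by $g'(p)\propto 1/(\rho\sqrt{p(1-p)})$ (an arcsine transform scaled by $1/\rho$) and chosen decreasing so that the lower border corresponds to a large positive accumulated potential. A second-order expansion shows that under an rw-step the change $\Psi_i:=g(p_{t_i+1})-g(p_{t_i})$ has conditional variance bounded below by a positive constant \emph{uniformly} in $p_{t_i}$ — near the lower border the single large up-step of size $\approx\rho$ is taken with probability $\approx\rho$ and already contributes $\Theta(1)$ variance — while the convexity of the gaps of $g$ makes $\expect{\Psi_i\mid p_{t_i}}$ nonnegative and $o(1)$ (in fact $O(\sqrt\rho)$) in the relevant region, mirroring~\eqref{eq:expectpsiti}. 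Here $g$ stretches the state space near the borders so that two adjacent reachable states can differ by up to $\Theta(1/\sqrt\rho)$ in potential, the exact counterpart of the $2\sqrt K$ gaps for \cga. With these bounds I would verify the Lyapunov condition of Lemma~\ref{lem:weak-clt} for $\delta=1$: the $|\Psi_i|$ are $O(1/\sqrt\rho)$, hence $\expect{|\Psi_i-\expect{\Psi_i\mid p_{t_i}}|^3\mid p_{t_i}}=O(1/\sqrt\rho)$ and the Lyapunov ratio is $O(1/\sqrt{\rho\,i})$, which tends to $0$ because $i=\alpha(s/\rho)^2=\omega(1/\rho)$ by the assumption $1/\alpha=o(1/\rho)$. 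The weak CLT then approximates the accumulated rw-effect on $g$ by a normal variable of standard deviation $s_i=\Theta(\sqrt\alpha\,\card{s}/\rho)$.

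It remains to absorb the b-steps and to translate the $g$-tail bound back into a displacement of $p$. For the former, each b-step changes $p_t$ by at most $\rho$ and hence $g$ by $O(1)$ away from the borders; since each step is a b-step with probability at most $\rho/(4\alpha)$, a Chernoff bound concentrates the number of b-steps around $\Theta(\card{s}^2/\rho)$ and bounds their total contribution to $g$ well below the threshold, which is precisely what lets the prefactor be $1-o(1)$ rather than the $1/2$ that Markov's inequality forced in the \cga proof. As there, I would stop considerations once $p_t$ exceeds $5/6$, which keeps $\expect{\Psi_i}$ from turning appreciably negative and makes $\expect{Y_{t_i}}$ controllable when centering the CLT. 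Translating the required drop of $p$ by $\card{s}$ into the stretched space gives a threshold of order $1/\sqrt{\card{s}\alpha}$ standard deviations, exactly as for \cga; feeding this into the Gaussian lower-tail estimate of Lemma~\ref{lem:bound-cdf-normal} produces the stated expression with its \twoant-specific constants $24$ and $288$. The delicate point throughout is that the asymmetric, self-loop-free rw-step still yields a uniformly lower-bounded variance and a valid Lyapunov estimate all the way down to the stopping border, and it is exactly the variance-stabilizing choice of $g$ that secures this.
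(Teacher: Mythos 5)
Your overall plan coincides with the paper's proof. The second statement is handled exactly as you describe (bounded total b\nobreakdash-step displacement plus a maximal Hoeffding bound on the rw-martingale with increment range $2\rho$). For the first statement the paper likewise uses a continuous variance-stabilizing potential --- concretely $g(x)=\int_x^{1/2}\frac{1}{\rho\sqrt{z}}\,\mathrm{d}z$ with the upper half treated by symmetry, rather than your arcsine-type $g'\propto 1/(\rho\sqrt{p(1-p)})$, an immaterial difference --- establishes $0\le\expect{\Psi_i\mid p_{t_i}}\le \frac{3\rho}{2\sqrt{p_{t_i}}}$ and a constant lower bound on the conditional variance, verifies the Lyapunov condition for $\delta=1$ using $i=\alpha(s/\rho)^2=\omega(1/\rho)$, stops once $p_t$ exceeds $5/6$ to control the centering, and converts the Gaussian tail back through the stretch identity $g(x-r)-g(x)=\frac{2}{\rho}\bigl(\sqrt{x}-\sqrt{x-r}\bigr)$.

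The one step that would fail as you state it is the absorption of the b-steps in the first statement. You claim each b-step changes $g$ by $O(1)$ and then control the total via a Chernoff bound on the \emph{number} of b-steps. But near the lower border the potential is stretched: a b-step that increases $p_t$ from $x$ by up to $\rho$ decreases $g$ by up to $\int_x^{x+\rho}\frac{1}{\rho\sqrt{z}}\,\mathrm{d}z\le 1/\sqrt{x}$, which is $\Theta(1/\sqrt{\rho})$ when $x=\Theta(\rho)$ --- exactly the regime the walk must traverse before the lemma stops it. Multiplying this worst-case per-step change by the concentrated count $\Theta(\card{s}^2/\rho)$ overshoots the threshold $2\sqrt{\card{s}}/\rho$ by a factor of order $1/\sqrt{\rho}$. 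Only the \emph{expected} $g$-change of a b-step is $O(1)$: the unfavorable move has probability at most $1-(1-x)^2\le 2x$ and magnitude at most $1/\sqrt{x}$, whence $\expect{\Delta_t\mid X_t=x;\overline{R_t}}\ge -2\sqrt{x}$. So counting b-steps alone does not bound their accumulated effect on $g$. The paper resolves this not by a separate concentration argument but by folding the hypothesis that each step is a b-step with probability at most $\rho/(4\alpha)$ into the one-step drift, obtaining $\expect{\Delta_t\mid X_t}\ge-\rho/(2\alpha)$ unconditionally, and carrying this bias through the centering term $\expect{Y_t}/s_t\ge-\sqrt{155/\alpha}$ of the CLT; that is what produces the $(1-o(1))$ prefactor and the constants $24$ and $288$. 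Your argument can be repaired along these lines (or by concentrating the sum of the mean-$O(1)$, range-$O(1/\sqrt{\rho})$ $g$-increments of the b-steps), but as written the $O(1)$ per-step bound is the gap.
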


\begin{proof}
We follow similar ideas  as in the proof of Lemma~\ref{lem:distribution-cga-self-loop}.
Again, we start with the second statement, where $s\ge 0$ is assumed, and
aim for applying a Hoeffding bound.
 We note that a marginal probability
 of \twoant can only change by an absolute amount of at most~$\rho$
in a step. Hence, the b-steps until time $\alpha(s/\rho^2)$
account for an increase of the $X_t$-value by
at most $s/2$.   With respect to the rw-steps,
Theorem~1.11 from \cite{DoerrTools} can be applied with $c_i=2\rho$ and
$\lambda=s/2$.

Also for the first statement, we
follow the ideas from the proof of Lemma~\ref{lem:distribution-cga-self-loop}. In particular,
the borders stated in the lemma will be ignored as all considerations are stopped when
they are reached. We will apply a potential function and estimate
its first and second moment separately with respect to rw-steps and non-rw steps.

Our potential function is
\[
g(x):=\int_{x}^{1/2} \frac{1}{\rho \sqrt{z}}\,\mathrm{d}z,
\]
which can be considered the continuous analogue of the function $g$ used
in the proof of Lemma~\ref{lem:distribution-cga-self-loop}.
For $r>0$ and $x\le 1/2$, we note
that
\begin{equation}
g(x-r)-g(x) = \frac{2}{\rho} \bigl(\sqrt{x}-\sqrt{x-r}\bigr).
\label{eq:stretch-mmas}
\end{equation}

For better readability, we denote by $X_t:={p_{t}}$, $t\ge 0$, the stochastic process
obtained by listing the marginal probabilities of the considered bit over time.
Let $Y_t:=g(X_t)$ and $\Delta_t:=Y_{t+1}-Y_t$. In the remainder of this proof, we assume
$X_t\le 1/2$; analyses for the case $X_t>1/2$ are symmetrical by switching the sign of $\Delta_t$. We also assume $X_t \ge \rho$
as we are only interested in statements before the first point of time  where a border is broken.

We claim for all $t\ge 0$ where rw-steps occur (hence, formally
we enter the conditional probability space on $R_t$, the event that
an rw-step occurs at time~$t$)  that
\begin{align}
0 & \le \expect{\Delta_t\mid X_t; R_t} \le \frac{3\rho}{2\sqrt{X_t}} = o(1)\label{eq:mmas-expect-rw}\\
 & \quad \Var(\Delta_t \mid X_t; R_t)   \ge 1/16 \label{eq:mmas-var-rw}.
\end{align}
We start with the bounds on the expected value.
Note that by the properties of rw-steps for \twoant, where there
are two possible successor states, we get the martingale property
$\expect{X_{t+1}\mid X_t}=(1-X_t) (X_t-\rho X_t) + X_t (X_t+\rho (1-X_t) ) =X_t$. Since
$g(x)$ is a convex function on $[0,1/2]$, we have by Jensen's inequality
$\expect{\Delta_{t}\mid X_t} = \expect{g(X_{t+1})\mid X_t} - g(X_t) \ge g(\expect{X_{t+1}\mid X_t}) -g(X_t) = 0$. To bound
the expected value from above, we carefully estimate the error introduced by the convexity.
Note that \begin{equation}
g(x-x\rho)-g(x)=\int_{x-x\rho}^{x} \frac{1}{\rho \sqrt{z}}\,\mathrm{d}z \le
 \frac{x}{\sqrt{x-x\rho}}
\label{eq:mmas-change-g-1}
\end{equation}
 since the integrand is non-increasing. Analogously,
\begin{equation}
\frac{1-x}{\sqrt{x+(1-x)\rho}} \le g(x)-g(x+(1-x)\rho) \le \frac{1-x}{\sqrt{x}}
\label{eq:mmas-change-g-2}
\end{equation}
Inspecting the $g$-values of two possible successor states of $x:=X_t$, we get that
\begin{align}
&  \expect{\Delta_t\mid X_t=x} = \expect{g(X_{t+1}) -g(x) \mid X_t=x} \\
  & \le (1-x) \frac{x}{\sqrt{x-x\rho}} - x \frac{1-x}{\sqrt{x+(1-x)\rho}}
 = (1-x)x  \left(\frac{1}{\sqrt{x-x\rho}} - \frac{1}{\sqrt{x+(1-x)\rho}}\right)\notag\\
& = (1-x) x \cdot \frac{\sqrt{x+(1-x)\rho } -\sqrt{x-x\rho }}{\sqrt{x+(1-x)\rho } \cdot \sqrt{x-x\rho }}
 \le \frac{(1-x)x \frac{\rho}{2\sqrt{x-x\rho}}}{x-x\rho }
  \le \frac{x\rho}{2(x/2)^{3/2}} \notag\\
 & \le \frac{3\rho}{2\sqrt{x}} \label{eq:mmas-distr-upper-exp},
\end{align}
where the third-last inequality estimated $1-x\le 1$ and used that
$f(z+\rho)-f(z)\le \rho f'(z)$ for any concave, differentiable
function $f$ and $\rho\ge 0$; here using $f(z)=\sqrt{z}$ and $z=x-\rho$. The penultimate used $\rho\le 1/2$. Since the final
bound is $O(\rho/\sqrt{x}) = o(1)$ due to our assumption on $X_t\ge \rho$, we have proved  \eqref{eq:mmas-expect-rw}.

We proceed with the bound on the variance. Note that
\begin{align*} \Var(\Delta_t \mid X_t) &
\ge \expect{(\Delta_t - \expect{\Delta_{t}\mid X_t=x})^2 \cdot
\indic{\Delta_{t}\le 0}\mid X_t=x }\\
& \ge \expect{(\Delta_t )^2 \cdot
\indic{\Delta_{t}\le 0}\mid X_t=x }
\end{align*}
since $\expect{\Delta_t\mid X_t} \ge 0$. We note that
for $X_t=x$, we have $\prob{X_{t+1}\ge x} = x$. On
$X_{t+1}\ge x$, we have $\Delta_t<0$, which means $\prob{\Delta_t < 0} = x$.  Now,
$\card{\Delta_{t}} = g(x+(1-x)\rho) - g(x) \ge \frac{1-x}{\sqrt{x+\rho (1-x)}} \ge \frac{1-x}{\sqrt{x+x(1-x)}} \ge
\frac{1}{4\sqrt{x}}$, where the penultimate inequality used $\rho\le x$ and the last one
$x\le 1/2$. Plugging this in, we get
\begin{align*} \Var(\Delta_t \mid X_t=x) \ge x \cdot \left(\frac{1}{4\sqrt{x}}\right)^2 \ge \frac{1}{16},
\end{align*}
which completes the proof of \eqref{eq:mmas-var-rw} with respect to rw-steps.

We now consider the case that a b-step occurs at time~$t$. We are only interested in
bounding $\expect{\Delta_t\mid X_t}$ from below now. Given $X_t=x$, we have $X_{t+1}>x$ (which means
$\Delta_t<0$)
with probability at most $1-(1-x)^2 = 1-(1-2x+x^2) \le 2x$. With the remaining probability,
$X_{t+1}<x$. Since $X_{t+1}\le x+\rho$, we get
\begin{equation}
\expect{\Delta_t\mid X_t=x; \overline{R_t}} \ge -2x \int_{x}^{x+\rho} \frac{1}{\rho\sqrt{z}}\mathrm{d}z \ge -2\sqrt{x}.
\label{eq:mmas-expect-b}
\end{equation}
Now, since by assumption a b-step occurs with probability at most $\rho/(4\alpha)$, the unconditional
expected value of $\Delta_t$ can be computed using the superposition equality. Combining \eqref{eq:mmas-expect-rw}
and \eqref{eq:mmas-expect-b}, we get
\begin{equation}
\expect{\Delta_t\mid X_t=x} \ge 0 - \frac{\rho}{4\alpha} 2\sqrt{x} \ge -\frac{\rho}{2\alpha}.
\label{eq:mmas-expect-total}
\end{equation}
since $x\le 1$.
By the law of total probability, we get for the unconditional variance that
\[
\Var(\Delta_t\mid X_t) = \Var(\Delta_t\mid X_t; R_t)  \prob{R_t} +
\Var(\Delta_t\mid X_t; \overline{R_t}) (1- \prob{R_t}) ,
\]
 Since $\prob{R_t}\ge 1/2$,
we altogether have for the unconditional variance that
\[
\Var(\Delta_t\mid X_t=x) \ge 1/32.
\]

To apply the central limit theorem
 (Lemma~\ref{lem:weak-clt}) on the sum of the $\Delta_t$, we will verify
the Lyapunov condition for  $\delta=1$ (smaller values could be used but
do not give any benefit) and $t=\omega(1/\rho)$ (which, as $t=\alpha (s/\rho)^{2}$,
holds due to our assumptions $1/\alpha=o(\rho^{-1})$ and $\card{s}=\Omega(1)$).
We compute
\begin{align*}
& \expect{\lvert\Delta_t - \expect{\Delta_t\mid X_t}\rvert^{3}\mid X_t} \\
& \le
\prob{\Delta_t>0} \cdot (\Delta_t - \expect{\Delta_t\mid X_t})^3
+ \prob{\Delta_t< 0} \cdot (\card{\Delta_t} + \card{\expect{\Delta_t\mid X_t}})^3\\
& \le (1-x)
 \left(\frac{x}{\sqrt{x-x\rho}} \right)^3 + x
\cdot \left(\frac{1-x}{\sqrt{x}} + \frac{3\rho}{2\sqrt{x}}+ \frac{\rho}{2\alpha} \right)^3 ,
\end{align*}
where we again have used \eqref{eq:mmas-change-g-1} and the upper bound from \eqref{eq:mmas-change-g-2}
with respect to the two outcomes of $X_{t+1}$. Moreover, we have used  the bound
$\expect{\Delta_t\mid X_t}\ge 0$ in the first term and
 $\expect{\card{\Delta_t}\mid X_t} \le 3\rho/(2\sqrt{x})+\rho/(2\alpha)$ in the second term, which
is a crude combination of
\eqref{eq:mmas-distr-upper-exp} and \eqref{eq:mmas-expect-total}. As $\rho\le 1/2$ and $\rho\le x$ as well as $\alpha\ge \rho$, the
expected value satisfies
\begin{align*}
& \expect{\lvert\Delta_t - \expect{\Delta_t\mid X_t}\rvert^{3}\mid X_t} \le
\left(\frac{x}{\sqrt{x/2}}  \right)^3 + x \left(O\!\left(\frac{1}{\sqrt{x}}+3\sqrt{x} + \frac{1}{2}\right)^3\right) \\
& \quad \le 1 + x \left(O\!\left(\frac{1}{\sqrt{x}}\right)^3\right)
= O(1/\sqrt{x}) = O(1/\sqrt{\rho}),
\end{align*}
where we used $x\le 1$ and $x\ge \rho$. Using $s_t^2:=\sum_{j=0}^{t-1}\Var(\Delta_j\mid X_j)$ in the notation of Lemma~\ref{lem:weak-clt}
and using that $\Var(\Delta_j\mid X_j)\ge 1/32$, we get
\[
\frac{1}{s_t^{3}}\sum_{j=0}^{t-1} \expect{\lvert\Psi_j - \expect{\Psi_j}\rvert^{3}\mid X_j}
\le \frac{182}{t^{1.5}} O(t/\sqrt{\rho}) = O(\sqrt{1/(t\rho)}),
\]
which goes to $0$ as $t=\omega(1/\rho)$. This establishes
 the Lyapunov condition. Hence,
for the value $t:=\alpha (s/\rho)^2$ considered in the lemma,
we obtain that $\frac{Y_{t}-\expect{Y_{t}\mid X_0}}{s_t}$ converges in distribution
to the normal distribution $N(0,1)$. Note that $s_t^2 \ge \alpha (s/\rho)^2/32$ since $\Var(\Delta_t\mid X_t) \ge 1/32$. Hence,
$s_t = \sqrt{\alpha/32}(\card{s}/\rho)$, recalling that $s<0$. Moreover, as $x\le 5/6$ is assumed
in this part of the lemma,
by combining \eqref{eq:mmas-distr-upper-exp} and \eqref{eq:mmas-expect-total},
we get
$\expect{\Delta_t\mid X_t} \ge -\rho/(2\alpha)-\rho\cdot (3/2)\sqrt{6/5} \ge -\rho/(2\alpha) - 1.7\rho\ge -2.2\rho/\alpha$ and
 $\expect{Y_{t}}\ge t (-2.2\rho/\alpha)) \ge -2.2s^2/\rho$. Together, this means
$\frac{\expect{Y_t}}{s_t} \ge -\frac{2.2s^2/\rho}{\sqrt{\alpha/32}(\card{s}/\rho)} \ge -\sqrt{155/\alpha}\card{s}\ge -\sqrt{155/\alpha}$
since $\card{s}\le 1$ and $\alpha\le 1$.
By the normalization to $N(0,1)$, we have
 that \[
\prob{Y_{t}\ge  r} = \prob{\frac{Y_t}{s_t} - \frac{\expect{Y_{t}\mid X_0}}{s_t} \ge \frac{r}{s_t} - \frac{\expect{Y_{t}\mid X_0}}{s_t}},
\]
hence
$\prob{Y_{t}\ge  r}
\ge (1- o(1))(1-\Phi( r\rho/(\card{s}\sqrt{\alpha/32})+\sqrt{155/\alpha}))$
for any $r$ leading to a positive argument of~$\Phi$,
where $\Phi$ denotes the cumulative distribution function of the standard
normal distribution. We are interested in the event that $Y_{t}\ge 2\sqrt{\card{s}}/\rho$, recalling
that $s<0$ and $X_{t+1}\ge X_t \iff Y_{t+1}\le Y_t$.
We made this choice because the event
$Y_t = g(X_{t})-g(X_0)\ge 2\sqrt{\card{s}}/\rho $ implies that $X_{t}-X_0\le s$
by \eqref{eq:stretch-mmas}.

To compute the probability of the event $Y_t\ge 2\sqrt{\card{s}}/\rho$, we choose
$r=2\sqrt{\card{s}}/\rho$ and get
$r\rho/(\card{s}\sqrt{\alpha/32})+\sqrt{155/\alpha}) \le
 24/\sqrt{\card{s}\alpha}$. We get
\[
\prob{Y_{t}\ge  2\sqrt{\card{s}}/\rho}\ge (1- o(1))(1-\Phi(24/\sqrt{\card{s}\alpha})).\]

By Lemma~\ref{lem:bound-cdf-normal},
\[
1-\Phi(24/\sqrt{\card{s}\alpha}) \ge
\left(\frac{1}{24/\sqrt{\card{s}\alpha}}-\frac{1}{(24/\sqrt{\card{s}\alpha})^3}\right)\frac{1}{\sqrt{2\pi}}e^{-288/(\card{s}\alpha)} =: p(\alpha,s),
\]
which means
 that distance $s$ is bridged (in negative direction) before or at time $\alpha  (s/\rho)^2$
with probability at least~$(1-o(1))p(\alpha,s)$.
\end{proof}

The following lemma shows that a marginal probability of less than~$\rho$ is unlikely
to be increased again.

\begin{lemma}
\label{lem:mmas-less-than-rho}
In the setting of Lemma~\ref{lem:distribution-mmas-self-loop},
if $\min\{p_0,1-p_0\}\le \rho$, the marginal probability will reach the closer border from $\{1/n,1-1/n\}$
 in $O((\log n)/\rho)$ steps with probability at least $e^{-2/(1-e)}$. This even holds if each step is a b-step.
\end{lemma}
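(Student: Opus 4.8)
The plan is to argue by a \emph{landslide} argument in the spirit of \cite[Section~4]{Neumann2010a}: once the marginal probability is as small as~$\rho$, it tends to be multiplied by $(1-\rho)$ in almost every step, so it races geometrically down to the lower border. By symmetry (replacing $p_t$ by $1-p_t$ and swapping the roles of sampling a~$0$ and a~$1$), I would assume \wlo that $p_0 \le \rho$, so that the closer border is~$1/n$, and then show that $p_t$ reaches $1/n$ within $T := O((\log n)/\rho)$ steps with the claimed constant probability.

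First I would identify the least favourable type of step. In an rw-step the marginal probability decreases (to $(1-\rho)p_t$) with probability $1-p_t$, whereas in a b-step it decreases only if \emph{both} offspring sample a~$0$ at the bit, which has probability $(1-p_t)^2 \le 1-p_t$. Hence a b-step is the worst case for a decrease, which is exactly why the statement ``even holds if each step is a b-step'': I would carry out the whole argument under the pessimistic assumption that every step is a b-step, so that at each step the probability is multiplied by $(1-\rho)$ with probability $(1-p_t)^2$ and otherwise increases by at most~$\rho$.

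The core event is that \emph{every} step in the phase is a down-step, i.e.\ both offspring sample~$0$. Conditioned on all of the steps $0,\dots,t-1$ being down-steps, the value $p_t$ is determined and satisfies $p_t \le \rho(1-\rho)^t =: a_t$ (clamping at $1/n$ can only help, as it never pushes the value up). Under this event the probability falls geometrically, so after $T := \lceil \ln(n\rho)/(-\ln(1-\rho))\rceil$ steps we have $a_T \le 1/n$ and the lower border is reached; using $-\ln(1-\rho)\ge \rho$ and $n\rho \le n$ gives $T \le (\ln n)/\rho = O((\log n)/\rho)$ as required. (If $\rho \le 1/n$, the border $1/n$ is reached already at the start and there is nothing to show.) By the chain rule the probability of the core event equals $\prod_{t=0}^{T-1}(1-p_t)^2 \ge \prod_{t=0}^{\infty}(1-a_t)^2$, and the crucial observation is that $\sum_{t=0}^{\infty} a_t = \rho\sum_{t\ge 0}(1-\rho)^t = 1$. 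Bounding this infinite product from below by a positive constant then yields the value $e^{-2/(1-e)}$ claimed in the statement.

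The routine parts are the step count $T = O((\log n)/\rho)$ and the lower bound on the product; the only genuinely delicate point is the product estimate. Here I would use an inequality such as $1-x \ge (1-\rho)^{x/\rho}$ on $[0,\rho]$ (valid since $x \mapsto (1-\rho)^{x/\rho}$ is convex and agrees with the line $1-x$ at $x=0$ and $x=\rho$), which turns $\prod_t(1-a_t)$ into $(1-\rho)^{(\sum_t a_t)/\rho} = (1-\rho)^{1/\rho}$ and hence into a constant, to pin down the precise constant in the statement. I expect this product bound, together with the bookkeeping that conditioning on past down-steps keeps $p_t \le a_t$ and that clamping at $1/n$ never hurts, to be the main obstacle; everything else is direct.
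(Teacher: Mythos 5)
Your proof is correct and follows essentially the same landslide argument as the paper: the core event is that every step (pessimistically treated as a b-step) samples two zeros at the bit, and its probability is bounded using the geometric decay $p_t\le\rho(1-\rho)^t$. The only difference is bookkeeping --- the paper splits the walk into $O(\log n)$ phases of length $1/\rho$ and multiplies per-phase success probabilities of the form $e^{-2e^{-i}}$, whereas you bound the product over all steps directly via $\sum_t a_t=1$; your constant $(1-\rho)^{2/\rho}$ is at least as good as the paper's for $\rho$ bounded away from $1$ (and the paper's stated constant $e^{-2/(1-e)}>1$ is evidently a typo for $e^{-2e/(e-1)}$, so neither constant should be taken literally).
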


\begin{proof}
We consider only the case $X_0\le \rho$ as the other case is symmetrical. The idea
is to consider $O(\log n)$ phases and prove that the $X_t$-value only decreases
throughout all phases with the stated probability. Phase~$i$, where $i\ge 0$, starts
at the first time where $X_t\le \rho e^{-i}$. Clearly, as $\rho\le 1$, at the latest in phase~$\ln n$
the border $1/n$ has been reached. We note that phase~$i$ ends after
$1/\rho$ steps if all these these steps decrease the value; here we use
that each step decreases by a relative amount of $1-\rho$ and that
 $(1-\rho)^{1/\rho}\le e^{-1}$.

The probability of decreasing the $X_t$-value in a step of phase~$i$ is at least
$(1-\rho e^{-i})^2 \ge 1-2e^{-i}\rho$ even if the step is a b-step. Hence, the probability
of all steps of phase~$i$ being decreasing is at least
$(1-2e^{-i}\rho)^{1/\rho} \ge e^{-2e^{-i}}$. For all phases together, the probability
of only having decreasing steps is still at least
\[
\prod_{i=0}^{\ln n} e^{-2e^{-i}} \ge e^{-2\sum_{i=0}^\infty e^{-i}} = e^{-2/(1-e)}
\]
as suggested.
\end{proof}

We have now collected all tools to prove the lower bound for \twoant.
\begin{proofof}{Theorem~\ref{theo:lower-mmas}}
This follows mostly the same structure as the proof of Theorem~\ref{theo:lower-cga}. Every occurrence
of the update strength $1/K$ should be replaced by~$\rho$. The analysis of b-steps is the same.

There is a minor change in the analysis of rw-steps. The two applications of Lemma~\ref{lem:distribution-cga-self-loop} are replaced
with Lemma~\ref{lem:distribution-mmas-self-loop}, followed by
an additional application of Lemma~\ref{lem:mmas-less-than-rho}. The slightly different
constants in the statement of Lemma~\ref{lem:distribution-cga-self-loop}
do not affect the asymptotic bound $\Omega(n^{-\beta})$ obtained.
Neither does the additional application of Lemma~\ref{lem:mmas-less-than-rho}, which gives
a constant probability. We do not care about the time $O((\log n)/\rho)$ stated in Lemma~\ref{lem:mmas-less-than-rho}, since
we are only interested in a lower bound on the hitting time. Still, the assumptions on b-steps
 in Lemma~\ref{lem:distribution-mmas-self-loop}
differ slightly from the ones in Lemma~\ref{lem:distribution-cga-self-loop}. We have to verify these new assumptions.


Lemma~\ref{lem:distribution-mmas-self-loop} requires in its first statement
that the probability of a b-step is at most $\rho/(4\alpha)$. Recall that
such a step has probability $O(1/\sqrt{n})$. We argue that
 $\rho/(4\alpha) \ge  c/\sqrt{n}$ for any constant~$c>0$ if $\kappa$ is small enough. To see this,
we simply recall that $\alpha=\kappa \sqrt{n}\rho/(3s^2)$ by definition and $\card{s}=\Omega(1)$.

Finally, the second statement
of Lemma~\ref{lem:distribution-mmas-self-loop} restricts the number of b-steps until time
$\alpha(s/\rho)^2$ to at most $s/(2\alpha\rho)$. Reusing that $\rho=O(\alpha/(\kappa\sqrt{n}))$, this
holds by Chernoff bounds with high probability if $\kappa$ is a sufficiently small constant. Hence,
the application of the lemma is possible.
\end{proofof}

\section{Conclusions}
We have performed a runtime analysis of two probabilistic model-building GAs, namely
\cga and \twoant, on \onemax. The expected runtime of these algorithms was analyzed
in dependency of the so-called update strength $S=1/K$ and~$S=\rho$, respectively,
resulting
in the upper bound $O(\sqrt{n}/S)$ for  $S=O(1/\sqrt{n}\log n)$ and $\Omega(\sqrt{n}/S+n\log n)$. Hence,
$S\sim 1/\sqrt{n}\log n$ was identified as the choice for the update strength leading to asymptotically
smallest
expected runtime $\Theta(n\log n)$.

Our analyses of update strength reveal a general trade-off between the speed of learning and
genetic drift. High update strengths imply globally a fast adaptation of the probabilistic model
 but
impact the overall correctness of the model negatively, resulting
in increased risk of adapting to samples that are locally incorrect. We think that this
constitutes a universal limitation of the algorithms that extends to more general
classes of functions. As even on the simple \OneMax the update strength should not be bigger
than $1/(\sqrt{n}\log n)$, we propose this setting as a general rule of thumb.

Our analyses have developed a quite technical machinery for the analysis of genetic drift. These techniques
are not necessarily limited to \cga and \twoant on \OneMax.  We are optimistic
to be able to extend them other EDAs such as the UMDA
\cite{DangLehreGECCO15} and even classical GAs such as the simple GA \cite{OlivetoWittTCS15},
 where currently only
quite restricted lower bounds on the runtime are available.

\subsection*{Acknowledgements}
This research was initiated at Dagstuhl seminar 15211 ``Theory of Evolutionary Algorithms'' and also benefitted from Dagstuhl seminar 16011 ``Evolution and Computing''. The authors thank the organisers and participants of both seminars.
The research leading to these results has received funding from the European Union Seventh Framework Programme (FP7/2007-2013) under grant agreement no 618091 (SAGE) and from the Danish Research Council (DFF-FNU) under grant 4002-00542.
\bibliographystyle{abbrv}
\bibliography{eda-onemax}


\appendix
\section{General Tools}


\subsection{Generalized Variable Drift Theorem}

The following theorem is an easy generalization of~\cite[Theorem~1]{Rowe2013}.
\begin{theorem}[Generalized variable drift theorem]
\label{drift:johannsen-general}
Consider a stochastic process on $\N_0$. Suppose there is a monotonic increasing function $h: \R^+ \rightarrow \R^+$ such that the function $1/h(x)$ is integrable on $[1, m]$, and with
\[
	\Delta_k \geq h(k)
\]
for all $k \in \{1, \dots, m\}$. Then
the expected first hitting time of any state from $\{0, \dots, a-1\}$ for $a \in \N$ is at most
\[
	\frac{a}{h(a)} + \int_{a}^m \frac{1}{h(x)} \;\mathrm{d}x.
\]
\end{theorem}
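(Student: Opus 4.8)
The plan is to follow the classical variable drift argument via a Lyapunov (potential) transformation, reducing the statement to the additive drift theorem. First I would define a potential function $g\colon \N_0 \to \R^+_0$ by setting $g(k) = 0$ for $k \le a-1$ and
\[
g(k) = \frac{a}{h(a)} + \int_a^k \frac{1}{h(x)}\,\mathrm{d}x
\]
for $k \ge a$. This is well defined since $1/h$ is integrable on $[1,m]$, and it equals exactly the claimed bound when evaluated at $k=m$. Because $h$ is monotonically increasing, $1/h$ is monotonically decreasing, so $g$ is concave on $[a,\infty)$ and monotonically increasing on all of~$\N_0$.

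The central step is to show that from any non-target state $k \ge a$ the expected one-step decrease of the potential is at least~$1$, i.\,e.\ $\expect{g(X_t) - g(X_{t+1}) \mid X_t = k} \ge 1$. To this end I would establish the pointwise inequality $g(k) - g(k') \ge (k-k')/h(k)$ for every successor state~$k'$. For $k' \ge a$ this follows from concavity: the graph of $g$ lies below its tangent at~$k$, whose slope is $g'(k) = 1/h(k)$, so $g(k') \le g(k) + (k'-k)/h(k)$, which rearranges to the claim and covers both downward steps $k' \le k$ and upward jumps $k' > k$. For $0 \le k' < a$, where $g(k')=0$, I would use $g(k) \ge a/h(a) + (k-a)/h(k)$ (bounding the integrand $1/h(x) \ge 1/h(k)$ for $x \le k$) and verify the remaining inequality $a/h(a) \ge (a-k')/h(k)$ from $a-k' \le a$ together with $h(k) \ge h(a)$. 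Taking expectations and using linearity then gives $\expect{g(k) - g(X_{t+1}) \mid X_t = k} \ge \Delta_k/h(k) \ge 1$, where the final inequality is the hypothesis $\Delta_k \ge h(k)$.

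Finally, since $g \ge 0$ and $g$ vanishes exactly on the target set $\{0,\dots,a-1\}$ while the transformed one-step drift towards~$0$ is at least~$1$ on every non-target state, the additive drift theorem yields $\expect{T \mid X_0} \le g(X_0)$ for the first hitting time~$T$ of $\{0,\dots,a-1\}$. As $X_0 \le m$ and $g$ is monotonically increasing, $g(X_0) \le g(m) = a/h(a) + \int_a^m (1/h(x))\,\mathrm{d}x$, which is the desired bound.

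I expect the only genuine subtlety---and hence the main obstacle---to be the boundary handling at state~$a$: the additive term $a/h(a)$ in the definition of~$g$ must be large enough to absorb successor states landing \emph{anywhere} in the target set $\{0,\dots,a-1\}$, not merely at $a-1$, which is precisely what the inequality $a/h(a) \ge (a-k')/h(k)$ secures. A secondary technical point is justifying the applicability of the additive drift theorem (non-negativity of~$g$ and finiteness of the expected hitting time); this is routine since the pointwise drift is bounded below by the positive constant~$1$ on all non-target states.
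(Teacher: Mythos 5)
Your proof is correct. Note that the paper itself gives no proof of this theorem---it is stated in the appendix as ``an easy generalization'' of Theorem~1 in the cited work of Rowe and Sudholt---so there is no in-paper argument to compare against; your reduction to additive drift via the concave potential $g(k) = a/h(a) + \int_a^k (1/h(x))\,\mathrm{d}x$ is exactly the standard route, and you correctly identify and handle the one point where the generalization requires care, namely that a successor state may land anywhere in $\{0,\dots,a-1\}$, which is absorbed by the offset $a/h(a)$ together with $h(k)\ge h(a)$.
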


\subsection{Bounds on the cumulative distribution function of the standard normal distribution}
To prove Lemmas~\ref{lem:distribution-cga-self-loop} and~\ref{lem:distribution-mmas-self-loop}, we
need the following estimates for $\Phi(x)$. More precise formulas are available (and can
be found by searching for bounds on the so-called error function), but are not required for our analysis.

\begin{lemma}[\cite{Feller1}, p. 175]
\label{lem:bound-cdf-normal}
For any $x>0$
\[
\left(\frac{1}{x}-\frac{1}{x^3}\right)\frac{1}{\sqrt{2\pi}}e^{-x^2/2}\;\le 1-\Phi(x) \;\le
\frac{1}{x}\frac{1}{\sqrt{2\pi}}e^{-x^2/2},
\]
and for $x<0$
\[
\left(\frac{-1}{x}-\frac{-1}{x^3}\right)\frac{1}{\sqrt{2\pi}}e^{-x^2/2}\;\le \Phi(x) \;\le
\frac{-1}{x}\frac{1}{\sqrt{2\pi}}e^{-x^2/2}.
\]
\end{lemma}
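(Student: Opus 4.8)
The plan is to reduce everything to the classical tail integral $1-\Phi(x) = \frac{1}{\sqrt{2\pi}}\int_x^\infty e^{-t^2/2}\,\mathrm{d}t$ and to sandwich the integrand $e^{-t^2/2}$ between the derivatives of two explicitly chosen comparison functions. I would first treat the case $x>0$ and then recover $x<0$ from the symmetry $\Phi(x)=1-\Phi(-x)$: substituting $y:=-x>0$ maps the two displayed inequalities onto one another, since $e^{-y^2/2}=e^{-x^2/2}$, $1/y=-1/x$ and $1/y^3=-1/x^3$, so the signs come out exactly as stated and no separate argument is needed for the negative half-line.

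For the upper bound I would consider $u(t):=\tfrac{1}{t}e^{-t^2/2}$. A direct differentiation gives
\[
-u'(t)=\Bigl(1+\tfrac{1}{t^2}\Bigr)e^{-t^2/2}\ge e^{-t^2/2}\qquad(t>0).
\]
Integrating this inequality from $x$ to $\infty$ and using $u(t)\to 0$ as $t\to\infty$ yields $u(x)=\int_x^\infty -u'(t)\,\mathrm{d}t\ge \int_x^\infty e^{-t^2/2}\,\mathrm{d}t=\sqrt{2\pi}\,(1-\Phi(x))$, which is precisely the claimed upper bound after dividing by $\sqrt{2\pi}$.

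For the lower bound I would use the refined function $v(t):=\bigl(\tfrac{1}{t}-\tfrac{1}{t^3}\bigr)e^{-t^2/2}$. The same computation, in which the two $1/t^2$ terms cancel, gives
\[
-v'(t)=\Bigl(1-\tfrac{3}{t^4}\Bigr)e^{-t^2/2}\le e^{-t^2/2}\qquad(t>0),
\]
and integrating from $x$ to $\infty$ (again $v(t)\to 0$) produces $v(x)\le\sqrt{2\pi}\,(1-\Phi(x))$, i.e.\ the claimed lower bound.

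The only genuinely non-mechanical step is guessing the two comparison functions $u$ and $v$; once they are written down, everything reduces to verifying the two derivative identities and the elementary estimates $1+1/t^2\ge 1$ and $1-3/t^4\le 1$. I would therefore present the differentiations of $u$ and $v$ in full, since this is exactly where the cancellation of the $1/t^2$ terms makes the bounds match the stated constants, and then dispatch the $x<0$ statement purely through the symmetry substitution above rather than reproving it from scratch.
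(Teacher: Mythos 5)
Your proof is correct and is essentially the classical argument from the cited source (Feller, Vol.~I, p.~175), which the paper invokes without reproducing: both bounds follow by differentiating the comparison functions $\frac{1}{t}e^{-t^2/2}$ and $\bigl(\frac{1}{t}-\frac{1}{t^3}\bigr)e^{-t^2/2}$ and integrating over $[x,\infty)$, and your symmetry reduction for $x<0$ is exactly right. Nothing to add.
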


\end{document}